\def\eqref#1{equation~\ref{#1}}
\def\1{\bm{1}}
\DeclareMathAlphabet{\mathsfit}{\encodingdefault}{\sfdefault}{m}{sl}
\SetMathAlphabet{\mathsfit}{bold}{\encodingdefault}{\sfdefault}{bx}{n}
\newcommand{\pdata}{p_{\rm{data}}}
\newtheorem{theorem}{Theorem}[section]
\definecolor{colorB}{HTML}{00A14E}% \definecolor{colorA}{RGB}{135,206,250} % 天蓝色 (Light Sky Blue)
\definecolor{colorA}{RGB}{30,144,255}
\definecolor{colorC}{RGB}{255,99,71}   % 橙红色 (Tomato)
\title{Diffusion Models as Dataset Distillation Priors}
\author{%
  Duo Su$^{1}$,\quad Huyu Wu$^{2}$,\quad Huanran Chen$^{1}$,\quad Yiming Shi$^{3}$,\quad Yuzhu Wang$^{4}$,
  \vspace{1ex}
  \\
  \textbf{Xi Ye$^{1}$,\quad Jun Zhu$^{1}$ \thanks{Corresponding Author}} \vspace{1ex}
  \\
  $^1$Dept. of Comp. Sci. \& Tech., BNRist Center, THU-Bosch ML Center, Tsinghua University.\\
  $^2$Institute of Computing Technology, CAS.\\
  $^3$University of Electronic Science and Technology of China.
  \\$^4$South China University of Technology.\\
  \texttt{https://suduo94.github.io/Diffusion-As-Priors} \\
}
\begin{document}

\maketitle

\begin{abstract}

% suduo 加油！
% suduo 加油！
% suduo 加油！
% suduo 加油！
% suduo 加油！
% suduo 加油！
% suduo 加油！
% suduo 加油！
% suduo 加油！
% suduo 加油！
% suduo 加油！
% suduo 加油！
% suduo 加油！
% suduo 加油！

    Dataset distillation aims to synthesize compact yet informative datasets from large ones. A significant challenge in this field is achieving a trifecta of diversity, generalization, and representativeness in a single distilled dataset. Although recent generative dataset distillation methods adopt powerful diffusion models as their foundation models, the inherent representativeness prior in diffusion models is overlooked. Consequently, these approaches often necessitate the integration of external constraints to enhance data quality. To address this, we propose Diffusion As Priors (DAP), which formalizes representativeness by quantifying the similarity between synthetic and real data in feature space using a Mercer kernel. We then introduce this prior as guidance to steer the reverse diffusion process, enhancing the representativeness of distilled samples without any retraining. Extensive experiments on large-scale datasets, such as ImageNet-1K and its subsets, demonstrate that DAP outperforms state-of-the-art methods in generating high-fidelity datasets while achieving superior cross-architecture generalization. Our work not only establishes a theoretical connection between diffusion priors and the objectives of dataset distillation but also provides a practical, training-free framework for improving the quality of the distilled dataset.
\end{abstract}    
\section{Introduction}
\label{sec:intro}

Data undeniably functions as the ``primordial fuel" that drives modern AI systems.
This critical resource provides large models with foundational knowledge, spatiotemporal comprehension, visual awareness, and pattern recognition capabilities~\citep{brown2020language,qin2025scaling}.
Despite this, data faces depletion as exponentially scaling models rapidly consume finite human-generated data, persisting as a bottleneck in advancing next-generation large models~\citep{muennighoff2023scaling,villalobos2024will}. 
Current industry practices suffer dual burdens: insufficient data and expensive human annotation costs. 
% which directly throttle model innovation cycles and impede scaling laws.
Fortunately, synthetic data emerges as a renewable alternative capable of powering AI development at scale~\citep{jordon2022synthetic,liu2024best}.
While large models can generate samples in arbitrary categories and sizes, unfiltered synthetic data poses two critical risks: 
\textit{\textbf{1) Data Quality Limitations}} encompassing distribution drift and semantic mismatch~\citep{alaa2022faithful,yang2024dataset}.
\textit{\textbf{2) Training Hazards}}, where flawed data patterns propagate through error amplification, triggering failures like model collapse~\citep{shumailov2024ai,dohmatobmodel2024model}.
Therefore, generating high-quality synthetic data remains a challenging task.

Recent advances in dataset distillation (DD) offer a promising solution to the above challenges by generating highly compact datasets while preserving critical features often obscured in real-world data~\citep{wang2018dataset}. 
In parallel, diffusion models (DMs) have emerged as state-of-the-art generative methods due to their ability to accurately model the entire dataset distribution through score function estimation~\citep{song2021scorebased}. 
As a result, DMs have been adopted as foundation models for DD, giving rise to generative DD~\citep{gu2024efficient,su2024d4m}.
Leveraging priors acquired from well-trained DMs, distilled samples maintain diversity and fidelity, achieving competitive accuracy with up to $10\times\sim200\times$ reduction in training size~\citep{chen2025influence}. 
Although encouraging, a theoretical analysis remains underdeveloped, which raises the following questions about the diffusion priors in generative DD methods.
% By generating a much more compact dataset while mining critical features obscured in real-world datasets, recent advances in dataset distillation (DD) position it as a feasible solution to the above issues~\citep{wang2018dataset}.
% Meanwhile, diffusion models (DMs) are state-of-the-art generative methods since they model the entire dataset distribution through more accurate score function estimation. 
% Consequently, DMs have served as foundation models for DD, known as generative DD~\citep{gu2024efficient,su2024d4m}.
% With the help of the priors learned in the well-trained DMs, the distilled samples retain diversity and fidelity, enabling comparable accuracy with xxxxxx(eg, 50-70$\times$) reduced training time (See \cref{fig:}).
% Despite these encouraging results, a comprehensive theoretical understanding remains underdeveloped, prompting critical questions regarding the role of diffusion priors and the broader implications for DD methodologies.

\textbf{Do the priors in vanilla DMs satisfy the requirements for DD?}
To answer this, we align the desired properties of distilled datasets with the priors captured by DMs via the original score function.
From the perspective of log-likelihood estimation and evaluation metrics (e.g., FID, IS), we observe that the inherent diversity and generalization priors in vanilla DMs can yield higher-quality synthetic data.
Naturally, the main challenge shifts to enhancing the representativeness of synthetic data, which is still not embodied in vanilla sampling pipelines.
Previous approaches attempt to address this by imposing external representativeness constraints~\citep{chan2025mgd,chen2025influence}.
However, we argue that such constraints are unnecessary and introduce additional complexity.
Thus, we raise the next question.

\textbf{Are there unused priors in DMs that could benefit DD?}
Inspired by the diffusion classifiers~\citep{chen2024diffusion,chen2024robust}, we posit that the feature extraction capability inherent in a well-trained diffusion model itself constitutes a representativeness prior highly relevant to DD.
We hypothesize that high representativeness corresponds to high similarity between synthetic and original data in the representation space. 
To formalize this, we employ the Mercer kernel, a specific type of kernel function~\citep{zaanen1964linear}, to quantify the similarity within feature spaces. 
The Mercer kernel provides us with mathematical guarantees of convexity and tractability in optimization, ensuring that the representativeness prior is computationally feasible. 
Empirically, we define the representativeness score function as an energy function based on Mercer kernel, which allows us to inject the unused representativeness prior into the distilled data through guided sampling.
\begin{wrapfigure}{r}{0.4\textwidth}
    \centering
    \vspace{-1em}
    \includegraphics[width=0.8\linewidth]{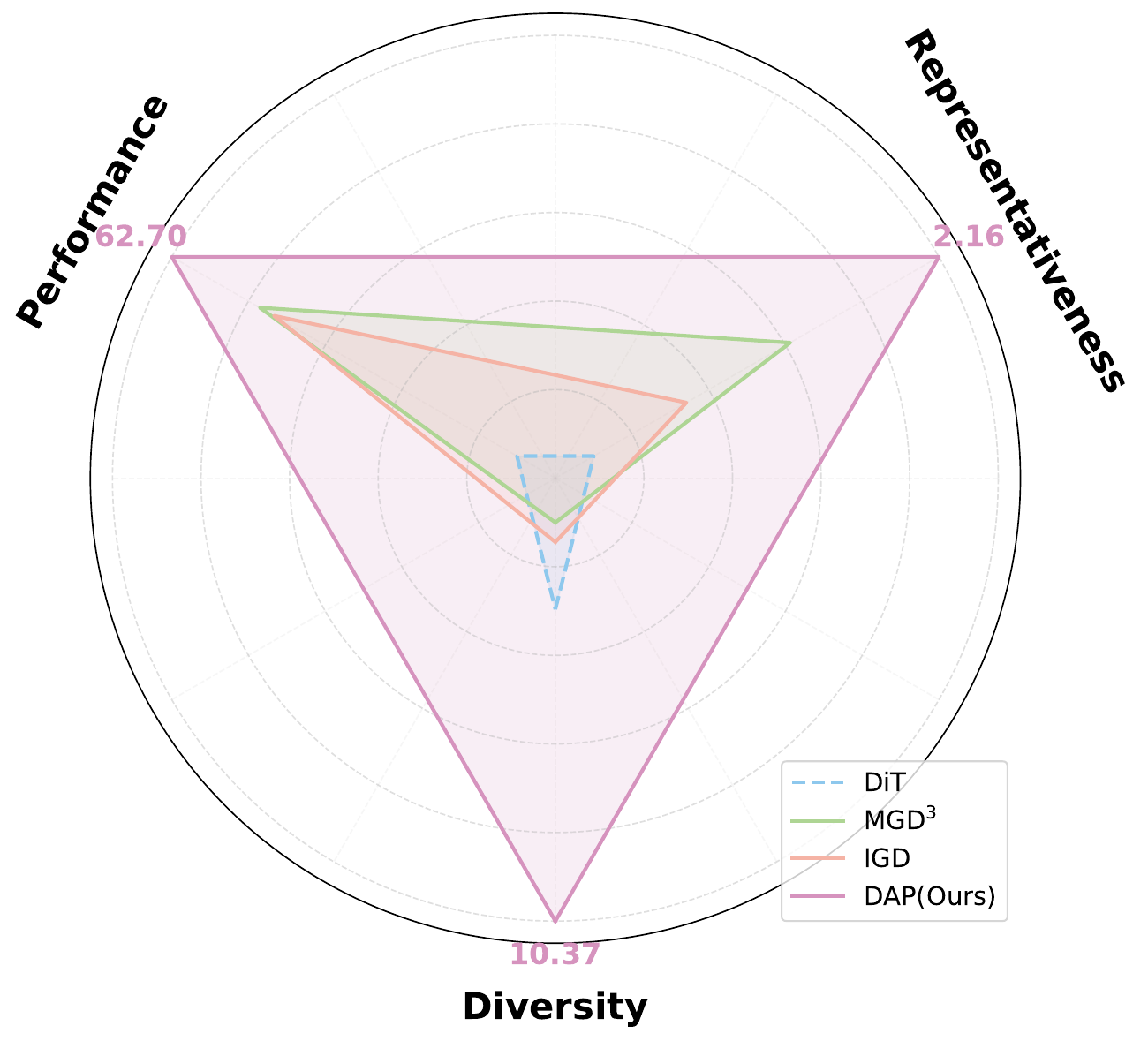}
    \caption{Our diffusion as priors (\textbf{DAP}) method is beneficial for the DD task. Diversity: $1+\text{FID}_{max}-\text{FID}$. Representativeness: $\frac{1}{d(\phi(x),\phi(y))}$. Performance: classification results on ImageNet-1K.}
    \vspace{-0.5em}
    \label{fig:main}
\end{wrapfigure}
We propose \textbf{D}iffusion \textbf{A}s \textbf{P}riors (\textbf{DAP}) and apply it to datasets of varying scales, including large-scale ImageNet-1K~\citep{deng2009imagenet} and its small subsets.
Both quantitative and qualitative results show that DAP significantly enhances the quality of
distilled datasets. 
It validates the theoretical connections between diffusion priors and DD task, while achieving competitive performance compared to other methods (see \cref{fig:main}, each dimension is normalized independently for clear visualization).
We further show that by introducing the desired priors, the distilled datasets have the same generalization and transferability as the original ones.
Our contributions can be summarized as follows:
1. We prove the priors in the well-trained DMs meet the diversity and generalization requirements of DD.
2. We derive the overlooked representativeness prior from DMs and formalize it into a kernel-induced distance, which guides the sampling dynamic and improves the quality of distilled datasets.

\section{preliminaries}
\label{sec:pre}
\subsection{Dataset Distillation}
Given a labeled training dataset $\mathcal{T}_{train} = \{\boldsymbol{x}, \boldsymbol{y}\}\subseteq {\mathbb{R}^N\times\mathcal{Y}}$ where $\boldsymbol{x} \in \mathbb{R}^N$ i.i.d. drawn from $\pdata$, and $\boldsymbol{y} \in \mathcal{Y}= \{1, \dots, C\}$ drawn from the label space.
The objective of DD is to synthesize a compact dataset $\mathcal{S}_{syn} = \{\boldsymbol{x}^{syn}, \boldsymbol{y}\}\subseteq\mathbb{R}^M\times\mathcal{Y}$ ($M \ll N$) that encapsulates the knowledge of the original data. 
Consequently, the model trained with small $\mathcal{S}_{syn}$ can achieve considerable generalization performance (measured by loss $\mathcal{L}$) to the large training dataset $\mathcal{T}_{train}$:
\begin{equation}
\mathbb{E}_{\boldsymbol{x}, \boldsymbol{y},\theta^{(0)}}\left[\mathcal{L}\left(f_{\operatorname{alg}(\mathcal{\mathcal{T}_\textit{train}},\theta^{(0)})}(\boldsymbol{x}), y\right)\right] \simeq \mathbb{E}_{\boldsymbol{x}, \boldsymbol{y},\theta^{(0)}} \left[\mathcal{L}\left(f_{\operatorname{alg}(\mathcal{\mathcal{S}_\textit{syn}},\theta^{(0)})}(\boldsymbol{x}), y\right)\right].
\end{equation}
The algorithm $\operatorname{alg}(\cdot,\theta^{(0)})$ is determined by training set $\mathcal{T}$ or $\mathcal{S}$ and the initialized parameters $\theta^{(0)}$.

\subsection{Diffusion Models}
Given a dataset $\boldsymbol{x}_{0}\in \mathbb{R}^N$ i.i.d. drawn from an unknown distribution $q_0(\boldsymbol{x}_0)$, a diffusion model parameterized by $\theta$ tries to learn a distribution $p_\theta(\boldsymbol{x}_0)$ that approximates $q_0(\boldsymbol{x}_0)$. 
Specifically, the diffusion model places a reversible process that gradually adds Gaussian noise from $\boldsymbol{x}_0$ to $\boldsymbol{x}_T$ at time $T>0$ and then maps them back. 
The forward diffusion process is defined by the Itô Stochastic Differential Equation (SDE)~\citet{song2021scorebased}:
\begin{equation}
    \mathrm{d} \boldsymbol{x}_t={f}\left(\boldsymbol{x}_t, t\right) \mathrm{d} t+g(t) \mathrm{d} \boldsymbol{w}_t,
\end{equation}
where ${f}\left(\boldsymbol{x}_t, t\right)=-\frac{1}{2}\beta_t\boldsymbol{x}_t$ is the drift term and $g(t)=\sqrt{\beta_t}$ denotes the diffusion coefficient that controls the noise strength at each timestep.
$\beta_t\in(0,1)$ is a sequence of pre-defined time-dependent noise scales. 
Meanwhile, $\boldsymbol{w}_t$ is the Brownian motion.
% The transition distribution $q_{t0}(\boldsymbol{x}_t|\boldsymbol{x}_0)$ typically satisfies
% \begin{equation}
%     q_{t0}(\boldsymbol{x}_t|\boldsymbol{x}_0)=\mathcal{N}(\boldsymbol{x}_t|\alpha_t \boldsymbol{x}_0,\sigma_t^2\boldsymbol{I}),
% \end{equation}
% where $\alpha_t, \sigma_t >0$ are time-dependent parameters determined by a predefined noise scheduler. 
% Denote $q_t(\boldsymbol{x}_t)$ as the marginal distribution of $\boldsymbol{x}_t$ at time $t$, then the transition distribution at time $T$ satisfies
% \begin{equation}
%     q_T(\boldsymbol{x}_T|\boldsymbol{x}_0) \approx q_T(\boldsymbol{x}_T) \approx \mathcal{N}(\boldsymbol{x}_T|0,\tilde{\sigma}^2\boldsymbol{I}),
% \end{equation}
% where $\tilde{\sigma} >0$. 
% Since the marginal distribution $q_T(\boldsymbol{x}_T)$ is independent of $\boldsymbol{x}_0$, diffusion models try to reconstruct the original samples $\boldsymbol{x}_0$ from $\mathcal{N}(\boldsymbol{x}_T|0,\tilde{\sigma}^2\boldsymbol{I})$.
And the reverse diffusion process is given by the time-reverse SDE:
\begin{equation}
\mathrm{d} \boldsymbol{x}=\left[{f}(\boldsymbol{x}_t, t)-g(t)^2 \nabla_{\boldsymbol{x}_t} \log p_t(\boldsymbol{x}_t)\right] \mathrm{d} t+g(t) \mathrm{d} \bar{\boldsymbol{w}},
\label{eq:trsde}
\end{equation}
% The final distribution $\boldsymbol{x}_T \sim \mathcal{N}(0,\tilde{\sigma}^2I)$ is essentially independent of the initial $\boldsymbol{x}_0$.
% Starting from a sample drawn from the prior distribution $\boldsymbol{x}_T \sim \mathcal{N}(0,\tilde{\sigma}^2I)$, the model aims to reconstruct a sample $\boldsymbol{x}_0$ from the original data distribution $q_0(\boldsymbol{x}_0)$ by traversing backward in the time from $T$ to $0$. This reverse process can be described by the following diffusion Ordinary Differential Equation(ODE) \cite{song2021scorebased}
% \begin{equation}
%     \frac{d\boldsymbol{x}_t}{dt}=f(t)\boldsymbol{x}_t-\frac{1}{2}g^2(t)\nabla_{\boldsymbol{x}_t}\log p_t(\boldsymbol{x}_t).
%     \label{eq:pfode}
% \end{equation}
where $\bar{\boldsymbol{w}}$ represents the time-reversed Brownian motion.
The only unknown term in \cref{eq:trsde} is the \textit{score function} $\nabla_{\boldsymbol{x}_t}\log p_t(\cdot)$ of distribution $p_t$ at each time $t$ (we use $p$ for simplicity).
% , which approximates the intractable reverse process $q(\boldsymbol{x}_{t-1}|\boldsymbol{x}_t)$. 
A neural network $\boldsymbol{\epsilon}_\theta(\boldsymbol{x}_t,t)$ is trained to estimate the \textit{score function} $- \nabla_{\boldsymbol{x}_t} \log p (\boldsymbol{x}_t)$.
Finally, we can sample $\boldsymbol{x}_0$ by solving the reverse diffusion SDE~\citep{lu2022dpm}.

\subsection{Inherent Priors in Diffusion Models}
\label{sec:ipdm}
A key question in evaluating generative models is whether they capture the full variability of the dataset~\citep{alaa2022faithful}.
DMs inherently encode diversity and generalization priors through estimating $\nabla_{\boldsymbol{x}}\log p(\boldsymbol{x})$, which compels the model to capture global manifold geometry rather than memorizing individual samples, thereby avoiding mode collapse~\citep{thanh2020catastrophic}. 
Moreover, the stochastic perturbations in the forward process act as implicit regularizers, enforcing Lipschitz continuity and improving robustness to distributional shifts~\citep{chen2024diffusion,chen2024robust}. 

In addition, let $H$ denote entropy, and $\varphi$ is an inception classifier.
High Inception Score (IS) indicates uniform class coverage (high $H(p_\varphi(y))$) and discriminative sample quality (low $H(p_\varphi(y|\boldsymbol{x}))$). 
While low Fréchet Inception Distance (FID) certifies alignment between generated and real distributions ($p_{syn}\simeq\pdata$).
Empirical results~\citep{dhariwal2021diffusion} demonstrate that the structure-induced priors within DMs produce sufficient diversity and generalization.

\section{Diffusion As Priors}
\label{sec:met}

\subsection{Motivation}
\label{sec:mtvt}
% According to \citet{gu2024efficient,su2024d4m,chen2025influence,chan2025mgd}, we argue that:
An ideal distilled dataset should satisfy~\citep{gu2024efficient,su2024d4m}:
$$
    \text{Distilled} ~~ \text{Dataset} ~~ \text{s.t.} ~~ \textcolor{colorA}{\text{Diversity}} ~~ + ~~ \textcolor{colorB}{\text{Generalization}} ~~ + ~~ \textcolor{colorC}{\text{Representativeness}}.
$$
These attributes enable the distilled dataset to be effectively applied across a variety of tasks, yielding competitive performance. 
\textbf{\textit{Diversity}} ensures that synthetic data captures the full variability present in the original data, while \textbf{\textit{Generalization}} prevents overfitting to the architecture of distillation models. 
Most importantly, \textbf{\textit{Representativeness}} guarantees that the synthetic data retains the most critical information from the raw dataset.
Consequently, we seek to study: \textit{how to align the priors of DMs with these attributes and make the distilled dataset desirable?}

Formally, the objective of DMs that estimates the score function $\nabla_{\boldsymbol{x}}\log p(\boldsymbol{x})$ provides synthetic dataset with inherent diversity and generalization priors (discussed in \cref{sec:dadgp}).
In terms of the representativeness prior $\textcolor{colorC}{\mathcal{R}}$, we consider introducing it into the score function as a condition.
% \begin{equation}
%     \nabla_{\boldsymbol{x}_t}\log p_t(\boldsymbol{x}_t|\mathcal{R}).
% \end{equation}
According to Bayes' theorem, the conditional score function can be decomposed as:
\begin{equation}
    \nabla_{\boldsymbol{x}}\log p(\boldsymbol{x}|\textcolor{colorC}{\mathcal{R}}) =\underbrace{\nabla_{\boldsymbol{x}}\log p(\boldsymbol{x})}_{\text{\textcolor{colorA}{Diversity} \& \textcolor{colorB}{Generalization}}} + \underbrace{\nabla_{\boldsymbol{x}}\log p(\textcolor{colorC}{\mathcal{R}}|\boldsymbol{x})}_{\text{\textcolor{colorC}{Representativeness}}}.
    \label{eq:cond_score}
\end{equation}
Given a well-trained diffusion model, the first term in \cref{eq:cond_score}, same as the original score function, is already estimated by $\boldsymbol{\epsilon}_\theta$.
Thus, we focus on the second term to fulfill the representativeness requirement during sampling (discussed in \cref{sec:darp}).

\subsection{Diffusion As Diversity and Generalization Priors}
\label{sec:dadgp}
In the field of DD, diversity is characterized by the breadth of feature distribution and comprehensive coverage of categorical information.
Meanwhile, generalization refers to the ability to prevent overfitting to the training data and enable datasets with cross-architecture adaptation.
These properties enable the distilled dataset to reflect the information and knowledge of the original dataset like a mirror.
In this section, we argue that \textit{the pre-trained diffusion model provides inherent \textcolor{colorA}{diversity} and \textcolor{colorB}{generalization} priors for dataset distillation.}

\subsubsection{Inherent Diversity and Generalization Priors}
% \begin{wraptable}{r}{0.46\textwidth}
% \vspace{-1em}
% \caption{NLLs $\downarrow$ on different datasets.}
% \vspace{-.1in}
% \begin{tabular}{ccc}
% \toprule
% Dataset & Training Set & Test Set \\
% \midrule
% CIFAR-10 & $3.0986_{\pm0.71}$ & $3.1092_{\pm0.72}$ \\
% ImageNette & $2.4452_{\pm1.03}$ & $2.6327_{\pm1.08}$  \\
% ImageWoof & $2.5856_{\pm0.88}$ & $3.0838_{\pm0.86}$  \\
% \bottomrule
% \end{tabular}%
% \label{tab:nll}
% \end{wraptable}
% A key question in evaluating generative models is whether they capture the full variability of the dataset~\citep{alaa2022faithful}.
% We believe that DMs inherently encode diversity and generalization priors through estimating $\nabla_{\boldsymbol{x}}\log p(\boldsymbol{x})$, which aligns with the requirements of DD.
% This formulation compels the model to capture global manifold geometry rather than memorizing individual samples, thereby avoiding mode collapse~\citep{thanh2020catastrophic}. 
% Moreover, the stochastic perturbations in the forward process act as implicit regularizers, enforcing Lipschitz continuity and improving robustness to distributional shifts~\citep{chen2024diffusion,chen2024robust}. 
\begin{wraptable}{r}{0.46\textwidth}
\vspace{-1em}
\caption{NLLs $\downarrow$ on different datasets. The results are computed by a vanilla diffusion model~\citep{ho2020denoising} trained on ImageNet.}
\vspace{-.1in}
\begin{tabular}{ccc}
\toprule
Dataset & Training Set & Test Set \\
\midrule
% CIFAR-10 & $3.0986_{\pm0.71}$ & $3.1092_{\pm0.72}$ \\
ImageNette & $2.4452_{\pm1.03}$ & $2.6327_{\pm1.08}$  \\
ImageWoof & $2.5856_{\pm0.88}$ & $3.0838_{\pm0.86}$  \\
\bottomrule
\end{tabular}%
\label{tab:nll}
\vspace{-1em}
\end{wraptable}
As mentioned in \cref{sec:ipdm}, diffusion models provide a principled foundation for DD, since effective DD requires distilled data that both cover diverse modes (diversity) and faithfully approximate the original dataset distribution (generalization).
We quantify these properties with likelihood-based evaluations. 
The negative log-likelihood (NLL) is defined as $\mathcal{L}_{\mathrm{NLL}} = -\mathbb{E}_{\boldsymbol{x}\sim\pdata}[\log p_\theta(\boldsymbol{x})]$. 
Identical and low NLL values on training and testing sets indicate that $p_\theta(\boldsymbol{x})$ converges to $\pdata$ instead of overfitting (see \cref{tab:nll}). 
\subsubsection{Beyond Prior: Cross-Architecture Generalization}
Unlike conventional DD methods that match training dynamics (e.g., Gradients, Parameters, and features) of specific downstream classifiers, DMs distill datasets without pixel-level optimization.
The distilled dataset captures data-relevant rather than architecture-relevant knowledge, eliminating dependence on predefined classifier architectures. 
% This architecture-agnostic paradigm provides distilled datasets with cross-architecture generalization and transferable knowledge representation like real data.
This architecture-agnostic DD paradigm produces distilled datasets with cross-architecture generalization, enhancing their versatility.

\subsection{Diffusion As Representativeness Prior}
\label{sec:darp}

Representative samples refer to a subset of data that accurately reflects the characteristics of the larger population from which it is drawn~\citep {gabbay2011philosophy}. 
Generating a more representative dataset leads to better dataset distillation performance.
In this section, we argue that \textit{a well-trained diffusion model itself can serve as a \textcolor{colorC}{representativeness} prior.}

\subsubsection{representativeness prior in DMs}
\label{sec:identify}
To capture the \textit{representativeness} prior hidden in the DMs backbone network, we require a similarity measure that quantifies how closely a synthetic sample reflects the characteristics of the real sample.
Kernel function is a simple yet effective tool for defining similarity, allowing us to a) express \textit{representativeness} through an induced distance and b) inject this \textit{representativeness} as a differentiable energy term into the sampling process.
Formally, let kernel function $\mathcal{K}(x,y): \mathcal{X} \times \mathcal{X} \to \mathbb{R}$ be the smooth and differentiable similarity measurement
% define the similarity as a kernel function $\mathcal{K}(\cdot, \cdot)\to \mathbb{R}$ 
which characterizes the similarity between a synthetic sample ${x}^{syn}$ and a single training sample ${x}^{train}$.
% The extent that the synthetic example \(x_{syn}\) refect the characteristic of a single training example \(x_{train}\) can be defined as a similarity function $\mathcal{K}(\boldsymbol{x}_{syn}, \boldsymbol{x}_{real})$. 
We argue that the larger the similarity between the synthetic samples and the entire training set $\mathbb{E}_{\boldsymbol{x}^{train}} [\mathcal{K}(\boldsymbol{x}^{syn}, \boldsymbol{x}^{train})]$, the better \textit{representativeness} of $\boldsymbol{x}^{syn}$ to the raw dataset.
% Most typical similarity measurements $\mathcal{K}(x, y)$ can be decomposed as $\mathcal{K}_{simple}(f(x), f(y))$, where $\mathcal{K}_{simple}$ represents a simple similarity measurement with the help of $f$ that maps the images into a high-dimensional feature space. 
% This is because the distance measurement $\mathcal{D}(x, y)$ induced by this similarity measurement:
Suppose that $\mathcal{D}_{\mathcal{K}}(x, y)$ is a distance measure induced by the kernel function $\mathcal{K}$.
% Let distance measurement $\mathcal{D}(x, y)$ induced by this similarity function $\mathcal{K}(x, y)$ be:  
% \begin{equation}
%     \mathcal{D}_{\mathcal{K}}(x, y)=[\mathcal{K}(x,x)+\mathcal{K}(y,y)-2\mathcal{K}(x,y)]^{1/2}.
% \end{equation}
Typically, we expect $\mathcal{D}_{\mathcal{K}}$ to satisfy the fundamental properties of the distance measures. 
The following theorem demonstrates that, as long as the kernel function $\mathcal{K}$ is positive semi-definite (PSD), the induced distance $\mathcal{D}_{\mathcal{K}}$ is a well-defined distance measure.

% Typically, we expect $\mathcal{D}_{\mathcal{K}}$ to satisfy the fundamental properties of distance measures.
% % a) Non-negativity: $\mathcal{D}(x,y)\geq0$, when and only when $x=y$, $\mathcal{D}(x,y)=0$.
% % b) Symmetry: $\mathcal{D}(x, y)=\mathcal{D}(y,x)$.
% % c) Triangle inequality: $\mathcal{D}(x,z) + \mathcal{D}(z,y) \geq \mathcal{D}(x,y)$. 
% The following theorem shows that as long as the kernel function $\mathcal{K}$ satisfies positive semi-definite (PSD), the induced distance $\mathcal{D}_{\mathcal{K}}$ will satisfy these three properties.

\begin{theorem}
Let $\mathcal{K}: \mathcal{X} \times \mathcal{X} \to \mathbb{R}$ be a PSD kernel. 
Then the $\mathcal{K}$-induced distance measure
\begin{equation}
\label{eq:distance}
    \mathcal{D}_{\mathcal{K}}(x, y)=\big[\mathcal{K}(x,x)+\mathcal{K}(y,y)-2\mathcal{K}(x,y)\big]^{1/2}
\end{equation}
satisfies:
\begin{enumerate}
    \item \textbf{Non-negativity:} $\mathcal{D}_{\mathcal{K}}(x,y)\geq0$, and $\mathcal{D}_{\mathcal{K}}(x,y)=0$ if and only if $x=y$.
    \item \textbf{Symmetry:} $\mathcal{D}_{\mathcal{K}}(x, y)=\mathcal{D}_{\mathcal{K}}(y,x)$.
    \item \textbf{Triangle inequality:} For any $x,y,z\in \mathcal{X}$,  
    $\mathcal{D}_{\mathcal{K}}(x,z)+\mathcal{D}_{\mathcal{K}}(z,y)\geq \mathcal{D}_{\mathcal{K}}(x,y)$.
\end{enumerate}
\end{theorem}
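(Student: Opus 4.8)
The plan is to exploit the defining structural property of a PSD (Mercer) kernel: it admits a \emph{feature map} $\phi:\mathcal{X}\to\mathcal{H}$ into a Hilbert space with $\mathcal{K}(x,y)=\langle\phi(x),\phi(y)\rangle_{\mathcal{H}}$ (Mercer / Moore--Aronszajn construction). This single fact collapses all three claims into elementary statements about a Hilbert-space norm, so I would invoke it at the very start and reduce everything to geometry in $\mathcal{H}$.

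The first concrete step is to rewrite the definition in \cref{eq:distance} using this representation. Substituting $\mathcal{K}(x,x)=\langle\phi(x),\phi(x)\rangle$ and similarly for the remaining terms gives
\begin{equation}
\mathcal{D}_{\mathcal{K}}(x,y)^2=\langle\phi(x),\phi(x)\rangle+\langle\phi(y),\phi(y)\rangle-2\langle\phi(x),\phi(y)\rangle=\big\lVert\phi(x)-\phi(y)\big\rVert_{\mathcal{H}}^2 .
\end{equation}
Thus $\mathcal{D}_{\mathcal{K}}(x,y)=\lVert\phi(x)-\phi(y)\rVert_{\mathcal{H}}$: the kernel-induced distance is simply the ordinary distance between feature images, and every remaining property is inherited from the Hilbert norm.

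With this identity in hand the three properties fall out quickly. \textbf{Non-negativity} and well-definedness of the square root hold because the argument is a squared norm, hence real and $\geq 0$; alternatively one applies the PSD condition directly to the $2\times 2$ Gram matrix of $\{x,y\}$ tested against the vector $(1,-1)^\top$, which yields $\mathcal{K}(x,x)+\mathcal{K}(y,y)-2\mathcal{K}(x,y)\geq 0$ and also records the symmetry $\mathcal{K}(x,y)=\mathcal{K}(y,x)$ used throughout. \textbf{Symmetry} of $\mathcal{D}_{\mathcal{K}}$ is then immediate, either from the symmetry of $\mathcal{K}$ in the defining formula or from $\lVert\phi(x)-\phi(y)\rVert=\lVert\phi(y)-\phi(x)\rVert$. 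For the \textbf{triangle inequality} I would apply the triangle inequality of $\lVert\cdot\rVert_{\mathcal{H}}$ to the vectors $\phi(x),\phi(z),\phi(y)$, giving $\lVert\phi(x)-\phi(y)\rVert\leq\lVert\phi(x)-\phi(z)\rVert+\lVert\phi(z)-\phi(y)\rVert$, which is exactly $\mathcal{D}_{\mathcal{K}}(x,z)+\mathcal{D}_{\mathcal{K}}(z,y)\geq\mathcal{D}_{\mathcal{K}}(x,y)$.

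The step I expect to be the genuine obstacle is the ``only if'' half of non-negativity, namely $\mathcal{D}_{\mathcal{K}}(x,y)=0\Rightarrow x=y$. The identity above only delivers $\phi(x)=\phi(y)$, and promoting this to $x=y$ requires the feature map $\phi$ to be injective, equivalently the kernel to be \emph{strictly} positive definite rather than merely PSD. A degenerate PSD kernel (for instance the constant kernel $\mathcal{K}\equiv 1$) makes $\mathcal{D}_{\mathcal{K}}$ vanish identically, so the statement as written needs either a strengthening of the hypothesis to strict positive definiteness, or a reinterpretation of equality ``up to'' the equivalence induced by $\phi$. I would therefore either add the strict-PD assumption explicitly or note that $\mathcal{D}_{\mathcal{K}}$ is in general a pseudometric that becomes a genuine metric precisely when $\phi$ separates points.
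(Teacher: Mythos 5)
Your proposal follows essentially the same route as the paper's own proof in its appendix: invoke Mercer's theorem to obtain a feature map $\phi:\mathcal{X}\to\mathcal{H}$ with $\mathcal{K}(x,y)=\langle\phi(x),\phi(y)\rangle_{\mathcal{H}}$, rewrite $\mathcal{D}_{\mathcal{K}}(x,y)=\lVert\phi(x)-\phi(y)\rVert_{\mathcal{H}}$, and inherit all three properties from the Hilbert norm.

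One point worth keeping from your write-up: your caveat about the ``only if'' direction is correct, and it exposes a gap in the paper's own argument rather than in yours. The paper's proof asserts that $\lVert\phi(x)-\phi(y)\rVert_{\mathcal{H}}=0$ gives $\phi(x)=\phi(y)$, ``which implies $x=y$,'' but that last implication requires $\phi$ to be injective, i.e.\ the kernel to be strictly positive definite (your constant-kernel example $\mathcal{K}\equiv 1$, for which $\mathcal{D}_{\mathcal{K}}\equiv 0$, refutes the claim for merely PSD kernels). As stated, the theorem only yields a pseudometric; it becomes a genuine metric under strict positive definiteness or when $\phi$ separates points, which does hold for the linear kernel $\mathcal{K}(x,y)=x^{\top}y$ actually used in the paper's experiments, since there $\phi$ is the identity. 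Your suggestion to either strengthen the hypothesis or reinterpret equality modulo the equivalence induced by $\phi$ is the right fix.
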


\begin{proof}
(Sketch, details in \cref{apd:validity}) According to Mercer’s theorem~\citep{mercer1909xvi}, the distance metric induced by the PSD kernel can be expressed as the Hilbert norm in reproducing kernel Hilbert space (RKHS), which satisfies the property of norms.
\end{proof}
Therefore, $\mathcal{D}_{\mathcal{K}}$ is a valid distance metric.
The Mercer kernel $\mathcal{K}_{\mathcal{M}}$ is a family of PSD kernels that guarantees the existence of a spectral expansion under continuity and compact conditions.
Thanks to these desirable properties, we adopt Mercer kernel as the representativeness measure in our method.

\begin{theorem}
\label{thm:facto}
Let $\mathcal{K}: \mathcal{X}\times\mathcal{X}\to\mathbb{R}$ be a Mercer kernel, then the $\mathcal{K}$-induced distance $\mathcal{D}_\mathcal{K}$ can be factorized as $\mathcal{D}_\mathcal{K}(x,y) = d\circ(\phi\times \phi)(x,y)$, where $\phi$ is a feature mapping and $d$ is a simple norm in Hilbert space.
\end{theorem}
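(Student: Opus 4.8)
The plan is to convert the Mercer structure of $\mathcal{K}$ into an explicit feature map and then recognise $\mathcal{D}_\mathcal{K}$ as an ordinary Hilbert-space distance. Since $\mathcal{K}$ is a Mercer kernel — continuous, symmetric and PSD on a compact domain $\mathcal{X}$ — Mercer's theorem supplies a spectral expansion $\mathcal{K}(x,y)=\sum_{i\geq 1}\lambda_i\,\psi_i(x)\psi_i(y)$ with eigenvalues $\lambda_i\geq 0$ and orthonormal eigenfunctions $\psi_i$, the series converging absolutely and uniformly. I would take this as the starting point, since the previous theorem already records that a PSD kernel induces a Hilbert-norm representation; here the Mercer hypothesis lets me write that representation down \emph{concretely} rather than abstractly.

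First I would define the feature mapping $\phi:\mathcal{X}\to\ell^2$ by $\phi(x)=\big(\sqrt{\lambda_i}\,\psi_i(x)\big)_{i\geq 1}$ and check that it lands in the Hilbert space $\mathcal{H}=\ell^2$: the identity $\sum_{i\geq 1}\lambda_i\,\psi_i(x)^2=\mathcal{K}(x,x)$ is finite for every $x$ by Mercer's theorem, so $\phi(x)$ is square-summable. Next I would verify the reproducing identity $\langle\phi(x),\phi(y)\rangle_{\mathcal{H}}=\sum_{i\geq 1}\lambda_i\,\psi_i(x)\psi_i(y)=\mathcal{K}(x,y)$, which is exactly the statement that $\mathcal{K}$ is the Gram kernel of $\phi$.

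With the inner-product representation in hand, the factorization is a completion of the square. Substituting $\mathcal{K}(x,x)=\|\phi(x)\|_{\mathcal{H}}^2$, $\mathcal{K}(y,y)=\|\phi(y)\|_{\mathcal{H}}^2$ and $\mathcal{K}(x,y)=\langle\phi(x),\phi(y)\rangle_{\mathcal{H}}$ into the defining formula gives
\begin{equation*}
\mathcal{D}_\mathcal{K}(x,y)^2=\|\phi(x)\|_{\mathcal{H}}^2+\|\phi(y)\|_{\mathcal{H}}^2-2\langle\phi(x),\phi(y)\rangle_{\mathcal{H}}=\|\phi(x)-\phi(y)\|_{\mathcal{H}}^2 .
\end{equation*}
Taking square roots yields $\mathcal{D}_\mathcal{K}(x,y)=\|\phi(x)-\phi(y)\|_{\mathcal{H}}$. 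Setting $d(u,v)=\|u-v\|_{\mathcal{H}}$ and letting $\phi\times\phi$ denote the pairing $(x,y)\mapsto(\phi(x),\phi(y))$, this is precisely $\mathcal{D}_\mathcal{K}=d\circ(\phi\times\phi)$, the desired factorization into a feature map followed by the canonical norm-metric on $\mathcal{H}$.

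I expect the only real obstacle to be the well-definedness of $\phi$ and the convergence bookkeeping: one must confirm that the Mercer series converges in the pointwise and $\ell^2$ senses needed to justify identifying the summed kernel with the inner product, which is exactly where the continuity and compactness hypotheses (as opposed to mere positive semi-definiteness) are used. Everything after the inner-product identity is elementary algebra, so the weight of the argument rests on invoking Mercer's theorem cleanly; building on the RKHS representation established in the previous theorem makes even this step largely a matter of specialising the abstract norm to the explicit $\ell^2$ coordinates.
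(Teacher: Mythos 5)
Your proof is correct, and its core algebra is the same as the paper's: represent $\mathcal{K}(x,y)$ as an inner product of features, then complete the square to get $\mathcal{D}_\mathcal{K}(x,y)=\|\phi(x)-\phi(y)\|_{\mathcal{H}}$ and read off the factorization $d\circ(\phi\times\phi)$. Where you genuinely diverge is in how the feature map is produced. The paper (in \cref{apd:factoring}) simply invokes the reproducing property of the RKHS $\mathcal{H}_\mathcal{K}$ to write $\mathcal{K}(x,y)=\langle\Phi(x),\Phi(y)\rangle_{\mathcal{H}_\mathcal{K}}$ with the abstract canonical embedding $\Phi(x)=\mathcal{K}(x,\cdot)$; this needs only positive semi-definiteness (Moore--Aronszajn), so the Mercer hypotheses of continuity and compactness are never actually used. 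You instead construct the feature map explicitly from the Mercer spectral expansion, $\phi(x)=\big(\sqrt{\lambda_i}\,\psi_i(x)\big)_{i\ge 1}\in\ell^2$, verifying square-summability via $\sum_i\lambda_i\psi_i(x)^2=\mathcal{K}(x,x)$ and the Gram identity $\langle\phi(x),\phi(y)\rangle_{\ell^2}=\mathcal{K}(x,y)$. Your route is the one that truly exploits the ``Mercer'' assumption: it buys a concrete coordinate realization of the feature space (useful if one wants to compute or truncate the embedding), at the cost of requiring the topological hypotheses and the convergence bookkeeping you flag. The paper's route buys brevity and strictly greater generality --- the factorization holds for any PSD kernel, with no continuity or compactness needed --- but leaves the feature map abstract. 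Both are complete proofs; yours could be tightened by noting that once the inner-product identity holds, the compactness/continuity conditions play no further role.
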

\begin{proof}
(Sketch, details in \cref{apd:factoring}) According to the reproducing property of the kernel function, there exists a mapping $\Phi$ and a feature space $\mathcal{H}$ that allows the kernel $\mathcal{K}$ to be factorized into $\mathcal{K} = \langle \Phi(\cdot), \Phi(\cdot)\rangle_{\mathcal{H}_{\mathcal{K}}}$. 
The distance formalized by the linear combination of kernel functions can then be factorized into a combination of the complex $\Phi$ and a simple norm $\|\cdot\|_{\mathcal{H}_{\mathcal{K}}}$.
\end{proof}

% \begin{proof}
% By the reproducing property of the reproducing kernel Hilbert space $\mathcal{H}_\mathcal{K}$, we have
% \[
% \mathcal{K}(x,y) = \langle \Phi(x), \Phi(y)\rangle_{\mathcal{H}_{\mathcal{K}}}.
% \]
% Therefore,
% \begin{align*}
% \mathcal{D}_\mathcal{K}(x,y)^2 
% &= \mathcal{K}(x,x)+\mathcal{K}(y,y)-2\mathcal{K}(x,y) \\
% &= \langle \Phi(x), \Phi(x)\rangle_{\mathcal{H}_{\mathcal{K}}} 
%  + \langle \Phi(y), \Phi(y)\rangle_{\mathcal{H}_{\mathcal{K}}} 
%  - 2 \langle \Phi(x), \Phi(y)\rangle_{\mathcal{H}_{\mathcal{K}}} \\
% &= \|\Phi(x)-\Phi(y)\|_{\mathcal{H}_{\mathcal{K}}}^2.
% \end{align*}
% Taking square roots yields
% \[
% \mathcal{D}_\mathcal{K}(x,y) = \|\Phi(x)-\Phi(y)\|_{\mathcal{H}_{\mathcal{K}}}.
% \]
% If we set $f=\Phi$ and $d(u,v)=\|u-v\|_{\mathcal{H}_{\mathcal{K}}}$, then clearly
% \[
% \mathcal{D}_\mathcal{K}(x,y) = d(f(x),f(y)) = d\circ(f\times f)(x,y).
% \]
% \end{proof}

Mercer kernel allows us to quantify representativeness in RKHS, and the associated kernel-induced measure ensures the underlying optimization problem remains convex and tractable.
Hence, the task reduces to identifying a suitable feature extractor $\phi$ that maps inputs into feature space, where the distance metric $d\left(\phi\left(x\right), \phi\left(y\right)\right) \propto\frac{1}{\mathcal{K}(x, y)}$. 
We posit that the diffusion model itself is a good feature extractor, supported by two observations: its strong image-text alignment reflects a comprehensive understanding of visual content~\citep{yang2023diffusion}, and its performance as a discriminative classifier exhibits high accuracy, robustness, and certified robustness~\citep{chen2024diffusion,chen2024robust}.
% TODO:这里是否要用观察observations这个词？

We propose \textbf{D}iffusion \textbf{A}s \textbf{P}riors (DAP), which utilizes the diversity, generalization, and representativeness priors contained in the well-trained diffusion models to distill datasets.
Specifically, the backbone networks (e.g., U-Net or Transformer) are viewed as a mapping function $\textcolor{colorC}{\phi}: \mathcal{X} \to \mathbb{R}^n$, transforming an image $x$ or latent code $z$ into an $n$-dimensional feature vector.
During the pre-training phase, the backbones are endowed with the \textit{representativeness} prior, which enables them to capture meaningful and high-level features.

\subsubsection{guidance of representativeness prior}
\label{sec:guidance}

We formalize the conditional probability of representativeness term in \cref{eq:cond_score} as a Boltzmann distribution w.r.t. $\mathcal{D}_\mathcal{K}$:
\begin{equation}
p(\textcolor{colorC}{\mathcal{R}}|\boldsymbol{x}^{syn})\triangleq \frac{\{\exp \left[-\frac{1}{N} \sum_N{\mathcal{D}_\mathcal{K}}\left(\boldsymbol{x}^{syn}, \boldsymbol{x}^{train}\right)\right]\}^\gamma}{Z},
\label{eq:enrgy}
\end{equation}
where $Z>0$ denotes the normalizing constant, and $\gamma>0$ controls the scale of representativeness prior.
According to \cref{thm:facto}, the conditional score function of representativeness term is:
\begin{equation}
\begin{aligned}
\nabla_{\boldsymbol{x}^{syn}}\log p(\textcolor{colorC}{\mathcal{R}}|\boldsymbol{x}^{syn})
& = \nabla_{\boldsymbol{x}^{syn}} \log \frac{\{\exp \left[-\frac{1}{N} \sum_N{\mathcal{D}_\mathcal{K}}\left(\boldsymbol{x}^{syn}, \boldsymbol{x}^{train}\right)\right]\}^\gamma}{Z} \\
& = \nabla_{\boldsymbol{x}^{syn}} \log \frac{\{\exp \left[-\frac{1}{N} \sum_Nd\left(\textcolor{colorC}{\phi}(\boldsymbol{x}^{syn}), \textcolor{colorC}{\phi}(\boldsymbol{x}^{train})\right)\right]\}^\gamma}{Z} \\
& \propto-\gamma\frac{1}{N}\sum_N\nabla_{\boldsymbol{x}^{syn}} d\left(\textcolor{colorC}{\phi}(\boldsymbol{x}^{syn}), \textcolor{colorC}{\phi}(\boldsymbol{x}^{train})\right),
\end{aligned}
\label{eq:cond_score_rep}
\end{equation}
which is referred to as energy-based guidance~\citep{dhariwal2021diffusion}. 
Practically, we use the classifier guidance method, which employs the pre-trained diffusion itself as a training-free time-dependent classifier $\phi(x_t)$ such that $\phi(x_t, t) \approx \phi(x_0)$~\citep{shen2024understanding}. 
Therefore, the reverse diffusion process with guidance is defined as:
\begin{equation}
\begin{aligned}
\mathrm{d} \boldsymbol{x}
& =\left[{f}(\boldsymbol{x}^{syn}_t, t)-g(t)^2 (\nabla_{\boldsymbol{x}^{syn}_t} \log p(\boldsymbol{x}^{syn}_t)+\gamma\nabla_{\boldsymbol{x}^{syn}_t}\log p(\textcolor{colorC}{\mathcal{R}}|\boldsymbol{x}^{syn}_t))\right] \mathrm{d} t+g(t) \mathrm{d} \bar{\boldsymbol{w}} \\
& \propto \left[{f}(\boldsymbol{x}^{syn}_t, t)-g(t)^2 \left(-\boldsymbol{\epsilon}_\theta\left(\boldsymbol{x}^{syn}_t,t\right)+\gamma\nabla_{\boldsymbol{x}^{syn}_t}d\left(\textcolor{colorC}{\phi}\left(\boldsymbol{x}^{syn}_t\right),\textcolor{colorC}{\phi}\left(\boldsymbol{x}^{train}_t\right)\right)\right)\right] \mathrm{d} t+g(t) \mathrm{d} \bar{\boldsymbol{w}},
\end{aligned}
\label{eq:guide_trsde}
\end{equation}
where $\nabla_{\boldsymbol{x}^{syn}_{t}}\log p(\textcolor{colorC}{\mathcal{R}}|\boldsymbol{x}^{syn}_{t})$ is treated as an auxiliary score derived from the \textcolor{colorC}{\textit{representativeness}} prior.
$\boldsymbol{x}^{syn}_t$ and $\boldsymbol{x}^{train}_t$ are the noised $\boldsymbol{x}^{syn}$ and $\boldsymbol{x}^{train}$ at timestep t.
\begin{figure}[!ht]
  \centering
  \begin{subfigure}{0.24\linewidth}
    \includegraphics[width=\linewidth]{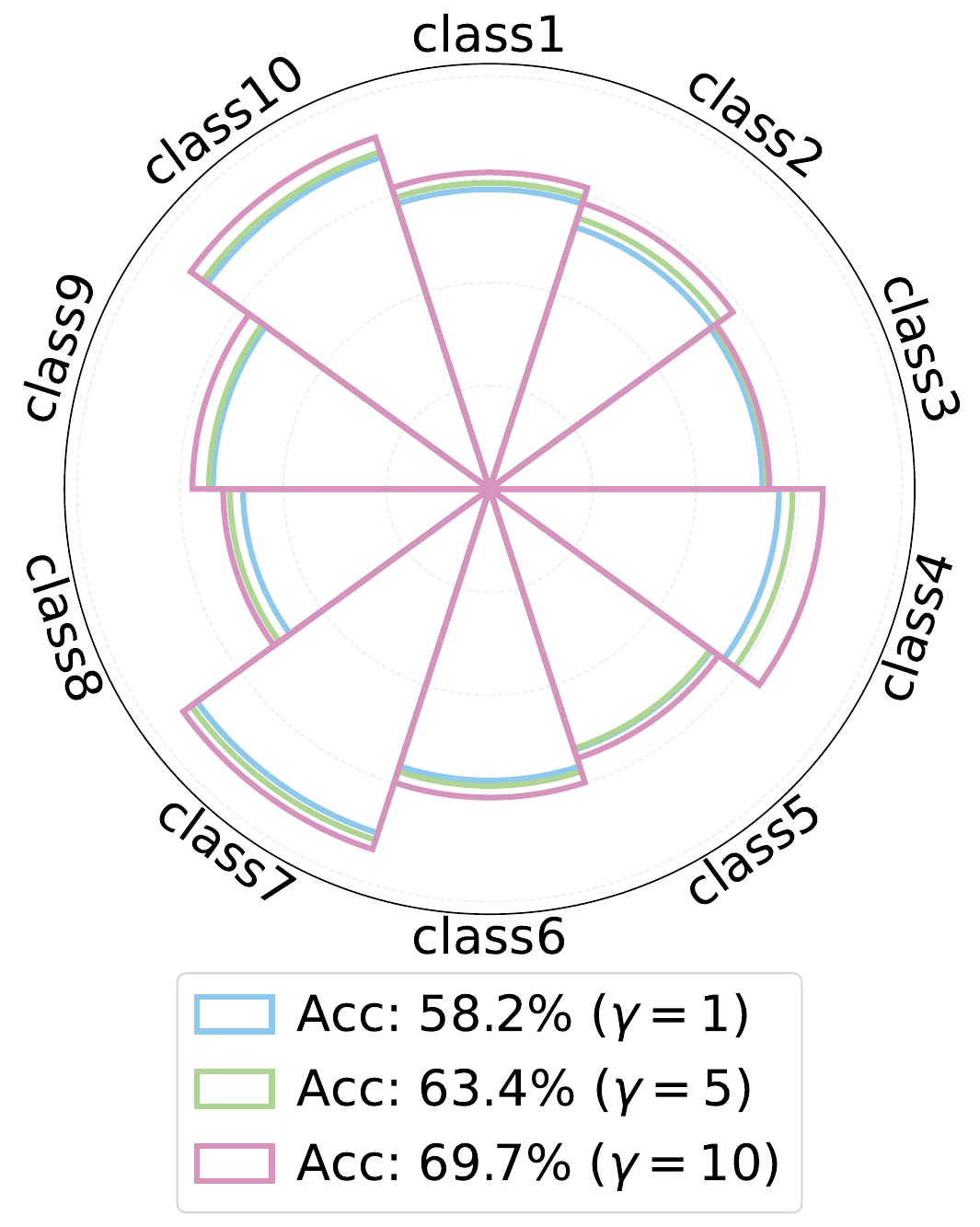}
    \caption{SD: ImageNette}
    \label{fig:rep_sd1}
  \end{subfigure}
  \begin{subfigure}{0.24\linewidth}
    \includegraphics[width=\linewidth]{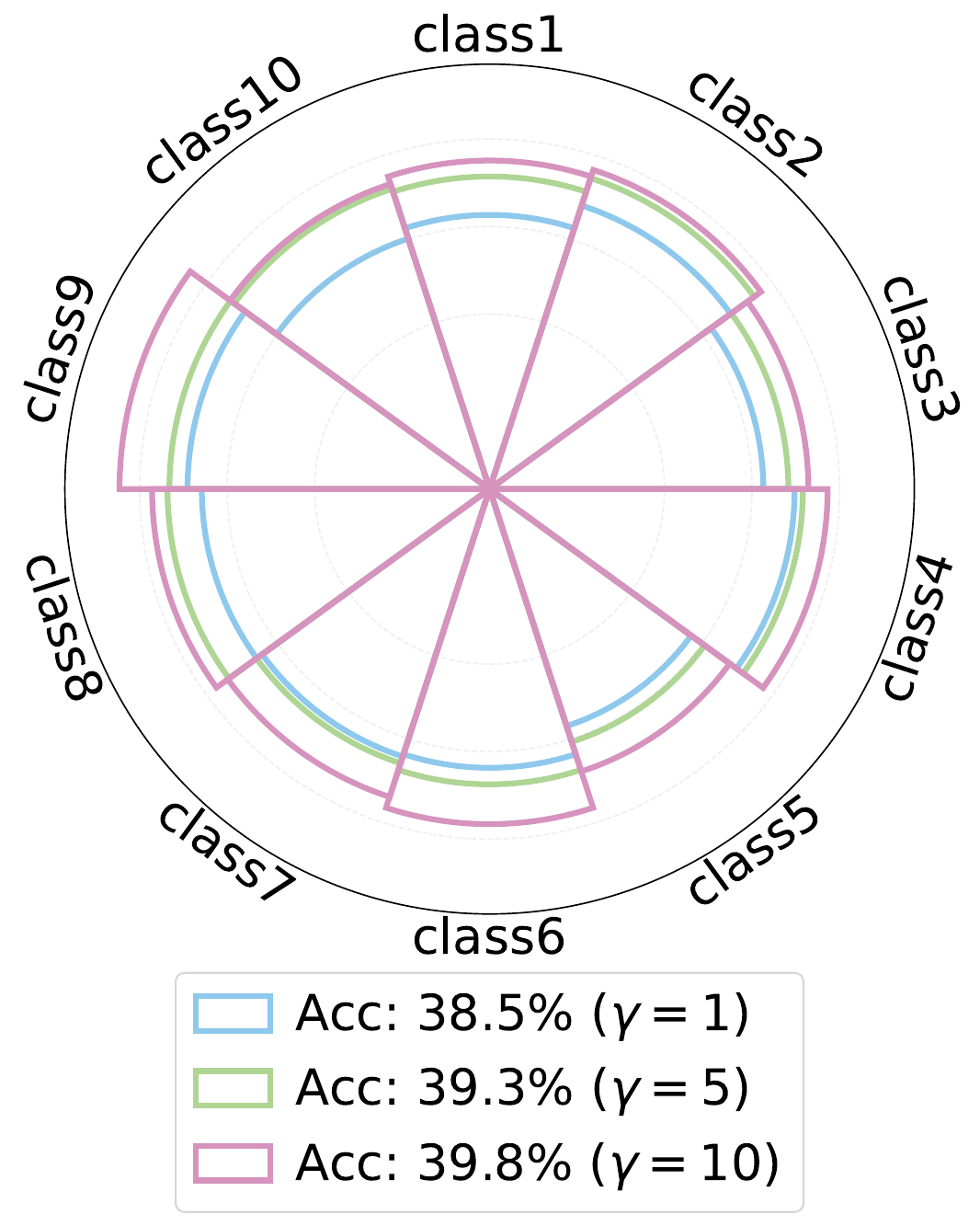}
    \caption{SD: ImageWoof}
    \label{fig:rep_sd10}
  \end{subfigure}
  \begin{subfigure}{0.24\linewidth}
    \includegraphics[width=\linewidth]{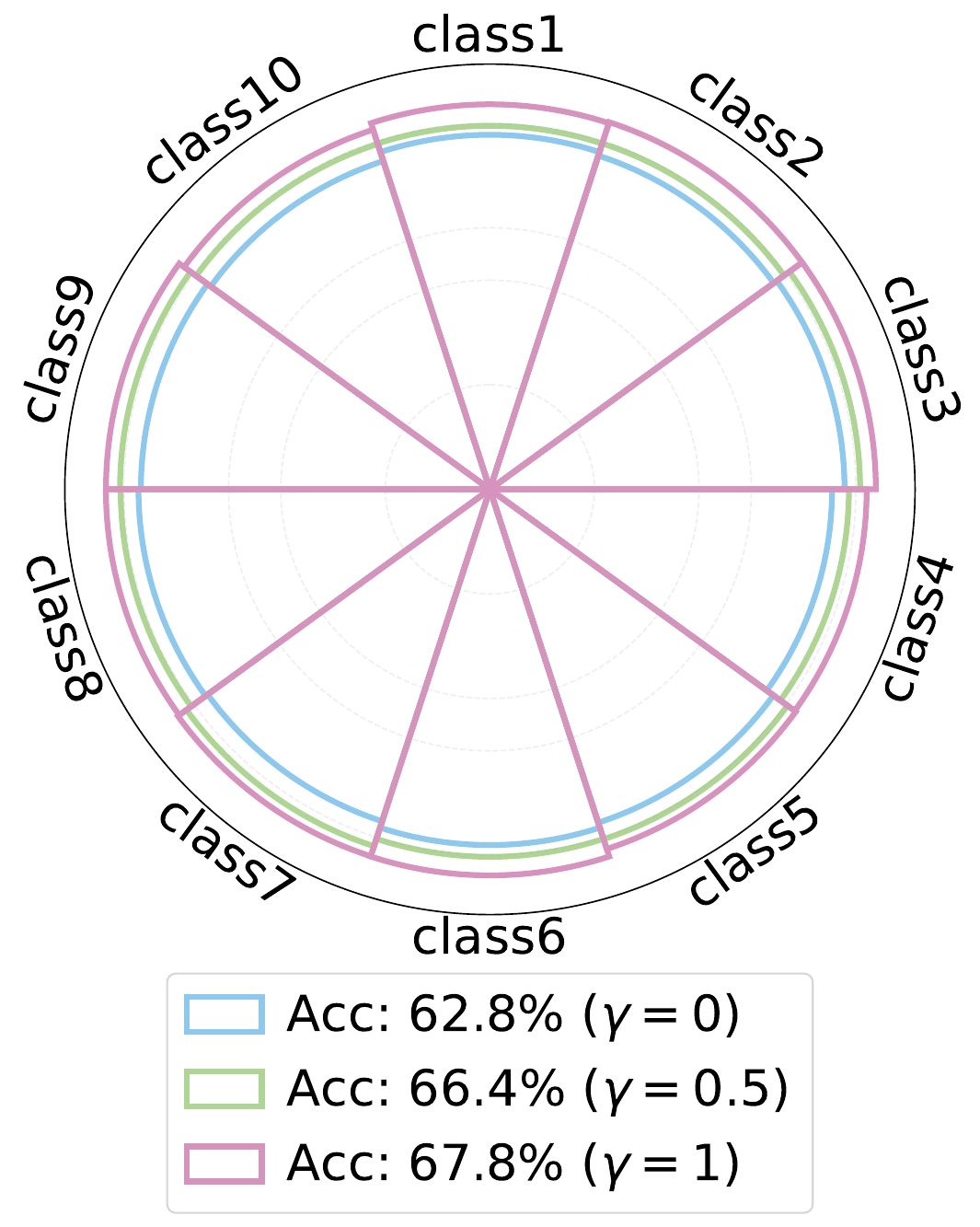}
    \caption{DiT: ImageNette}
    \label{fig:rep_sd100}
  \end{subfigure}
  \begin{subfigure}{0.24\linewidth}
    \includegraphics[width=\linewidth]{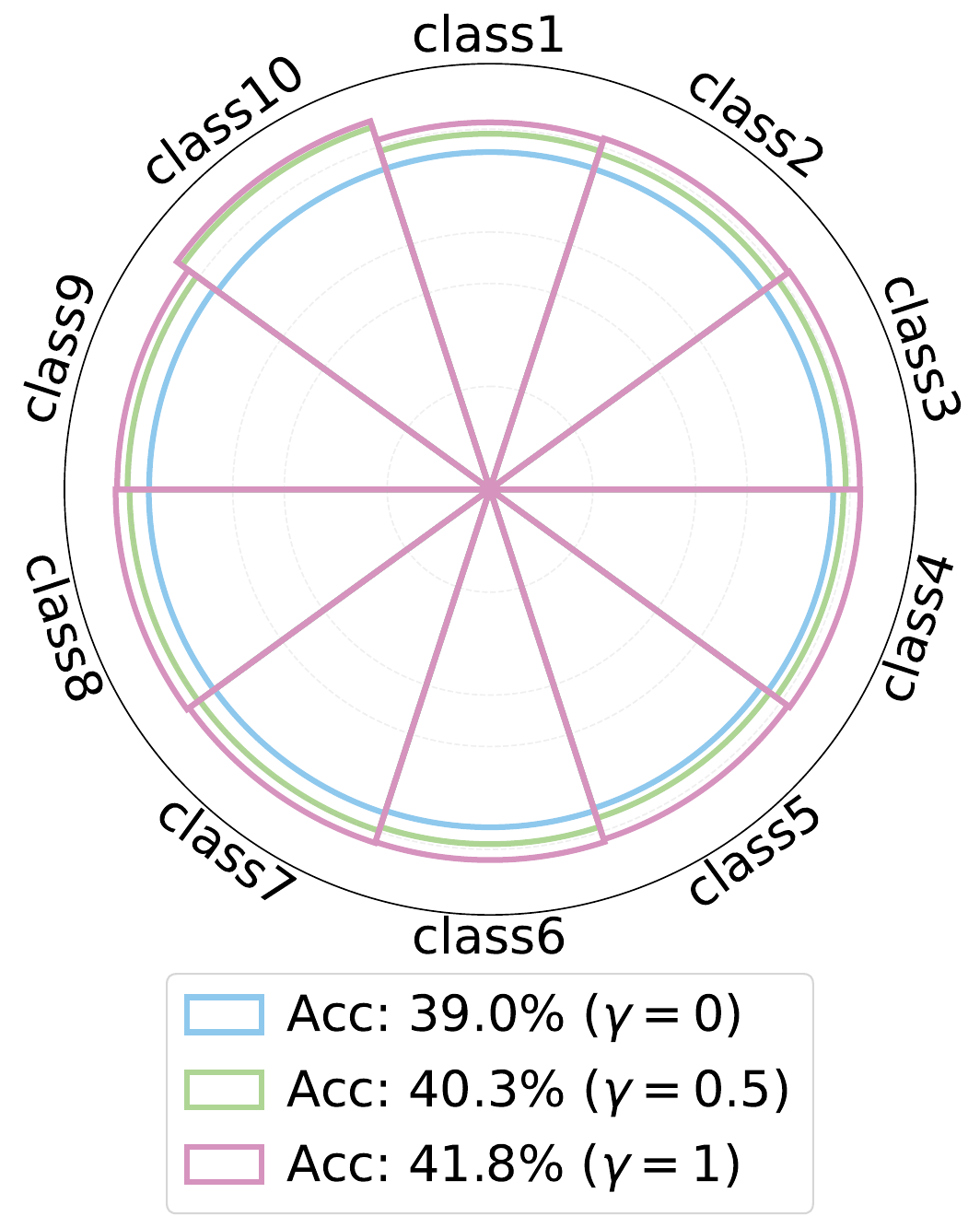}
    \caption{DiT: ImageWoof}
    \label{fig:rep_sd1000}
  \end{subfigure}
  \caption{Visualization of average \textit{\textcolor{colorC}{representativeness}} ($\propto\frac{1}{d(\textcolor{colorC}{\phi}(x),\textcolor{colorC}{\phi}(y))}$) of distilled samples (IPC10). As $\gamma$ increases, the representativeness (sector area) gets larger, yielding better DD performance.}
  \label{fig:reps}
\end{figure}

Empirically, we compare the salient features across samples using the linear kernel (Mercer kernel $\mathcal{K}(x,y)=x^\top y$) due to its tractability.
As indicated by \cref{eq:enrgy}, the representativeness of $\boldsymbol{x}^{syn}$ increases as the energy $\mathcal{D}_\mathcal{K}$ decreases.
\Cref{fig:reps} visualizes the representativeness of class-wise samples under different setups.
The distillation performance improves on synthetic samples with higher representativeness, as reflected by the area of the sector.
It is worth noting that according to \cref{eq:cond_score}, the gradient field of diversity and generalization ($\nabla_{\boldsymbol{x}}\log p(\boldsymbol{x})$) is determined and fixed by pre-trained DMs. 
Therefore, the gradient field of representativeness cannot be increased indefinitely, otherwise the other priors will lose their effectiveness (see \cref{sec:ablation} and \cref{apd:rep_guide_vis}). 

Hereto, we successfully distilled the priors within DMs into the synthetic dataset.
Specifically, \textcolor{colorA}{\text{Diversity}} prior arises from the stochasticity of diffusion trajectories where different noise initializations lead to distinct denoising paths.
\textcolor{colorB}{\text{Generalization}} prior stems from the original score function $\nabla_{\boldsymbol{x}}\log p(\boldsymbol{x})$ estimated by vanilla DM. 
\textcolor{colorC}{\text{Representativeness}} prior is directly implemented by the guidance term in \cref{eq:guide_trsde}, which guides each denosing trajectory toward gradient regions that are well represented by real data.
We implement the guided sampling process using VP-SDE and summarize the procedure in \cref{alg:dap_sampling}.
The extensive experimental results in \cref{sec:exps} demonstrate the validity of our ``Diffusion As Priors (DAP)'' method.

\begin{algorithm}[!ht]
    \caption{DAP Sampling (VP-SDE)}
        \begin{algorithmic}[1]
            \REQUIRE Noisy data samples $\boldsymbol{x}^{train|c}_t$ within class $c$, pre-trained diffusion model $\boldsymbol{\epsilon}_\theta$, a layer output $\textcolor{colorC}{\phi}$ from diffusion backbone network, a Mercer Kernel induced distance measurement $d$, energy-based guidance scale $\gamma$, pre-defined noise scales $\beta_t$. \\
            \STATE $\boldsymbol{x}_T \sim \mathcal{N}(0, I)$ \\
            \FOR{$t=T,\cdots1$}
                \STATE $\boldsymbol{\epsilon} \sim \mathcal{N}(0,I)$ \textbf{if} $t>1$, \textbf{else} $\boldsymbol{\epsilon}=\mathbf{0}$ \\
                \STATE $\tilde{\boldsymbol{x}}_{t-1}=(2-\sqrt{1-\beta_t}) \boldsymbol{x}_t+\beta_t \boldsymbol{\epsilon}_\theta\left(\boldsymbol{x}_t, t\right)+\sqrt{\beta_t} \boldsymbol{\epsilon}$ \\
                \STATE $\boldsymbol{z}_t=\textcolor{colorC}{\phi}(\boldsymbol{x}_t)$, $\boldsymbol{z}^{train|c}_t=\textcolor{colorC}{\phi}(\boldsymbol{x}^{train|c}_t)$ \hfill \# Diffusion as representativeness priors\\
                \STATE $\boldsymbol{g}_t=-\nabla_{\boldsymbol{x}_t} d(\boldsymbol{z}_t, \boldsymbol{z}^{train|c}_t)$ \\
                \STATE $\boldsymbol{x}_{t-1}=\tilde{\boldsymbol{x}}_{t-1}+\gamma\boldsymbol{g}_t$ \hfill \# Guided sampling
            \ENDFOR
            \ENSURE $\boldsymbol{x}_0$ \hfill \# The distilled sample of class $c$.
        \end{algorithmic}
    \label{alg:dap_sampling}
\end{algorithm}

\section{Experiments}
\label{sec:exps}
In this section, we conduct extensive experiments to validate the effectiveness of DAP.
Our evaluation aims to explore the following questions:
\begin{itemize}
    \item Does DAP achieve state-of-the-art performance on large-scale DD benchmarks?
    \item How do the three priors: diversity, generalization, and representativeness contribute to the effectiveness of DAP?
    \item Can DAP generalize across network architectures and datasets?
\end{itemize}
% We evaluate DAP on ImageNet-1K and its widely used subsets (ImageNette, ImageWoof, and ImageIDC), comparing against advanced DD methods, including Minimax~\citep{gu2024efficient}, D$^4$M~\citep{su2024d4m}, IGD~\citep{chen2025influence}, and MGD$^3$~\citep{chan2025mgd}. 
We evaluate DAP on ImageNet-1K and its widely used subsets (ImageNette, ImageWoof, and ImageIDC), comparing against advanced DD methods, including Minimax, D$^4$M, IGD, MGD$^3$, D$^3$HR and VLCP. 
We employ two diffusion architectures, U-Net-based Stable Diffusion (SD) and Transformer-based DiT, for distillation.
We also use them as baselines to demonstrate the advantage of the diffusion priors.
All results are reported under either hard-label or soft-label evaluation protocols, as specified by the benchmarks. 
Further experimental details are provided in the \cref{apd:exp_details}.
% \begin{table}[!ht]
% \caption{Top-1 Accuracy on \textbf{ImageNet-1K}. The results are evaluated with \textbf{soft-label protocol} based on ResNet-18. }
% \resizebox{\textwidth}{!}{%
% \begin{tabular}{ccccccccccc}
% \toprule
% Dataset & IPC & SRe$^2$L & G-VBSM & RDED & D$^4$M & Minimax & DiT & IGD & MGD$^3$ &  DAP \\
% \midrule
% \multirow{2}{*}{ImageNet-1K} & 10 & $21.3_{\pm0.6}$ & $31.4 _{\pm 0.5}$ & $42.0 _{\pm 0.1}$ & $27.9 _{\pm 0.3}$ & $44.3 _{\pm 0.5}$ & $39.6 _{\pm 0.4}$ & $45.5 _{\pm 0.5}$ & $45.6 _{\pm 0.8}$ & $\textbf{49.1} _{\pm 1.2}$ \\
%  & 50 & $46.8 _{\pm 0.2}$ & $51.8 _{\pm 0.4}$ & $56.5 _{\pm 0.1}$ & $55.2 _{\pm 0.5}$ & $58.6 _{\pm 0.3}$ & $52.9 _{\pm 0.6}$ & $59.8 _{\pm 0.3}$ & $60.2 _{\pm 0.1}$ & $\textbf{62.7} _{\pm 1.5}$ \\
% \bottomrule
% \end{tabular}%
% }
% \label{tab:imgnt1k}
% \end{table}
\subsection{Experimental Setup}
\label{apd:exp_details}

\subsubsection{Datasets and benchmarks}
\label{apd:dab}

We evaluate DAP on a range of benchmarks that vary in scale, resolution, and task difficulty.
Our primary evaluation is conducted on large-scale ImageNet-1K ($224 \times 224$)~\citep{deng2009imagenet}.
To study the effect of inter-class similarity, we further consider two 10-class subsets of ImageNet-1K: ImageNette~\citep{Howard_Imagenette_2019}, which consists of visually distinct categories and represents a relatively simple task, and ImageWoof~\citep{Howard_Imagewoof_2019}, which contains visually similar dog breeds and thus poses a fine-grained classification challenge.
Additionally, we incorporate ImageIDC~\citep{kim2022dataset} to evaluate performance.

\subsubsection{Models and evaluation protocols}
\label{apd:protocols}
For each dataset, we distill subsets of 10, 50, and 100 images per class (IPC) and assess their utility on downstream classification tasks. 
Two evaluation protocols are adopted: 
\begin{itemize}
    \item Hard-label protocol: Following \citet{chen2025influence}, we directly train classifiers from scratch using the distilled images with ground-truth labels (one-hot labels). 
    We evaluate on three commonly used architectures: ConvNet-6, ResNetAP-10, and ResNet-18.
    \item Soft-label protocol: Following \citet{sun2024diversity}, we provide soft labels via pre-trained classifiers (e.g., ResNet-18).
    This protocol is crucial for challenging datasets such as ImageNet-1K, where training from scratch on a few synthetic images is relatively difficult.
\end{itemize}
To demonstrate the compatibility of DAP, we conduct experiments on a) Stable Diffusion-V1.5 with the U-Net backbone, and b) DiT-XL/2-256 with the transformer.

\subsubsection{Other details}
\label{apd:implementation_details}
All experiments were implemented in PyTorch and conducted with NVIDIA A40 GPUs. 
% For representativeness guidance, the guidance scale $\gamma$ is set to 1.5 for ImageNette, 0.6 for ImageWoof, and 4.0 for ImageNet-1K.
For fair comparison, we reproduce baseline methods under the same setup. 
The reported results follow these conventions:
a) For DAP and reproduced baselines, we report the $\text{mean}_{\pm\text{standard deviation}}$ over three runs.
b) For other methods, we report results from the original papers.
c) In tables, the \textbf{best} result is highlighted in bold, while the \underline{second best} is underlined. 
Practically, DAP does not require selecting any specific $x^{train|c}$ samples before sampling. 

\subsection{comparison with state-of-the-art methods}

\subsubsection{Results on DiT}
We begin with ImageNet-1K, the most widely adopted benchmark for generative dataset distillation. 
Across both IPC (Images Per Class) settings, DAP consistently achieves the best results, demonstrating its superiority in large-scale DD tasks.
% \Cref{tab:imgnt1k} summarizes the results on ImageNet-1K with IPC  10 and 50.
As listed in \cref{tab:imgnt1k}, DAP achieves $49.1\%$ Top-1 accuracy at IPC10, exceeding the strongest baseline IGD and MGD$^3$ by 3.5\%. 
With more distilled samples, DAP further improves to $62.7\%$, establishing superior results on this challenging benchmark.
\begin{table}[!ht]
\caption{Top-1 Accuracy on \textbf{ImageNet-1K}. The results are evaluated with \textbf{soft-label protocol} based on ResNet-18. }
\resizebox{\textwidth}{!}{%
\begin{tabular}{cccccccccccc}
\toprule
Dataset & IPC & SRe$^2$L & G-VBSM & RDED & Minimax & DiT & IGD & MGD$^3$ & D$^3$HR & VLCP &  DAP \\
\midrule
\multirow{2}{*}{ImageNet-1K} & 10 & $21.3_{\pm0.6}$ & $31.4 _{\pm 0.5}$ & $42.0 _{\pm 0.1}$ & $44.3 _{\pm 0.5}$ & $39.6 _{\pm 0.4}$ & $45.5 _{\pm 0.5}$ & $45.6 _{\pm 0.8}$ & $44.3 _{\pm 0.3}$ & $\underline{46.7} _{\pm 0.4}$ & $\textbf{49.1} _{\pm 1.2}$ \\
 & 50 & $46.8 _{\pm 0.2}$ & $51.8 _{\pm 0.4}$ & $56.5 _{\pm 0.1}$ & $58.6 _{\pm 0.3}$ & $52.9 _{\pm 0.6}$ & $59.8 _{\pm 0.3}$ & ${60.2} _{\pm 0.1}$ & $59.4 _{\pm 0.1}$ & $\underline{60.5} _{\pm 0.2}$ & $\textbf{62.7} _{\pm 1.5}$ \\
\bottomrule
\end{tabular}%
}
\label{tab:imgnt1k}
\end{table}

To examine robustness across different scales and architectures, we also evaluate on ImageNet subsets, including ImageNette and ImageWoof (\cref{tab:ditsub}). 
% These results further confirm the robustness of DAP under different scales and network architectures. 
DAP again outperforms almost all competing methods. 
An exception occurs with ResNet-18 at IPC10, IGD slightly surpasses DAP.
This deviation is attributed to the fact that IGD explicitly incorporates ResNet-18 as the surrogate network for its influence-guided sampling, thereby introducing an inductive bias favoring the specific architecture. 
While this bias yields localized gains, it also risks overfitting (see \cref{tab:cross_crch}). 
In contrast, DAP does not rely on architecture-specific heuristics and remains effective across multiple backbones.
% \begin{table}[!ht]
% \caption{imgnt1k}
% \resizebox{\textwidth}{!}{%
% \begin{tabular}{ccccccccccc}
% \toprule
% Dataset & IPC & SRe$^2$L & G-VBSM & RDED & D$^4$M & Minimax & DiT & IGD & MGD$^3$ &  DAP \\
% \midrule
% \multirow{2}{*}{ImageNet-1K} & 10 & $21.3_{\pm0.6}$ & $31.4 _{\pm 0.5}$ & $42.0 _{\pm 0.1}$ & $27.9 _{\pm 0.3}$ & $44.3 _{\pm 0.5}$ & $39.6 _{\pm 0.4}$ & $45.5 _{\pm 0.5}$ & $45.6 _{\pm 0.8}$ & $AAA _{\pm BBB}$ \\
%  & 50 & $46.8 _{\pm 0.2}$ & $51.8 _{\pm 0.4}$ & $56.5 _{\pm 0.1}$ & $55.2 _{\pm 0.5}$ & $58.6 _{\pm 0.3}$ & $52.9 _{\pm 0.6}$ & $59.8 _{\pm 0.3}$ & $60.2 _{\pm 0.1}$ & $AAA _{\pm BBB}$ \\
% \bottomrule
% \end{tabular}%
% }
% \label{tab:imgnt1k}
% \end{table}

\begin{table}[!ht]
\caption{Top-1 Accuracy on \textbf{ImageNette} and \textbf{ImageWoof}. The results are evaluated with \textbf{hard-label protocol}.}
\resizebox{\textwidth}{!}{%
\begin{tabular}{ccccccccccc}
\toprule
Dataset & Model & IPC & Random & DM & DiT & Minimax & IGD & MGD$^3$ & DAP & Full \\
\midrule
\multirow{9}{*}{Nette} & \multirow{3}{*}{ConvNet-6} & 10 & $46.0 _{\pm 0.5}$ & $49.8_{\pm1.1}$ & $56.2_{\pm1.3}$ & $58.2_{\pm0.9}$ & $\underline{61.9} _{\pm 1.9}$ & $56.2 _{\pm 1.7}$ & $\textbf{64.8} _{\pm 0.8}$ & \multirow{3}{*}{$94.3_{\pm 0.5}$} \\
 &  & 50 & $71.8_{\pm 1.2}$ & $70.3_{\pm 0.8}$ & $74.1_{\pm 0.6}$ & $76.9_{\pm 0.9}$ & $\underline{80.9} _{\pm 0.9}$ & $79.0 _{\pm 0.3}$ & $\textbf{82.2} _{\pm 1.6}$ &  \\
 &  & 100 & $79.9_{\pm 0.8}$ & $78.5_{\pm 0.8}$ & $78.2_{\pm 0.3}$ & $81.1_{\pm 0.3}$ & $\underline{84.5} _{\pm 0.7}$ & $84.4 _{\pm 0.6}$ & $\textbf{85.7} _{\pm 1.3}$ &  \\
\cmidrule{2-11}
 & \multirow{3}{*}{ResNetAP-10} & 10 & $54.2 _{\pm 1.2}$ & $60.2 _{\pm 0.7}$ & $62.8 _{\pm 0.8}$ & $63.2 _{\pm 1.0}$ & $\underline{66.5} _{\pm 1.1}$ & $66.4 _{\pm 2.4}$ & $\textbf{67.8} _{\pm 1.2}$ & \multirow{3}{*}{$94.6 _{\pm 0.5}$} \\
 &  & 50 & $77.3 _{\pm 1.0}$ & $76.7 _{\pm 1.0}$ & $76.9 _{\pm 0.5}$ & $78.2 _{\pm 0.7}$ & $\underline{81.0}_{\pm 1.2}$ & $79.5 _{\pm 1.3}$ & $\textbf{82.3} _{\pm 0.7}$ &  \\
 &  & 100 & $81.1 _{\pm 0.6}$ & $80.9 _{\pm 0.7}$ & $80.1 _{\pm 1.1}$ & $81.3 _{\pm 0.9}$ & $\underline{85.2} _{\pm 0.5}$ & $85.0 _{\pm 0.4}$ & $\textbf{86.0} _{\pm 2.1}$ &  \\
\cmidrule{2-11}
 & \multirow{3}{*}{ResNet-18} & 10 & $55.8 _{\pm 1.0}$ & $60.9 _{\pm 0.7}$ & $62.5 _{\pm 0.9}$ & $64.9 _{\pm 0.6}$ & $\textbf{67.7}_{\pm 0.3}$ & $61.2 _{\pm 1.4}$ & $\underline{66.4} _{\pm 0.5}$ & \multirow{3}{*}{$95.3 _{\pm 0.6}$} \\
 &  & 50 & $75.8 _{\pm 1.1}$ & $75.0 _{\pm 1.0}$ & $75.2 _{\pm 0.9}$ & $78.1 _{\pm 0.6}$ & $\underline{81.0} _{\pm 0.7}$ & $80.8 _{\pm 0.9}$ & $\textbf{82.8} _{\pm 1.1}$ &  \\
 &  & 100 & $82.0 _{\pm 0.4}$ & $81.5 _{\pm 0.4}$ & $77.8 _{\pm 0.6}$ & $81.3 _{\pm 0.7}$ & $\underline{84.4} _{\pm 0.8}$ & $83.7 _{\pm 1.3}$ & $\textbf{85.5} _{\pm 1.5}$ &  \\
\midrule
\multirow{9}{*}{Woof} & \multirow{3}{*}{ConvNet-6} & 10 & $25.2 _{\pm 1.1}$ & $27.6 _{\pm 1.2}$ & $32.3 _{\pm 0.8}$ & $33.5 _{\pm 1.4}$ & $\underline{35.0} _{\pm 0.8}$ & $34.7 _{\pm 1.1}$ & $\textbf{37.6} _{\pm 0.9}$ & \multirow{3}{*}{$85.9 _{\pm 0.4}$} \\
 &  & 50 & $41.9 _{\pm 1.4}$ & $43.8 _{\pm 1.1}$ & $48.5 _{\pm 1.3}$ & $50.7 _{\pm 1.8}$ & $54.2 _{\pm 0.7}$ & $\underline{54.5} _{\pm 1.6}$ & $\textbf{55.8} _{\pm 0.4}$ &  \\
 &  & 100 & $52.3 _{\pm 1.5}$ & $50.1 _{\pm 0.9}$ & $54.2 _{\pm 1.5}$ & $57.1 _{\pm 1.9}$ & $\underline{61.1} _{\pm 1.0}$ & $60.1 _{\pm 1.2}$ & $\textbf{62.4} _{\pm 1.2}$ &  \\
\cmidrule{2-11}
 & \multirow{3}{*}{ResNetAP-10} & 10 & $31.6 _{\pm 0.8}$ & $29.8 _{\pm 1.0}$ & $39.0 _{\pm 0.9}$ & $39.6 _{\pm 1.2}$ & $\underline{41.0} _{\pm 0.8}$ & $40.4 _{\pm 1.9}$ & $\textbf{41.8} _{\pm 0.7}$ & \multirow{3}{*}{$87.2 _{\pm 0.6}$} \\
 &  & 50 & $50.1 _{\pm 1.6}$ & $47.8 _{\pm 1.2}$ & $55.8 _{\pm 1.1}$ & $59.8 _{\pm 0.8}$ & $\underline{62.7} _{\pm 1.2}$ & $56.5 _{\pm 1.9}$ & $\textbf{63.3} _{\pm 0.5}$ &  \\
 &  & 100 & $59.2 _{\pm 0.9}$ & $59.8 _{\pm 1.3}$ & $62.5 _{\pm 0.9}$ & $66.8 _{\pm 1.2}$ & $\underline{69.7} _{\pm 0.9}$ & $66.5 _{\pm 1.0}$ & $\textbf{70.8} _{\pm 1.4}$ &  \\
\cmidrule{2-11}
 & \multirow{3}{*}{ResNet-18} & 10 & $30.9 _{\pm 1.3}$ & $30.2 _{\pm 0.6}$ & $40.6 _{\pm 0.6}$ & $42.2 _{\pm 1.2}$ & $\textbf{44.8} _{\pm 0.8}$ & $38.5_{\pm 2.5}$ & $\underline{43.9} _{\pm 0.9}$ & \multirow{3}{*}{$89.0 _{\pm 0.6}$} \\
 &  & 50 & $54.0 _{\pm 0.8}$ & $53.9 _{\pm 0.7}$ & $57.4 _{\pm 0.7}$ & $60.5 _{\pm 0.5}$ & $\underline{62.0} _{\pm 1.1}$ & $58.3 _{\pm 1.4}$ & $\textbf{63.2} _{\pm 0.7}$ &  \\
 &  & 100 & $63.6 _{\pm 0.5}$ & $64.9 _{\pm 0.7}$ & $62.3 _{\pm 0.5}$ & $67.4 _{\pm 0.7}$ & $\underline{70.6} _{\pm 1.8}$ & $68.8 _{\pm 0.7}$ & $\textbf{71.6} _{\pm 1.3}$ &  \\
\bottomrule
\end{tabular}%
}
\label{tab:ditsub}
\end{table}

\subsubsection{Results on Stable Diffusion}
We next apply DAP to Stable Diffusion (SD) as the generative backbone. 
As shown in \cref{fig:sd_main}, DAP consistently surpasses the baseline MGD$^3$ and vanilla Stable Diffusion across all datasets and IPC settings. 
For instance, DAP reaches 81.4\% accuracy on ImageNette with IPC50, approaching the accuracy of training on the full dataset while using only a fraction of the data size.
\begin{figure}[!hb]
  \centering
  \begin{subfigure}{0.24\linewidth}
    \includegraphics[width=\linewidth]{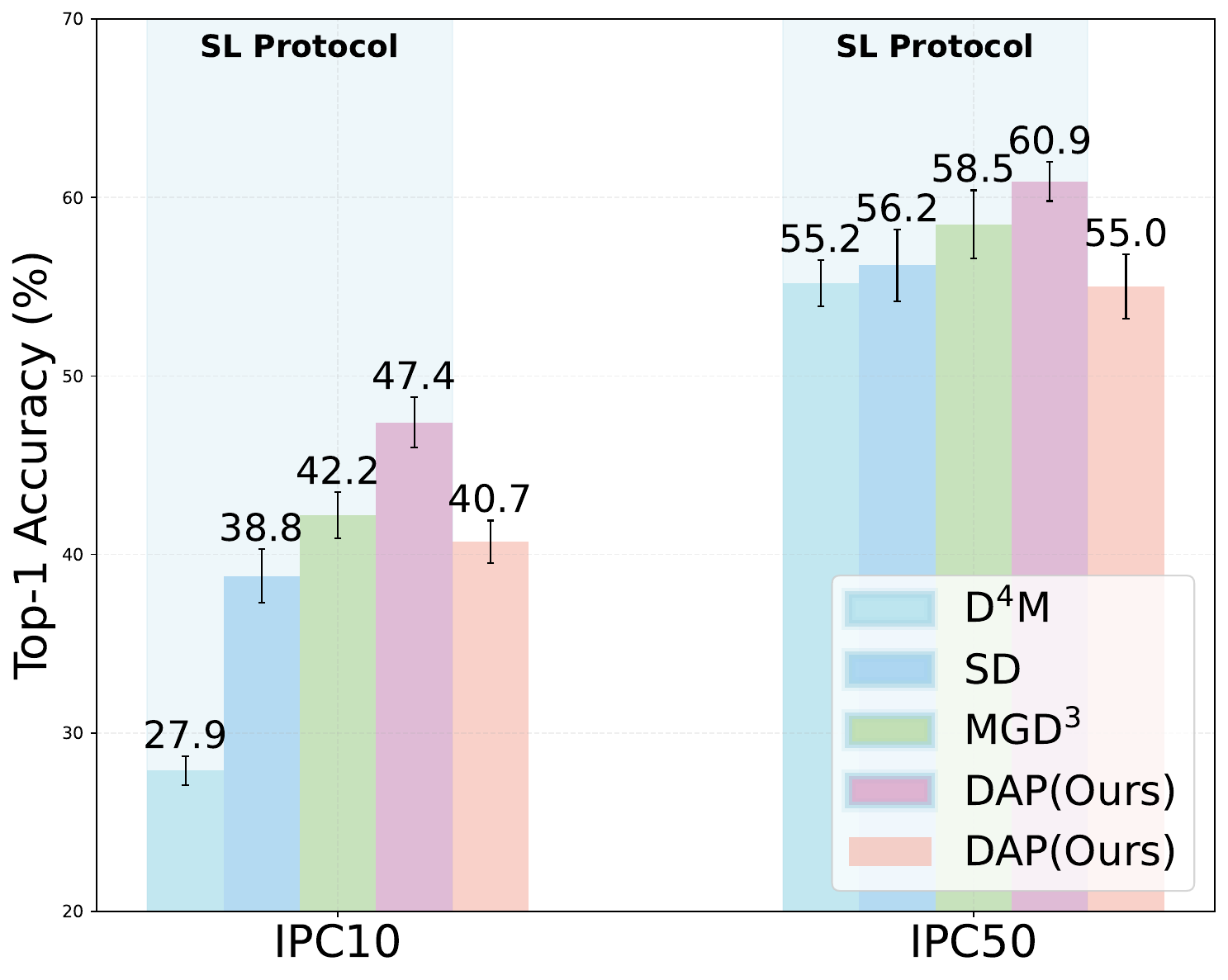}
    \caption{ImageNet-1K}
    \label{fig:sd_1k}
  \end{subfigure}
  \begin{subfigure}{0.24\linewidth}
    \includegraphics[width=\linewidth]{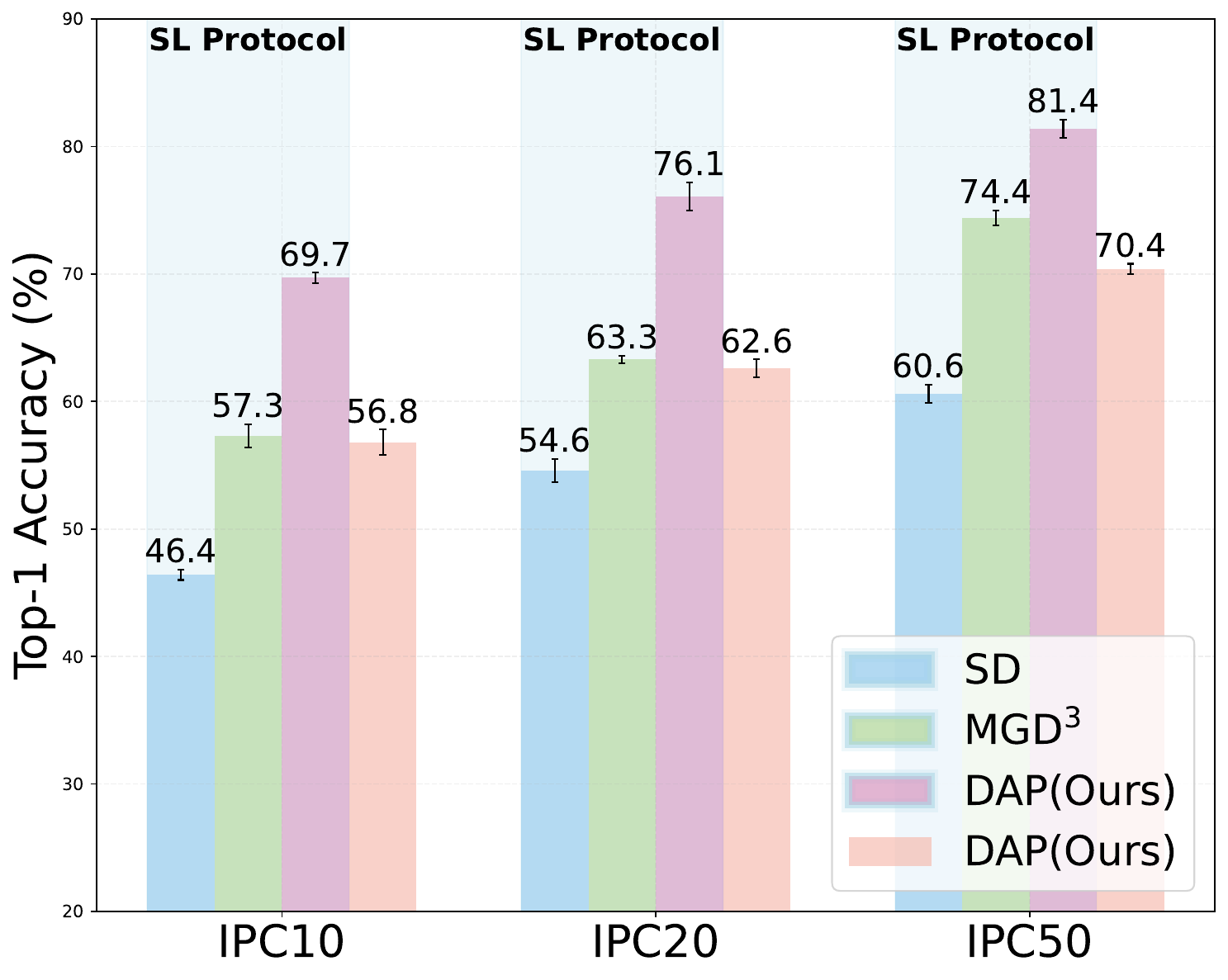}
    \caption{ImageNette}
    \label{fig:sd_nette}
  \end{subfigure}
  \begin{subfigure}{0.24\linewidth}
    \includegraphics[width=\linewidth]{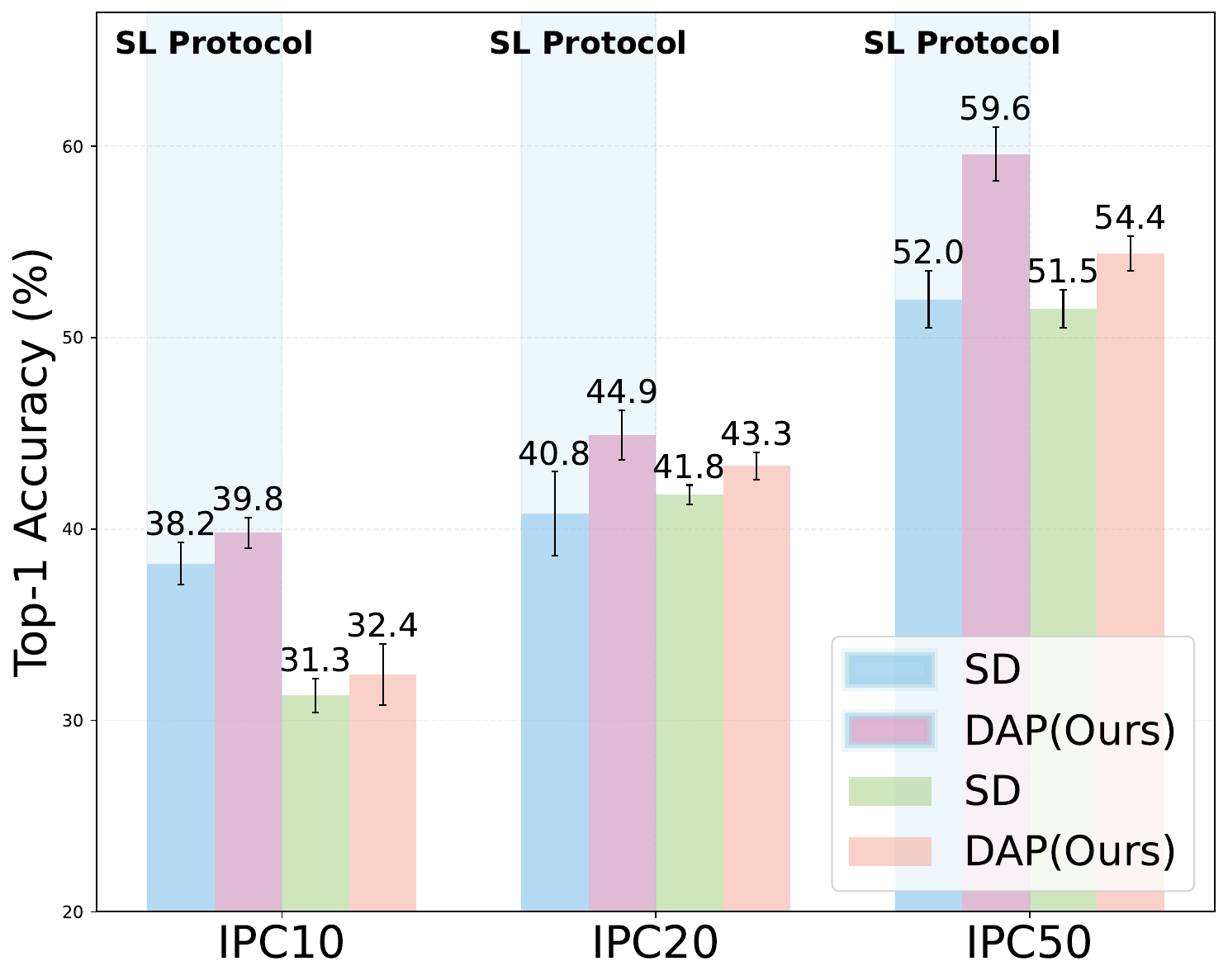}
    \caption{ImageWoof}
    \label{fig:sd_idc}
  \end{subfigure}
  \begin{subfigure}{0.24\linewidth}
    \includegraphics[width=\linewidth]{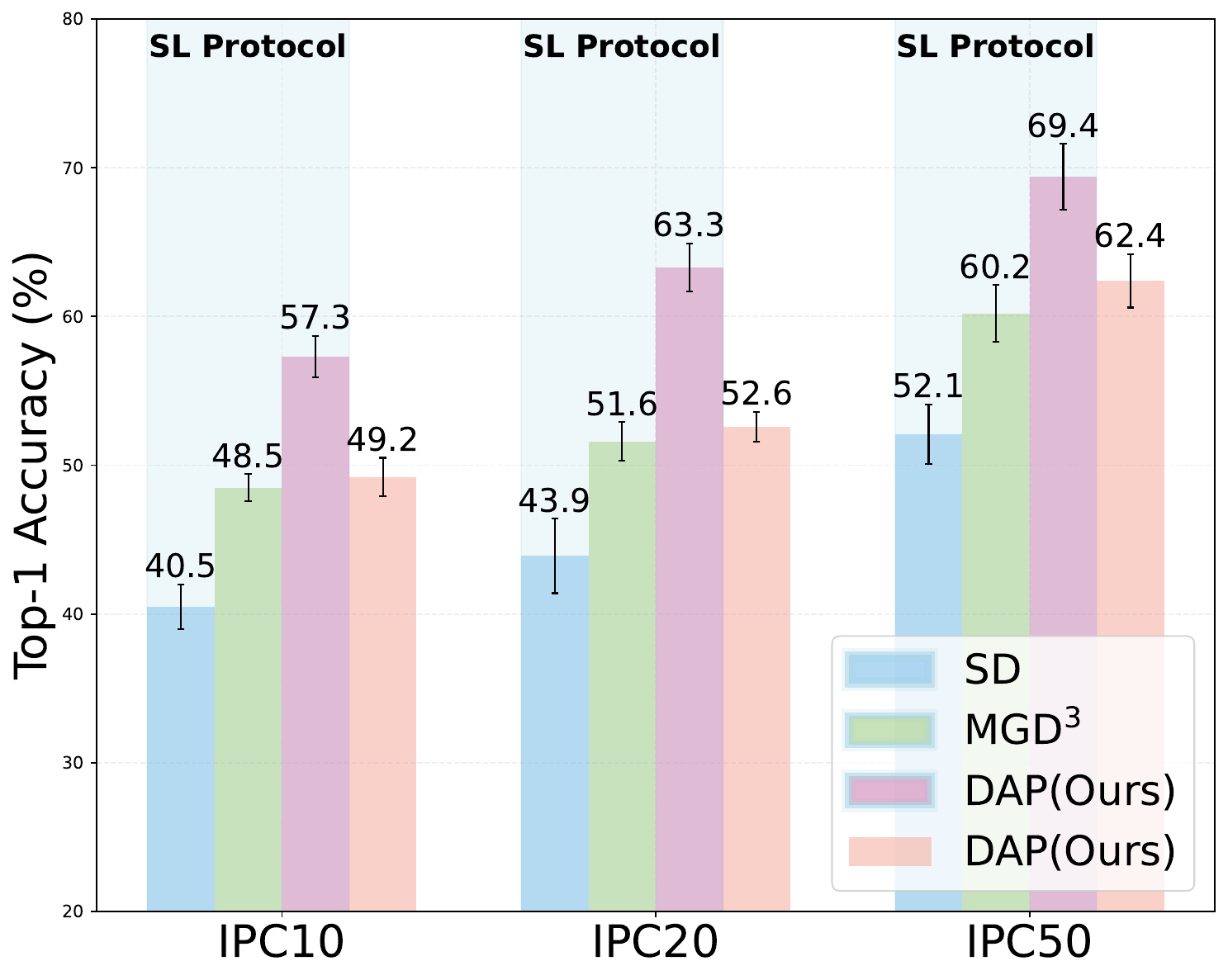}
    \caption{ImageNet-IDC}
    \label{fig:sd_ik}
  \end{subfigure}
  \caption{The comparison results on \textbf{Stable Diffusion}. The results are evaluated with both \textbf{hard-label (HL)} and \textbf{soft-label (SL) protocols} based on ResNet-18. The results of SL protocol are marked with a light blue background, while those without background color are from HL protocol.}
  \label{fig:sd}
\label{fig:sd_main}
\end{figure}

A surprising finding arises when comparing hard-label and soft-label protocols. 
Most previous methods achieve competitive results only under soft-label supervision, whereas DAP already matches or surpasses them under the stricter hard-label supervision. 
This demonstrates that the representativeness prior substantially improves the quality of distilled datasets, even without auxiliary supervision. 
Moreover, DAP maintains robustness under domain shifts between the SD pre-training dataset (LAION~\citep{schuhmann2022laion}) and the distilled dataset (ImageNet), further highlighting its ability to leverage diffusion priors to bridge domain gaps, which is not observed in existing approaches.
% \begin{figure}[!hb]
%   \centering
%   \begin{subfigure}{0.24\linewidth}
%     \includegraphics[width=\linewidth]{fig/Fig3-1k.pdf}
%     \caption{ImageNet-1K}
%     \label{fig:sd_1k}
%   \end{subfigure}
%   \begin{subfigure}{0.24\linewidth}
%     \includegraphics[width=\linewidth]{fig/Fig3-nette.pdf}
%     \caption{ImageNette}
%     \label{fig:sd_nette}
%   \end{subfigure}
%   \begin{subfigure}{0.24\linewidth}
%     \includegraphics[width=\linewidth]{fig/Fig3-woof.pdf}
%     \caption{ImageWoof}
%     \label{fig:sd_idc}
%   \end{subfigure}
%   \begin{subfigure}{0.24\linewidth}
%     \includegraphics[width=\linewidth]{fig/Fig3-idc.pdf}
%     \caption{ImageNet-IDC}
%     \label{fig:sd_ik}
%   \end{subfigure}
%   \caption{The comparison results on \textbf{Stable Diffusion}. The results are evaluated with both \textbf{hard-label (HL)} and \textbf{soft-label (SL) protocols} based on ResNet-18. The results of SL protocol are marked with a light blue background, while those without background color are from HL protocol.}
%   \label{fig:sd}
% \label{fig:sd_main}
% \end{figure}

\subsection{Analysis of Diffusion Priors}

\subsubsection{generalization prior}
Many existing methods overfit to the distillation settings and suffer performance degradation when the dataset scale is reduced or the evaluation architecture is changed.
% A key question is whether DAP can maintain generalization in these scenarios.
As listed in \cref{tab:generalization}, we obtain IPC50 and IPC10 datasets by subsampling them from IPC100 datasets rather than generating them specially.
IGD and MGD$^3$ suffer degradation under this reduction, whereas DAP preserves accuracy across scales without noticeable performance loss. 
This generalization indicates that DAP captures sufficient transferable knowledge rather than memorizing samples at a fixed scale.
\begin{table}[!ht]
\centering
\caption{A study on dataset scale reduction. The results are Top-1 Accuracy evaluated with \textbf{hard-label protocol}. The failure cases (degradation $>\textcolor{blue}{5\%}$ compared to \cref{tab:ditsub}) are marked in blue.}
\resizebox{\textwidth}{!}{%
\begin{tabular}{ccccccccccccc}
\toprule
\multirow{2}{*}{Model} & \multirow{2}{*}{IPC} & \multicolumn{4}{c}{ImageNette} & \multicolumn{4}{c}{ImageWoof} \\
\cmidrule(r){3-6} \cmidrule(r){7-10}
 &  & IGD & MGD$^3$ & DAP & Full & IGD & MGD$^3$ & DAP & Full\\
\midrule
\multirow{3}{*}{ConvNet-6} & 10 & $\underline{59.8} _{\pm 2.3}$ & $54.2 _{\pm 1.9}$ & $\textbf{64.5} _{\pm 0.7}$ & \multirow{3}{*}{$94.3 _{\pm 0.4}$} & $\underline{32.6} _{\pm 1.5}$ & $\textcolor{blue}{27.0} _{\pm 1.2}$ & $\textbf{36.5} _{\pm 1.8}$ & \multirow{3}{*}{$85.9 _{\pm 0.4}$}\\
 & 50 & $\underline{79.8} _{\pm 1.8}$ & $77.0 _{\pm 1.3}$ & $\textbf{80.1} _{\pm 1.2}$ & &$\textbf{53.4} _{\pm 0.7}$  &  $51.4 _{\pm 0.8}$& $\underline{53.1} _{\pm 0.9}$ & \\
 & 100 & $82.8 _{\pm 0.6}$ &  $\underline{83.7} _{\pm 0.8}$& $\textbf{85.7} _{\pm 1.3}$ & &  $\underline{60.2} _{\pm 0.4}$&  $58.8 _{\pm 0.8}$& $\textbf{62.4} _{\pm 1.2}$ & \\
\midrule
\multirow{3}{*}{ResNetAP-10} & 10 & $\underline{63.2} _{\pm 1.7}$ & $\textcolor{blue}{59.2} _{\pm 1.6}$ & $\textbf{66.1} _{\pm 0.4}$ & \multirow{3}{*}{$94.6 _{\pm 0.5}$} & $\underline{\textcolor{blue}{35.6}} _{\pm 1.7}$ & $\textcolor{blue}{31.8} _{\pm 1.4}$ & $\textbf{37.9} _{\pm 0.8}$ & \multirow{3}{*}{$87.2 _{\pm 0.6}$} \\
 & 50 & $\textcolor{blue}{73.4} _{\pm 1.3}$ & $\underline{79.0} _{\pm 1.1}$ & $\textbf{79.8} _{\pm 1.5}$ & & $\underline{60.4} _{\pm 0.7}$ & $58.6 _{\pm 1.3}$ & $\textbf{62.6} _{\pm 0.6}$ & \\
 & 100 & $82.5 _{\pm 1.2}$ & $\underline{83.0} _{\pm 0.5}$ & $\textbf{86.0} _{\pm 2.1}$ & & $\underline{66.8} _{\pm 0.9}$ & $64.9 _{\pm 0.4}$ & $\textbf{70.8} _{\pm 1.4}$ & \\
\midrule
\multirow{3}{*}{ResNet-18} & 10 & $\textcolor{blue}{\underline{62.6}} _{\pm 2.1}$ & $\textcolor{blue}{56.0} _{\pm 1.8}$ & $\textbf{63.7} _{\pm 0.8}$ & \multirow{3}{*}{$95.3 _{\pm 0.6}$} & $\underline{\textcolor{blue}{35.2}} _{\pm 1.4}$ & $\textcolor{blue}{29.8} _{\pm 2.3}$ & $\textbf{39.4} _{\pm 1.3}$ & \multirow{3}{*}{$89.0 _{\pm 0.6}$} \\
 & 50 & $78.4 _{\pm 1.4}$ & $\underline{78.8} _{\pm 1.6}$ & $\textbf{80.4} _{\pm 2.3}$ & & $59.3 _{\pm 0.5}$ & $\underline{59.4} _{\pm 1.7}$ & $\textbf{59.7} _{\pm 1.2}$ & \\
 & 100 & $83.6 _{\pm 1.1}$ & $\underline{84.2} _{\pm 0.8}$ & $\textbf{85.5} _{\pm 1.5}$ & & $\underline{68.8} _{\pm 0.8}$ & $67.8 _{\pm 1.1}$ & $\textbf{71.6} _{\pm 0.9}$ & \\
\bottomrule
\end{tabular}%
}
\label{tab:generalization}
\end{table}

We further evaluate cross-architecture generalization in \cref{tab:cross_crch}. 
The distilled datasets are trained with soft-labels provided by ResNet-18 and tested on other architectures, including ResNet-101, MobileNet-V2, EfficientNet-B0, and Swin Transformer. 
While baselines show performance drops due to inductive bias on the architectures, DAP consistently achieves the highest accuracy across all cases.
These findings confirm that representativeness prior enables architecture-agnostic DD.
\begin{table}[!ht]
\caption{A study on cross-architecture generalization. The results are Top-1 Accuracy on \textbf{ImageNet-1K} evaluated with \textbf{soft-label protocol}.}
\resizebox{\textwidth}{!}{%
\begin{tabular}{ccccccccc}
\toprule
\multirow{2}{*}{Method} & \multicolumn{2}{c}{ResNet-101} & \multicolumn{2}{c}{MobileNet-V2} & \multicolumn{2}{c}{EfficientNet-B0} & \multicolumn{2}{c}{Swin Transformer} \\
\cmidrule(r){2-3} \cmidrule(r){4-5} \cmidrule(r){6-7} \cmidrule(r){8-9}
 & IPC10 & IPC50 & IPC10 & IPC50 & IPC10 & IPC50 & IPC10 & IPC50 \\
\midrule
RDED & $48.3 _{\pm 1.0}$ & $61.2 _{\pm 0.4}$ & $\underline{40.4} _{\pm 0.1}$ & $53.3 _{\pm 0.2}$ & $31.0 _{\pm 0.1}$ & $58.5 _{\pm 0.4}$ & $42.3 _{\pm 0.6}$ & $53.2 _{\pm 0.8}$ \\
IGD & $\underline{52.6} _{\pm 1.2}$ & $\underline{66.2} _{\pm 0.2}$ & $39.2 _{\pm 0.2}$ & $\underline{57.8} _{\pm 0.2}$ & $\underline{47.7} _{\pm 0.1}$ & $\underline{62.0} _{\pm 0.1}$ & $\underline{44.1} _{\pm 0.6}$ & $\underline{58.6} _{\pm 0.5}$ \\
DAP & $\textbf{54.9} _{\pm 0.9}$ & $\textbf{68.1} _{\pm 0.4}$ & $\textbf{43.1} _{\pm 0.3}$ & $\textbf{61.4} _{\pm 0.2}$ & $\textbf{49.7} _{\pm 0.3}$ & $\textbf{65.2} _{\pm 0.4}$ & $\textbf{48.3} _{\pm 0.6}$ & $\textbf{61.7} _{\pm 0.4}$ \\
\bottomrule
\end{tabular}%
}
\label{tab:cross_crch}
\vspace{-1em}
\end{table}

\subsubsection{diversity and representativeness priors}
To investigate whether DAP enforces diversity and representativeness priors in the distilled datasets, we visualize the data distribution using t-SNE alongside both the training and test sets.
\Cref{fig:div} reveals that the synthetic data aligns well with the training set while generalizing to the test set, demonstrating that the DAP can accurately match the underlying data manifold. 
Moreover, the embeddings show intra-class diversity and inter-class separability, indicating that the distilled datasets capture meaningful variability without sacrificing discriminability.
\begin{figure}[!ht]
  \centering
  \begin{subfigure}{0.23\linewidth}
    \includegraphics[width=\linewidth]{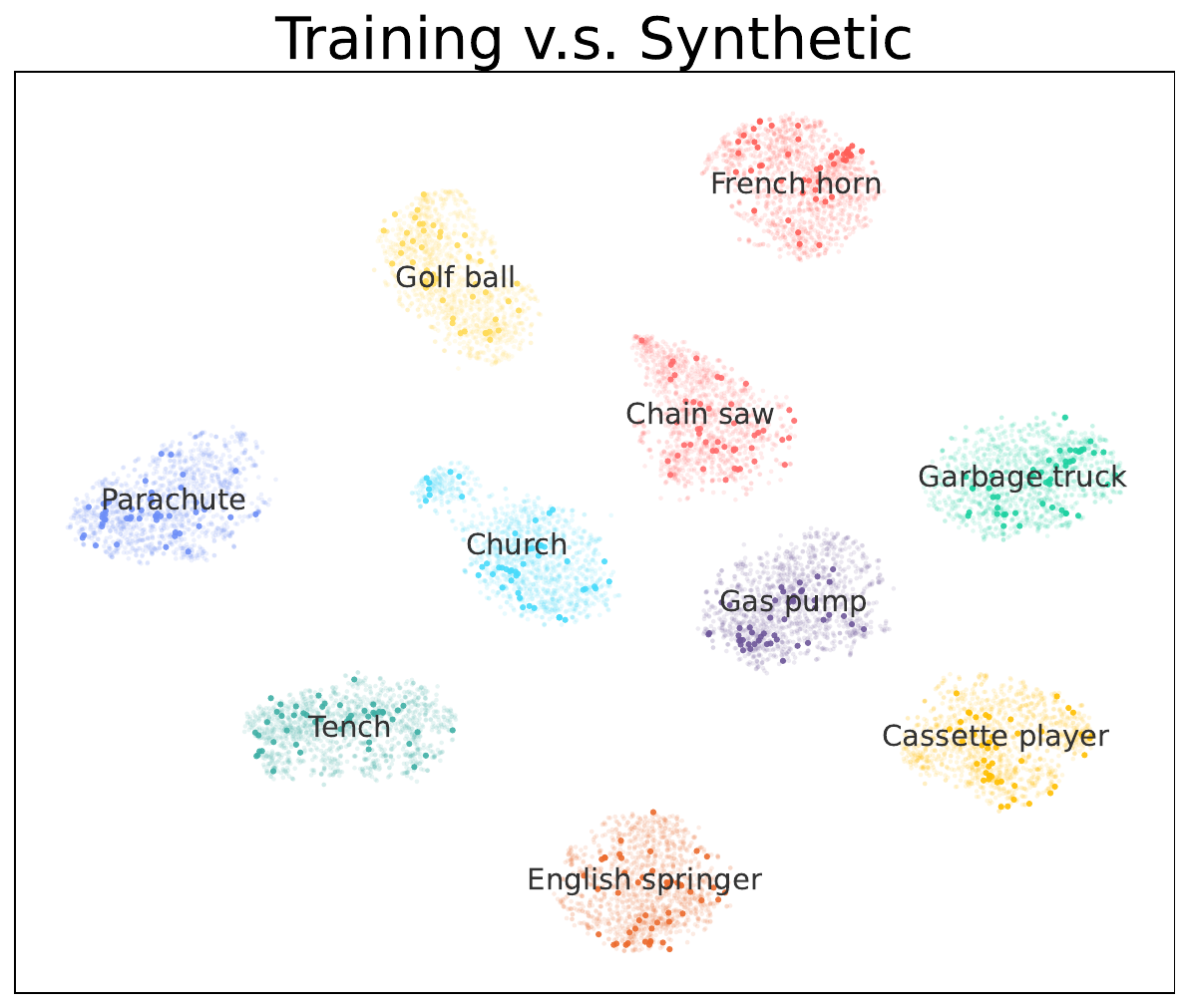}
    \caption{ImageNette-Training}
    \label{fig:div_nette1}
  \end{subfigure}
  \begin{subfigure}{0.23\linewidth}
    \includegraphics[width=\linewidth]{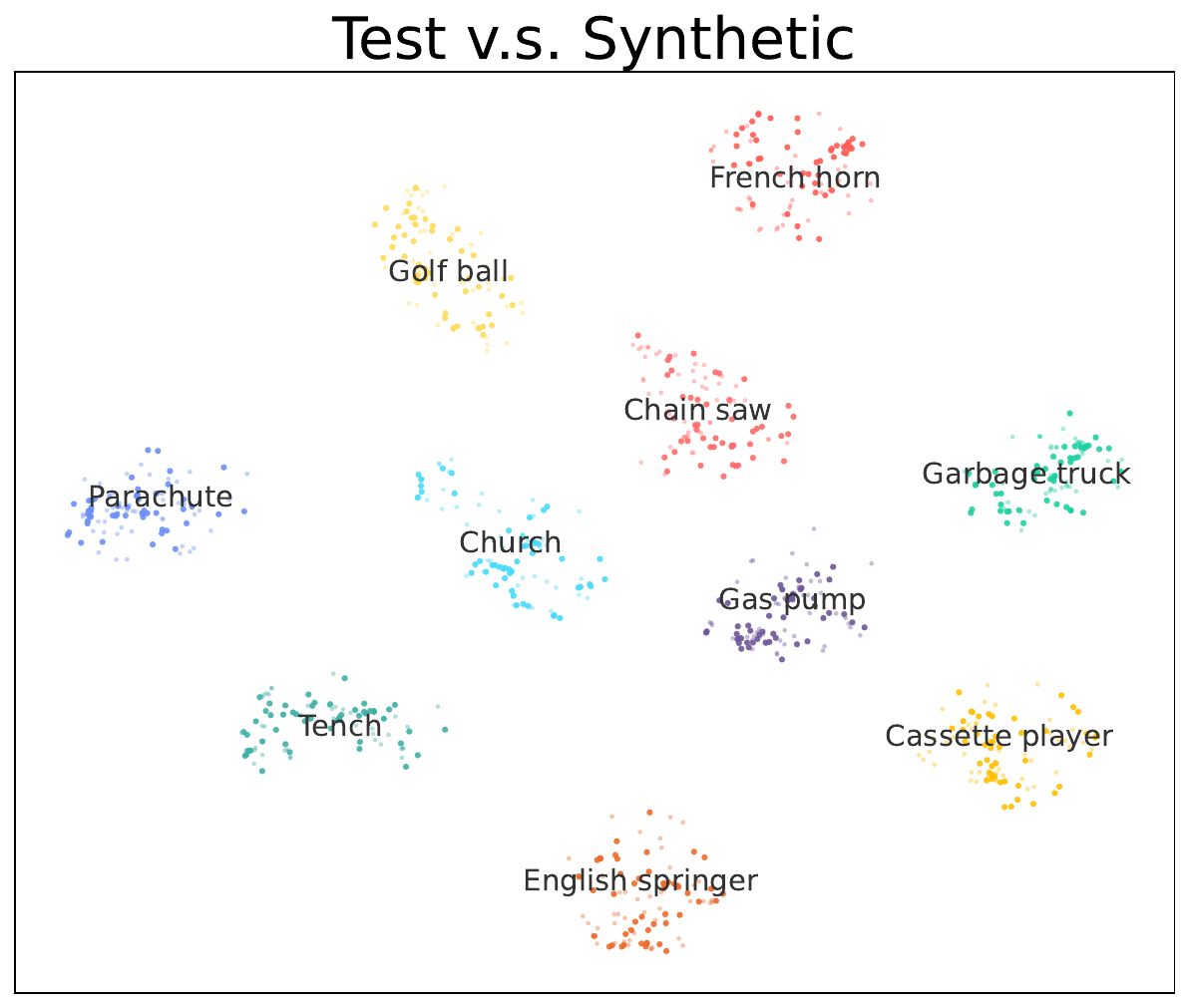}
    \caption{ImageNette-Test}
    \label{fig:div_nette2}
  \end{subfigure}
  \begin{subfigure}{0.23\linewidth}
    \includegraphics[width=\linewidth]{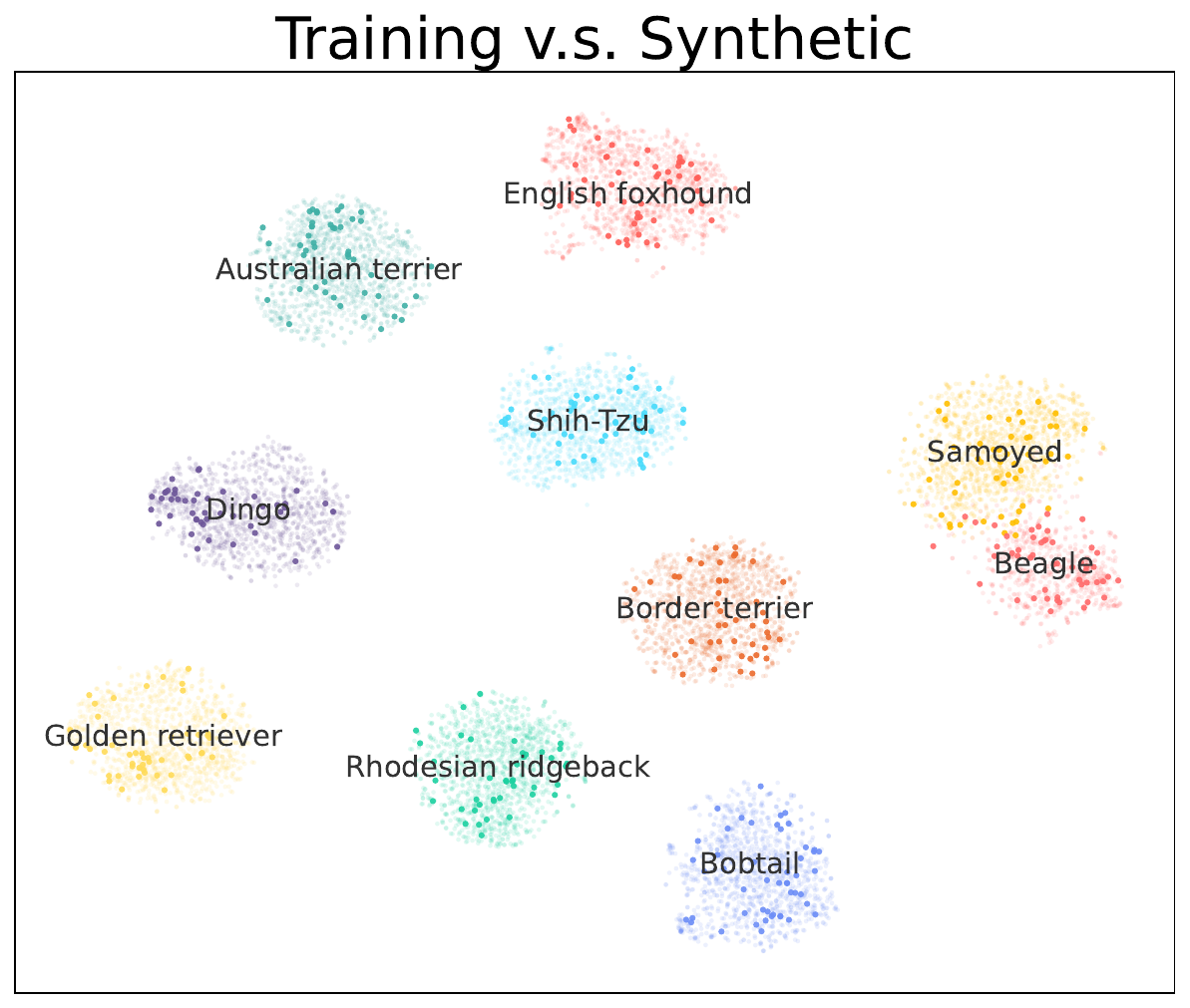}
    \caption{ImageWoof-Training}
    \label{fig:div_woof1}
  \end{subfigure}
  \begin{subfigure}{0.23\linewidth}
    \includegraphics[width=\linewidth]{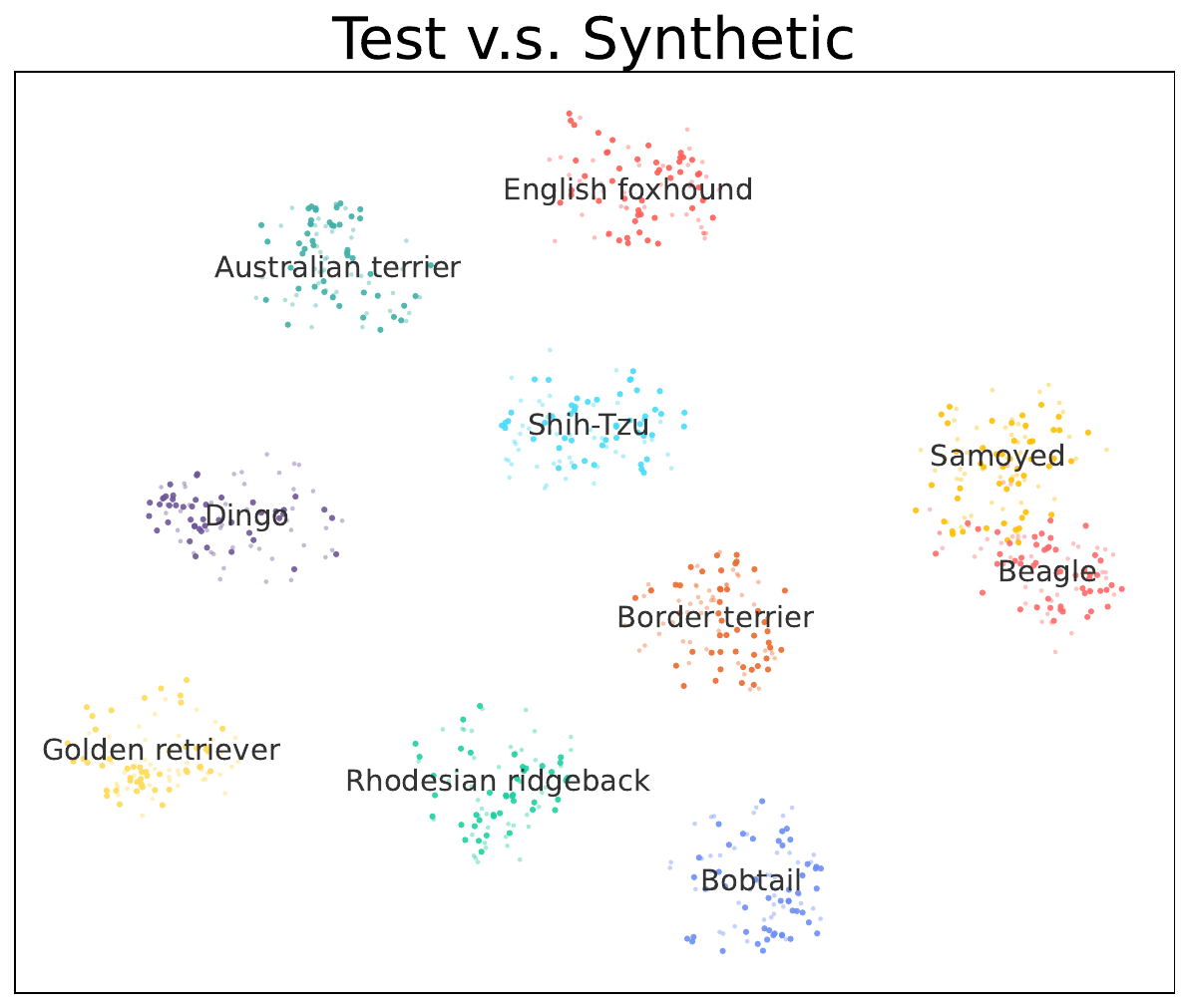}
    \caption{ImageWoof-Test}
    \label{fig:div_woof2}
  \end{subfigure}
  \caption{Visualization results of t-SNE. We compare the feature distribution of real (training and test set) versus synthetic data under IPC50. Dark/Light points: Synthetic/Real samples.}
  \label{fig:div}
\end{figure}

Across all benchmarks and analyses, DAP achieves competitive performance and surpasses existing DD methods. 
The improvements arise from the combined effect of diffusion priors: diversity and generalization priors contribute to broad coverage and cross-architecture transfer.
Meanwhile, the representativeness prior enforces information alignment with the real dataset. 
Moreover, DAP introduces no extra training cost, which makes the approach both efficient and scalable in scenarios where deployment architectures are agnostic.
We also discuss the sampling costs in \cref{apd:sts}.

\subsection{Ablation Experiments}
\label{sec:ablation}
% We further analyze the effect of different feature layer selections used in representativeness guidance. 
% \Cref{fig:sd_nette_layers,fig:sd_woof_layers} report results on Stable Diffusion, where we compare using shallow, middle, and deep layers. 
% The best performance is consistently obtained with mid-level features, which capture semantic abstractions while preserving distribution fidelity. 
% This demonstrates that mid-layer representations are most suitable for measuring representativeness in convolutional backbones.
% In contrast, \cref{fig:dit_nette_layers,fig:dit_woof_layers} presents results on DiT.
% Here, the optimal choice shifts to early transformer blocks (e.g., layer 4 or 8), as they retain critical low-level structural information while avoiding over-smoothing in deeper layers. 
% These observations highlight that the most effective feature depth depends on the backbone: mid-level for convolutional U-Nets, and early layers for transformer-based models.

We conduct ablation studies to investigate the influence of feature layer selection and guidance scale $\gamma$ in representativeness guidance.
% Figure 5 (a,b) shows the results for Stable Diffusion, where the best accuracy is achieved at mid-layer, indicating that intermediate layers of U-Net capture an appropriate balance between representativeness and distribution fidelity. 
% Additionally, Figures 5(c) and 5(d) report the results for DiT, where the strongest performance is achieved in the early transformer blocks (e.g., layer 4 or 8). 
% These results highlight that the optimal features in diffusion backbones for representativeness do not come from the last output layer, which differs from the inductive bias of distribution matching.
% We also investigate the role of the guidance scale $\gamma$ (see Fig. 5(e)). 
% The optimal guidance scale should remain below the classifier-free guidance (CFG) scale to avoid overwhelming the inherent generative diversity of DMs.
We observe from \cref{fig:sd_layers} that the ``Mid'' layer of the U-Net yields the strongest results. 
% indicating that mid-level features encode an appropriate trade-off between semantic abstraction and fidelity to the original data distribution.
% , while low-level layers mainly capture local textures and high-level layers tend to overfit reconstruction objectives. 
For DiT, the most effective features originate from the early transformer blocks (e.g., the 4th-12th layers shown in \cref{fig:dit_layers}), which outperform those in later layers. 
Despite this difference, both cases consistently reveal that the final output layers are suboptimal for representativeness guidance, as they prioritize distribution alignment over representativeness.
Regarding $\gamma$, we find that increasing its value generally enhances representativeness, as reflected by improved downstream accuracy in \cref{fig:sd_gamma,fig:dit_gamma} and the sector areas in \cref{fig:reps}, but excessive scales distort the gradient field of the sampling process and bias the generation trajectory, thereby diminishing the contributions of diversity and generalization priors and leading to performance degradation.

% We further analyze the effect of different feature layer selections used in representativeness guidance. 
% \Cref{fig:layers} compares using shallow, middle, and deep backbone layers for kernel-based similarity computation. 
% Interestingly, mid-level features yield the best results (e.g., AAA±BBB on ImageNette with IPC20), suggesting that mid-level representations balance semantic abstraction with distributional fidelity.
% This observation distinguishes DAP from traditional distribution matching methods, which often rely on the final layer.

% We observe that the output of the last backbone layer is not the most suitable feature for distribution matching.
% As shown in \cref{fig:layers}, whether it is the U-Net or transformer architecture, the classification accuracy obtained by matching the middle layer is better than that of the last layer.
% This observation differs from the intuition behind original distribution matching methods.
\begin{figure}[!ht]
  \centering
  \begin{subfigure}{0.24\linewidth}
    \includegraphics[width=\linewidth]{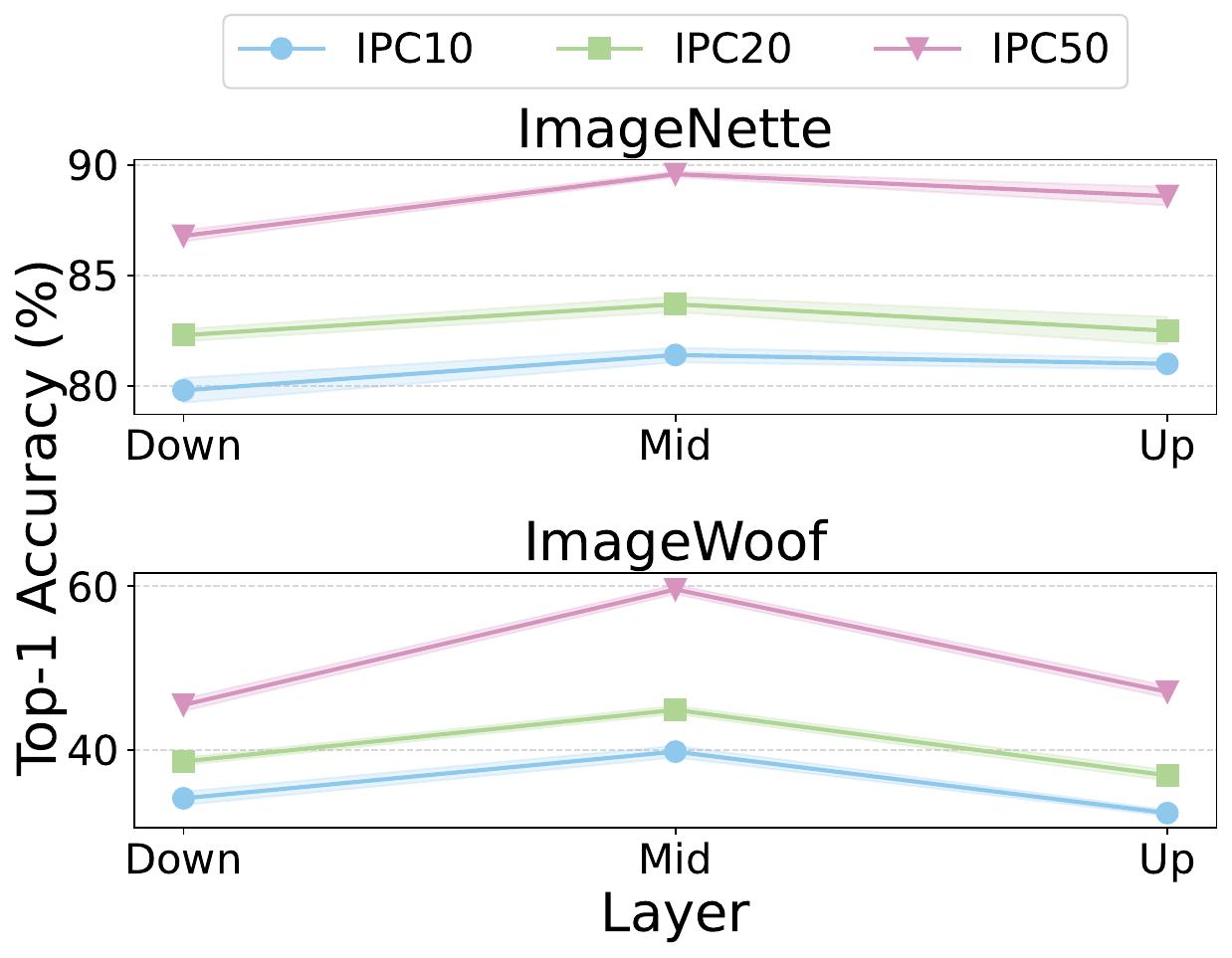}
    \caption{SD: U-Net}
    \label{fig:sd_layers}
  \end{subfigure}
  \begin{subfigure}{0.24\linewidth}
    \includegraphics[width=\linewidth]{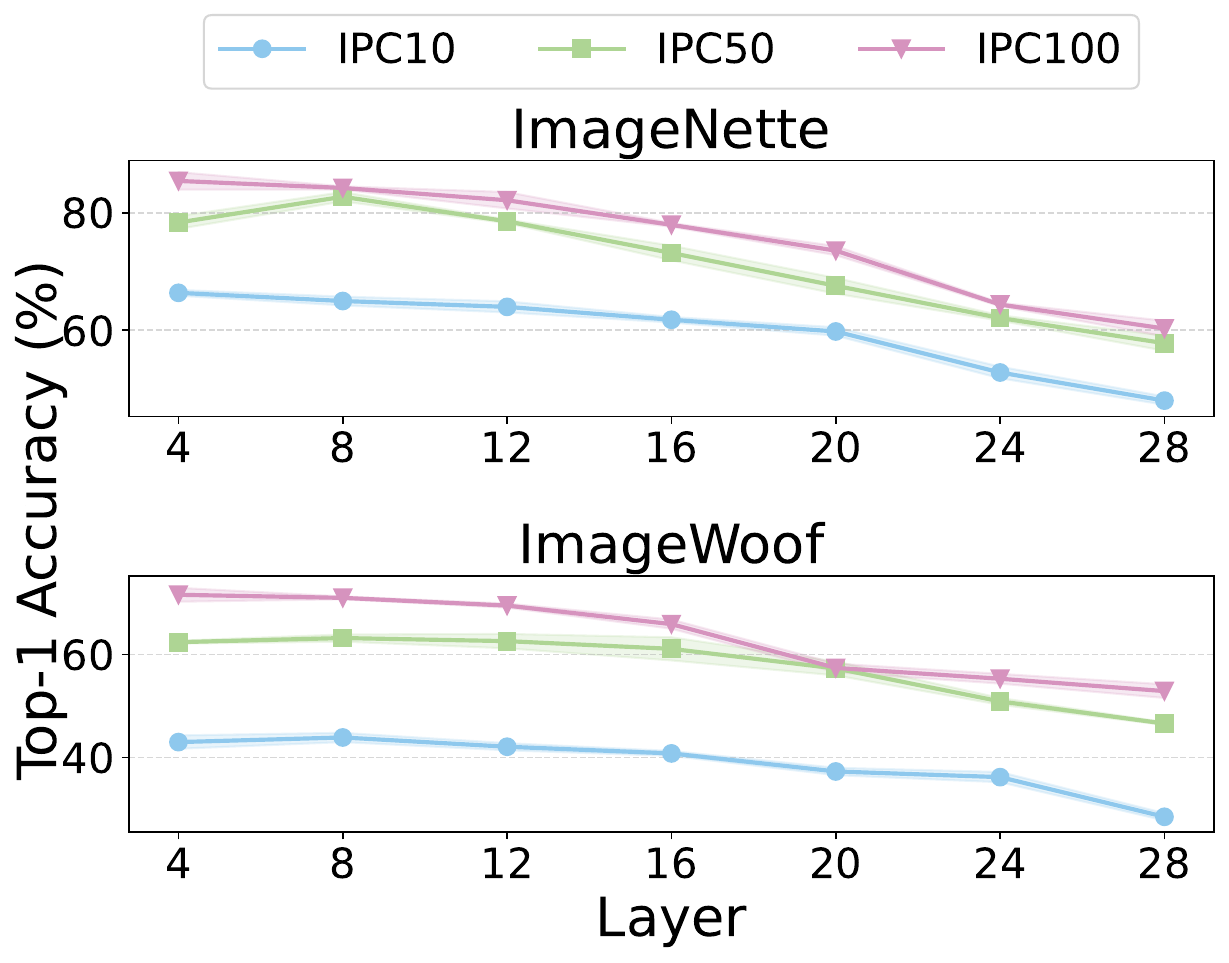}
    \caption{DiT: Transformer}
    \label{fig:dit_layers}
  \end{subfigure}
  \begin{subfigure}{0.24\linewidth}
    \includegraphics[width=\linewidth]{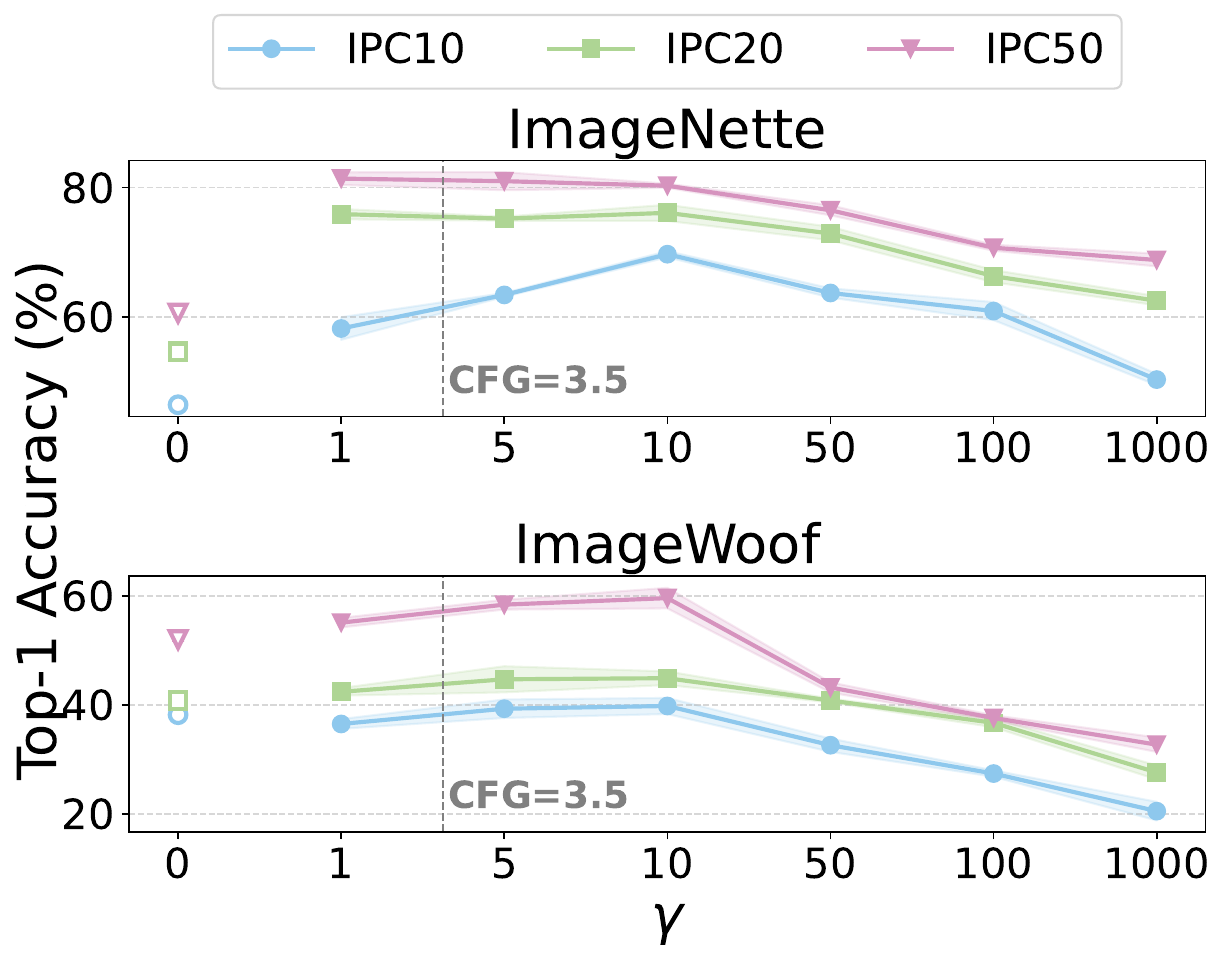}
    \caption{SD: Guidance scale}
    \label{fig:sd_gamma}
  \end{subfigure}
  \begin{subfigure}{0.24\linewidth}
    \includegraphics[width=\linewidth]{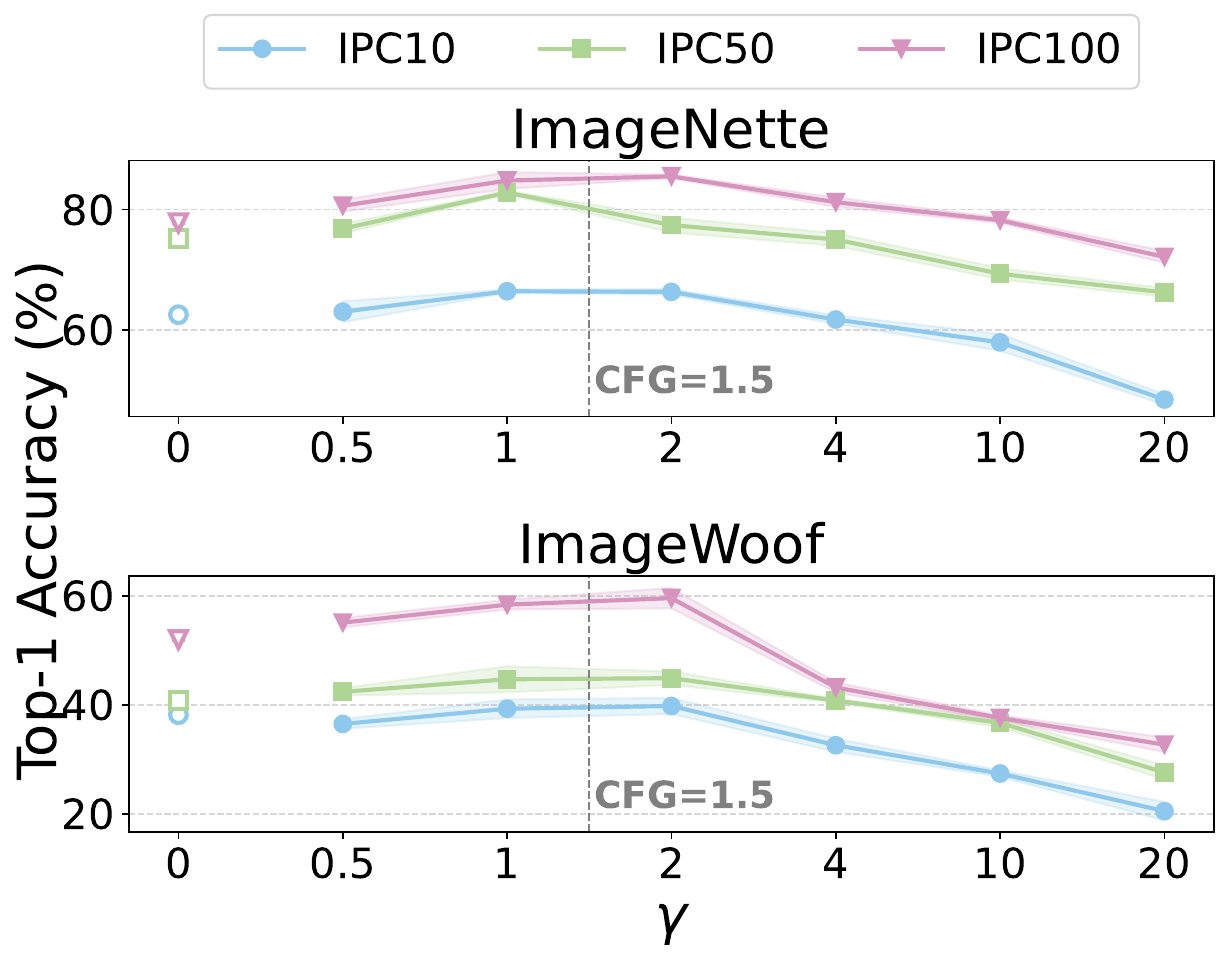}
    \caption{DiT: Guidance scale}
    \label{fig:dit_gamma}
  \end{subfigure}
  \caption{Ablation studies under \textbf{ResNet-18}. (a-b) Top-1 Accuracy under different backbone layer selection. (c-d) Top-1 Accuracy under varied guidance scale $\gamma$.}
  \label{fig:layers}
\end{figure}

\vspace{-1em}

\section{Conclusion}
\label{sec:conc}
This paper introduces Diffusion as Priors, a framework for dataset distillation that leverages the inherent priors of diffusion models. 
We identify diversity, generalization, and representativeness priors in diffusion models, and demonstrate how they can be integrated to guide the generation process. 
Representativeness prior is further formulated through kernel–based energy guidance, enabling the sampling process to align more information with real data. 
Extensive experiments on ImageNet-1K and its subsets demonstrated that DAP achieves state-of-the-art results, preserves generalization under scale reduction, transfers effectively across architectures, and remains robust under domain shifts, making the approach both efficient and scalable.
Future work may fall in extending diffusion priors to other powerful models (e.g., FLUX, Stable Diffusion 3.5) and exploring applications beyond vision, including language, video, and multimodal datasets.
\subsubsection*{Ethics statement}
This work uses only publicly available datasets, including ImageNet-1K and its subsets (ImageNette, ImageWoof, and ImageIDC). 
No human subjects, private data, or sensitive information are involved. 
% Our method focuses on dataset distillation using pre-trained diffusion models, and it does not introduce additional risks related to privacy, security, fairness, or legal compliance.

\subsubsection*{Reproducibility statement}
We have taken several measures to ensure reproducibility. 
All theoretical results are presented with complete proofs in \cref{apd:proofs}. 
Details of datasets, backbone architectures, hyper-parameters, and evaluation protocols are provided in \cref{apd:exp_details}, while \cref{alg:dap_sampling,alg:dap_sampling_tt} specify the guided sampling procedure. 
Additional visualizations and ablation results are included in \cref{apd:vis} to further support empirical findings. 
% Moreover, we submit the full implementation, including training and evaluation scripts, as anonymous source code in the supplementary material. 
These resources ensure that all theoretical and experimental results can be independently verified.

\subsubsection*{Acknowledgments}
This work was supported by the Fundamental and Interdisciplinary Disciplines Breakthrough Plan of the Ministry of Education of China (No. JYB2025XDXM101), CPSF (No. 2025M771546) and NSFC (No. 625B200186).

\bibliography{iclr2026_conference}
\bibliographystyle{iclr2026_conference}
\clearpage
\appendix
\section{Appendix}
% {\centering \textbf{APPENDIX}}\\

Appendix organization:
\begin{itemize}[label={}]
    \item \Cref{apd:bg}: Background
    \begin{itemize}[label={}]
        \item \ref{apd:dd}: Dataset distillation
        \item \ref{apd:gdd}: Generative dataset distillation
    \end{itemize}
    \item \Cref{apd:proofs}: Proofs
    \begin{itemize}[label={}]
        \item \ref{apd:validity}: Validity of kernel-induced distance
        \item \ref{apd:factoring}: Distance factorization
        % \item \ref{apd:rkhs}
    \end{itemize}
    
    % \item \Cref{apd:exp_details}: Experimental Setup
    % \begin{itemize}[label={}]
    %     \item \ref{apd:dab}: Datasets and benchmarks
    %     \item \ref{apd:protocols}: Models and evaluation protocols
    %     \item \ref{apd:implementation_details}: Other details
    % \end{itemize}
    
    \item \Cref{apd:discussions}: Discussions
    \begin{itemize}[label={}]
        \item \ref{apd:compatibility}: Compatibility of DAP
        \item \ref{apd:implementation}: Guidance on noisy latent
        \item \ref{apd:mk}: Kernel selection
        \item \ref{apd:recent_comparison}: Comparison with recent methods
        \item \ref{apd:casd}: Cross-datasets generalization
        \item \ref{apd:tt}: Early Stop strategy
        \item \ref{apd:sts}: Sampling-time scaling 
    \end{itemize}

    \item \Cref{apd:vis}: Visualizations
    \begin{itemize}[label={}]
        \item \ref{apd:rep_guide_vis}: Representativeness guidance scale
        \item \ref{apd:visual_comparison}: Representativeness comparison
    \end{itemize}

    \item \Cref{apd:llm}: Disclosure of the use of Large Language Models
\end{itemize}

\subsection{Background}
\label{apd:bg}
\subsubsection{Dataset Distillation}
\label{apd:dd}
Systematic analysis of research in dataset distillation reveals two paradigms: a) traditional matching-based approaches focused on pixel-level optimization, and b) modern generative frameworks emphasizing distribution learning~\citep{yu2023dataset,lei2023comprehensive,liu2025evolution}. Traditional methods adopt an "imitation" philosophy, involving continuous pixel optimization to align model behavior, such as gradients, feature distributions, or checkpoints between synthetic and original data~\citep{zhao2021datasetcondensation, wang2022cafe,zhao2023idm,deng2024iid}.
In contrast, generative frameworks prioritize improving dataset quality through fidelity and diversity metrics.
These approaches extract key informational patterns from source data, enhancing the realism and generalization of distilled datasets.
We will examine the related work in the following subsection.

\subsubsection{Generative Dataset Distillation}
\label{apd:gdd}
Generative dataset distillation utilizes models such as Generative Adversarial Networks (GANs) and Diffusion models (DMs) to synthesize compact and informative datasets. 
Unlike pixel optimization methods, which are limited to small-scale, low-resolution data due to computational costs, generative techniques support large-scale, high-resolution applications. 
This flexibility promotes sample diversity and better generalization across model architectures.
This section reviews the two primary categories of generative dataset distillation methods: GAN-based and Diffusion-based approaches.

\paragraph{GAN-based approaches.}
GANs serve as foundation models for dataset distillation In early research.
DiM~\citep{wang2023dim} condenses dataset information into a conditional GAN, enabling sample synthesis from random noise during deployment.
GLaD~\citep{cazenavette2023generalizing} enhances cross-architecture generalization by distilling data into the latent space of pre-trained models like StyleGAN~\citep{karras2019style}.
H-PD~\citep{zhong2024hierarchical} introduces hierarchical parameterization distillation, optimizing across latent spaces in GANs to capture hierarchical features from the initial latent space to the pixel space.

\paragraph{Diffusion-based approaches.}
Diffusion-based methods leverage diffusion models to improve dataset distillation. 
For example, Minimax diffusion~\citep{gu2024efficient} fine-tunes a diffusion model with minimax criteria to boost representativeness and diversity.
VLCP~\citep{zou2025dataset} constructs text prototypes to enrich the labels with semantic information and then fine-tunes the DMs with image-text pairs.
D$^4$M~\citep{su2024d4m} disentangles feature extraction and generation via Training-Time Matching (TTM) with category prototypes.
IGD~\citep{chen2025influence} guides the sampling process of pre-trained diffusion models using a function combining trajectory influence and diversity constraints, generating synthetic data without retraining.
Additionally, MGD$^3$~\citep{chan2025mgd} enhances diversity by identifying latent space modes and directing data toward them during sampling.
{In order to enhance the objective and conditional consistency of the distillation process, CaO$_2$~\citep{wang2025cao} employs target-guided sample selection to optimize the latent conditionally.
D$^3$HR~\citep{zhao2025taming} utilizes DDIM inversion to map the image latents to the Gaussian domain, then aligns the representative latents with the high-normality Gaussian distribution with their proposed sampling scheme.

\subsection{Proofs}
\label{apd:proofs}

\subsubsection{Validity of kernel-induced distance}
\label{apd:validity}
\begin{theorem}
Let $\mathcal{K}: \mathcal{X} \times \mathcal{X} \to \mathbb{R}$ be a PSD kernel. 
Then the induced distance measure
\begin{equation}
    \mathcal{D}_{\mathcal{K}}(x, y)=\big[\mathcal{K}(x,x)+\mathcal{K}(y,y)-2\mathcal{K}(x,y)\big]^{1/2}
\end{equation}
satisfies: 
\begin{enumerate}
    \item \textbf{Non-negativity:} $\mathcal{D}_{\mathcal{K}}(x,y)\geq0$, and $\mathcal{D}_{\mathcal{K}}(x,y)=0$ if and only if $x=y$.
    \item \textbf{Symmetry:} $\mathcal{D}_{\mathcal{K}}(x, y)=\mathcal{D}_{\mathcal{K}}(y,x)$.
    \item \textbf{Triangle inequality:} For any $x,y,z\in \mathcal{X}$,  
    $\mathcal{D}_{\mathcal{K}}(x,z)+\mathcal{D}_{\mathcal{K}}(z,y)\geq \mathcal{D}_{\mathcal{K}}(x,y)$.
\end{enumerate}
\end{theorem}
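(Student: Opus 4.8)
The plan is to reduce all three metric axioms to the corresponding properties of a genuine Hilbert-space norm, by first exhibiting $\mathcal{D}_{\mathcal{K}}$ as the distance between feature vectors. Since $\mathcal{K}$ is PSD, Mercer's theorem (equivalently, the Moore--Aronszajn construction of the RKHS) supplies a feature map $\Phi : \mathcal{X} \to \mathcal{H}$ into a Hilbert space $\mathcal{H}$ with the reproducing property $\mathcal{K}(x,y) = \langle \Phi(x), \Phi(y)\rangle_{\mathcal{H}}$. Substituting this into the definition and expanding gives
\[
\mathcal{D}_{\mathcal{K}}(x,y)^2 = \langle \Phi(x),\Phi(x)\rangle_{\mathcal{H}} + \langle \Phi(y),\Phi(y)\rangle_{\mathcal{H}} - 2\langle \Phi(x),\Phi(y)\rangle_{\mathcal{H}} = \|\Phi(x)-\Phi(y)\|_{\mathcal{H}}^2,
\]
so that $\mathcal{D}_{\mathcal{K}}(x,y) = \|\Phi(x)-\Phi(y)\|_{\mathcal{H}}$. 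This single identity is the crux: once $\mathcal{D}_{\mathcal{K}}$ is recognized as the pullback under $\Phi$ of the metric induced by the Hilbert norm, the axioms follow almost mechanically. Note that this is exactly the factorization asserted in \cref{thm:facto}, so the two results share their analytic core.

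With this representation in hand, I would dispatch the easy axioms first. Symmetry is immediate from the symmetry of the inner product, or simply from the manifest invariance of the defining formula under swapping $x$ and $y$. The triangle inequality is inherited directly from that of the Hilbert norm: writing $u = \Phi(x)-\Phi(z)$ and $v = \Phi(z)-\Phi(y)$, we have $\Phi(x)-\Phi(y) = u+v$, whence $\|u+v\|_{\mathcal{H}} \le \|u\|_{\mathcal{H}} + \|v\|_{\mathcal{H}}$ yields $\mathcal{D}_{\mathcal{K}}(x,y) \le \mathcal{D}_{\mathcal{K}}(x,z) + \mathcal{D}_{\mathcal{K}}(z,y)$. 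Non-negativity and the forward implication $x=y \Rightarrow \mathcal{D}_{\mathcal{K}}(x,y)=0$ are equally trivial, since $\|\cdot\|_{\mathcal{H}} \ge 0$ and $\Phi(x)=\Phi(x)$.

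The one step that genuinely requires care -- and which I expect to be the main obstacle -- is the reverse implication in the equality case: $\mathcal{D}_{\mathcal{K}}(x,y)=0 \Rightarrow x=y$. The identity only yields $\Phi(x)=\Phi(y)$, so to recover $x=y$ I need the feature map $\Phi$ to be injective, equivalently the eigenfunctions in the Mercer expansion must separate the points of $\mathcal{X}$. For a merely PSD kernel this can fail; a degenerate counterexample is the constant kernel, for which $\mathcal{D}_{\mathcal{K}}\equiv 0$. I would therefore lean on the Mercer setting assumed in the surrounding text: under the continuity and compactness hypotheses that guarantee a spectral expansion $\mathcal{K}(x,y)=\sum_i \lambda_i \varphi_i(x)\varphi_i(y)$ with $\lambda_i>0$, the map $x \mapsto (\sqrt{\lambda_i}\,\varphi_i(x))_i$ is injective precisely when the eigensystem separates points. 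This is the natural nondegeneracy condition -- effectively the reading of ``PSD'' as strictly positive definite that the statement tacitly requires -- and I would make it explicit. With that assumption the equality case closes, completing the verification that $\mathcal{D}_{\mathcal{K}}$ is a bona fide metric.
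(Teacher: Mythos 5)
Your proof takes essentially the same route as the paper's: invoke Mercer's theorem (equivalently the Moore--Aronszajn construction) to write $\mathcal{K}(x,y)=\langle\Phi(x),\Phi(y)\rangle_{\mathcal{H}}$, expand the definition to obtain $\mathcal{D}_{\mathcal{K}}(x,y)=\|\Phi(x)-\Phi(y)\|_{\mathcal{H}}$, and pull back the norm axioms through $\Phi$. The one place you go beyond the paper is the equality case of non-negativity: the paper's proof asserts that $\|\phi(x)-\phi(y)\|_{\mathcal{H}}=0$ gives $\phi(x)=\phi(y)$, ``which implies $x=y$,'' with no justification of that final step, whereas you correctly observe that this requires injectivity of the feature map, which can fail for a merely PSD kernel (your constant-kernel counterexample), and you make the needed strict positive-definiteness / point-separation hypothesis explicit. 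So your proposal is correct modulo that explicitly flagged assumption, and is in fact more rigorous than the paper's own argument on precisely the point where the paper's proof has a gap.
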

\begin{proof}
Since $\mathcal{K}$ is positive semi-definite, by Mercer’s theorem~\citep{mercer1909xvi} there exists a reproducing kernel Hilbert space $\mathcal{H}$ and a feature map $\phi:\mathcal{X}\to\mathcal{H}$ such that
\begin{equation}
    \mathcal{K}(x,y) = \langle \phi(x), \phi(y) \rangle_{\mathcal{H}}.
\end{equation}
Therefore,
\begin{align}
    \mathcal{D}_{\mathcal{K}}(x,y)^2 &= \mathcal{K}(x,x)+\mathcal{K}(y,y)-2\mathcal{K}(x,y) \\
    &= \langle \phi(x), \phi(x) \rangle_{\mathcal{H}} + \langle \phi(y), \phi(y) \rangle_{\mathcal{H}} 
    - 2 \langle \phi(x), \phi(y) \rangle_{\mathcal{H}} \\
    &= \|\phi(x) - \phi(y)\|_{\mathcal{H}}^2.
\end{align}
Thus,
\begin{equation}
    \mathcal{D}_{\mathcal{K}}(x,y) = \|\phi(x) - \phi(y)\|_{\mathcal{H}}.
\end{equation}

Since the norm in Hilbert space $\|\cdot\|_{\mathcal{H}}$ is a valid metric, it satisfies:
\begin{itemize}
    \item Non-negativity and identity of indiscernibles: $\|\phi(x)-\phi(y)\|\geq0$, and $\|\phi(x)-\phi(y)\|=0$ iff $\phi(x)=\phi(y)$, which implies $x=y$.
    \item Symmetry: $\|\phi(x)-\phi(y)\| = \|\phi(y)-\phi(x)\|$.
    \item Triangle inequality: $\|\phi(x)-\phi(y)\| \leq \|\phi(x)-\phi(z)\| + \|\phi(z)-\phi(y)\|$ for any $z$.
\end{itemize}
Therefore, $\mathcal{D}_{\mathcal{K}}$ is a valid metric induced by the kernel $\mathcal{K}$.
\end{proof}

\subsubsection{Distance factorization}
\label{apd:factoring}
\begin{theorem}
\label{thm:facto_apd}
Let $\mathcal{K}: \mathcal{X}\times\mathcal{X}\to\mathbb{R}$ be a  Mercer kernel, then the induced chordal distance $\mathcal{D}_\mathcal{K}$ can be factorized as $\mathcal{D}_\mathcal{K}(x,y) = d\circ(\phi\times \phi)(x,y)$, where $\phi$ is a feature mapping and $d$ is a simple norm in Hilbert space.
\end{theorem}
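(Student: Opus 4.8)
The plan is to construct the factorization directly from the spectral structure that Mercer's theorem attaches to a Mercer kernel, rather than merely asserting the existence of an abstract feature map. First I would invoke Mercer's theorem: since $\mathcal{K}$ is continuous, symmetric, and positive semi-definite on the compact domain $\mathcal{X}$, it admits a uniformly convergent spectral expansion $\mathcal{K}(x,y)=\sum_{i\ge 1}\lambda_i\,\psi_i(x)\psi_i(y)$, where the $\lambda_i\ge 0$ are the eigenvalues and $\{\psi_i\}$ the associated orthonormal eigenfunctions. This is the step that consumes the ``Mercer'' (rather than merely PSD) hypothesis, and it is what upgrades the abstract reproducing property used in \cref{apd:validity} into an explicit, coordinate-wise feature map.

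Next I would define the feature mapping $\phi(x)=(\sqrt{\lambda_i}\,\psi_i(x))_{i\ge 1}$ taking values in the Hilbert space $\mathcal{H}=\ell^2$, and verify the reproducing identity $\mathcal{K}(x,y)=\langle\phi(x),\phi(y)\rangle_{\mathcal{H}}$ by term-by-term comparison with the Mercer series. The only analytic point to check is that $\phi(x)\in\mathcal{H}$ and that the inner product converges; both follow from $\mathcal{K}(x,x)=\sum_i\lambda_i\,\psi_i(x)^2<\infty$, which is exactly the convergence of the diagonal of the Mercer series.

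With the reproducing identity in hand, the factorization is a one-line algebraic substitution via bilinearity of the inner product:
\begin{equation}
\mathcal{D}_{\mathcal{K}}(x,y)^2=\langle\phi(x),\phi(x)\rangle_{\mathcal{H}}+\langle\phi(y),\phi(y)\rangle_{\mathcal{H}}-2\langle\phi(x),\phi(y)\rangle_{\mathcal{H}}=\|\phi(x)-\phi(y)\|_{\mathcal{H}}^2.
\end{equation}
Taking square roots gives $\mathcal{D}_{\mathcal{K}}(x,y)=\|\phi(x)-\phi(y)\|_{\mathcal{H}}$. Defining the simple norm-induced distance $d(u,v)=\|u-v\|_{\mathcal{H}}$ on $\mathcal{H}\times\mathcal{H}$, this reads precisely as $\mathcal{D}_{\mathcal{K}}=d\circ(\phi\times\phi)$, the claimed composition in which all of the kernel's complexity is absorbed into the (possibly infinite-dimensional, nonlinear) map $\phi$, while $d$ remains a plain Hilbert-space norm difference.

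The main obstacle I anticipate is not the algebra but the analytic bookkeeping behind Mercer's theorem: one must confirm the regularity hypotheses (continuity of $\mathcal{K}$ and compactness of $\mathcal{X}$) under which the spectral series converges, so that $\phi$ is well-defined into $\ell^2$ and the term-by-term manipulations are legitimate. Once the Mercer expansion is granted, the remainder is essentially the same polarization computation that validated $\mathcal{D}_{\mathcal{K}}$ as a metric in the preceding theorem, merely reorganized to isolate the feature map $\phi$ from the ambient norm $d$.
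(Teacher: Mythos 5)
Your proposal is correct, but it takes a more constructive route than the paper's own proof. The paper's argument (in \cref{apd:factoring}) never touches the spectral expansion: it invokes only the reproducing property of the RKHS $\mathcal{H}_{\mathcal{K}}$, i.e.\ the existence of the canonical feature map $\Phi$ with $\mathcal{K}(x,y)=\langle\Phi(x),\Phi(y)\rangle_{\mathcal{H}_{\mathcal{K}}}$, and then performs exactly the polarization computation you do, concluding $\mathcal{D}_{\mathcal{K}}(x,y)=\|\Phi(x)-\Phi(y)\|_{\mathcal{H}_{\mathcal{K}}}$. You instead build the feature map explicitly from Mercer's theorem, $\phi(x)=(\sqrt{\lambda_i}\,\psi_i(x))_{i\ge 1}\in\ell^2$, and verify the reproducing identity term by term; the two feature maps land in isometrically isomorphic spaces, and the closing algebra is identical. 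What your version buys: an explicit, coordinate-wise $\phi$, and it genuinely consumes the ``Mercer'' hypotheses (continuity of $\mathcal{K}$, compactness of $\mathcal{X}$) that the theorem statement advertises, whereas the paper's proof, as written, uses nothing beyond positive semi-definiteness. What the paper's version buys: generality and brevity --- the abstract RKHS feature map exists for any PSD kernel with no regularity or compactness assumptions (so the factorization survives even when the spectral series is unavailable), and it sidesteps precisely the analytic bookkeeping (uniform convergence, $\phi(x)\in\ell^2$) that you correctly identify as the main overhead of the spectral construction.
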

\begin{proof}
By the reproducing property of the reproducing kernel Hilbert space $\mathcal{H}_\mathcal{K}$, we have
\begin{equation}
    \mathcal{K}(x,y) = \langle \Phi(x), \Phi(y)\rangle_{\mathcal{H}_{\mathcal{K}}}.
\end{equation}

Therefore,
\begin{align}
\mathcal{D}_\mathcal{K}(x,y)^2 
&= \mathcal{K}(x,x)+\mathcal{K}(y,y)-2\mathcal{K}(x,y) \\
&= \langle \Phi(x), \Phi(x)\rangle_{\mathcal{H}_{\mathcal{K}}} 
 + \langle \Phi(y), \Phi(y)\rangle_{\mathcal{H}_{\mathcal{K}}} 
 - 2 \langle \Phi(x), \Phi(y)\rangle_{\mathcal{H}_{\mathcal{K}}} \\
&= \|\Phi(x)-\Phi(y)\|_{\mathcal{H}_{\mathcal{K}}}^2.
\end{align}
Taking square roots yields
\begin{equation}
    \mathcal{D}_\mathcal{K}(x,y) = \|\Phi(x)-\Phi(y)\|_{\mathcal{H}_{\mathcal{K}}}.
\end{equation}
If we set $f=\Phi$ and $d(u,v)=\|u-v\|_{\mathcal{H}_{\mathcal{K}}}$, then clearly
\begin{equation}
    \mathcal{D}_\mathcal{K}(x,y) = d(f(x),f(y)) = d\circ(f\times f)(x,y).
\end{equation}
\end{proof}

\subsection{Discussions}
\label{apd:discussions}

\subsubsection{Compatibility of DAP}
\label{apd:compatibility}
\paragraph{Compatibility with codebase.}
DAP is fully implemented using the native components of the diffusion model itself, without relying on any additional modules or external dependencies. 
This carefully designed approach not only preserves complete compatibility with existing diffusion architectures but also ensures that the method can be readily adopted across diverse codebases. 
As a result, DAP can be seamlessly incorporated into widely used diffusion libraries, such as the \texttt{Diffusers} library in Hugging Face, thereby promoting both reproducibility and broad applicability in contemporary research and practical deployment scenarios.

\begin{wraptable}{r}{0.4\textwidth}
\centering
\vspace{-1em}
\caption{The source of representativeness knowledge from different generative DD methods.}
\begin{tabular}{ll}
\toprule
Method & \textcolor{colorC}{Representativeness} \\
\midrule
Minimax & Training Loss \\
D$^4$M  & Clustering \\
MGD$^3$ & Clustering \\
IGD & Pre-trained Classifier \\
CaO$_2$ & Pre-trained Classifier \\
D$^3$HR & Statistic Metric \\
DAP & Diffusion Prior \\
\bottomrule
\end{tabular}
\vspace{-1em}
\label{tab:rep_source}
\end{wraptable}
\paragraph{Compatibility with other methods.}
Since the DAP pipeline works orthogonally with the existing approaches, it exhibits strong compatibility, allowing it to complement them without interference. 
Before empirical results, we analyze the sources of representativeness priors employed by different methods in \cref{tab:rep_source}: Minimax introduces representativeness via fine-tuning under the supervision of the proposed training loss, D$^4$M and MGD$^3$ capture representativeness through clustering algorithms, IGD and CaO$_2$ uses pre-trained classifier, D$^3$HR calculates the key statistics including mean, standard deviation and skewness, while DAP exploits the representativeness priors embedded in diffusion models. 
To validate the compatibility of DAP, we incorporate it into Minimax for instance (see \cref{tab:minimax}). 
The addition of DAP consistently enhances the performance of distilled samples. 
These results indicate that DAP can serve as a versatile and modular enhancement, improving the performance of DD approaches while preserving its intrinsic advantages.

\begin{table}[!ht]
\centering
\caption{Top-1 Accuracy on \textbf{ImageNette} and \textbf{ImageWoof}. Evaluated with \textbf{hard-label protocol}.}
\resizebox{\textwidth}{!}{%
\begin{tabular}{cccccccc}
\toprule
\multirow{2}{*}{Model} & \multirow{2}{*}{IPC} & \multicolumn{3}{c}{ImageNette} & \multicolumn{3}{c}{ImageWoof} \\
\cmidrule(r){3-5} \cmidrule(r){6-8}
 &  & Minimax & Minimax-IGD & Minimax-DAP & Minimax & Minimax-IGD & Minimax-DAP\\
\midrule
\multirow{2}{*}{ConvNet-6} & 10 & $58.2 _{\pm 0.9}$ & $\underline{58.8} _{\pm 1.0}$ & $\textbf{64.2} _{\pm 1.4}$ & $33.5 _{\pm 1.4}$ & $\underline{36.2} _{\pm 1.6}$ & $\textbf{38.2} _{\pm 0.8}$ \\
 & 50 & $76.9 _{\pm 0.9}$ & $\underline{82.3} _{\pm 0.8}$ & $\textbf{83.5} _{\pm 0.6}$ & $50.7 _{\pm 1.8}$ & $\underline{55.7} _{\pm 0.8}$ & $\textbf{55.9} _{\pm 1.2}$  \\
\midrule
\multirow{2}{*}{ResNetAP-10} & 10 & $63.2 _{\pm 1.0}$ & $\underline{63.5} _{\pm 1.1}$ & $\textbf{66.1} _{\pm 1.7}$ & $39.6 _{\pm 1.2}$ & $\underline{43.3} _{\pm 0.3}$ & $\textbf{43.5} _{\pm 0.6}$  \\
 & 50 & $78.2 _{\pm 0.7}$ & $\underline{82.3} _{\pm 1.1}$ & $\textbf{83.7} _{\pm 1.3}$ & $59.8 _{\pm 0.8}$ & $\underline{65.0} _{\pm 0.8}$ & $\textbf{66.4} _{\pm 2.5}$  \\
\midrule
\multirow{2}{*}{ResNet-18} & 10 & $64.9 _{\pm 0.6}$ & $\underline{66.2} _{\pm 1.2}$ & $\textbf{66.9} _{\pm 0.9}$ & $42.2 _{\pm 1.2}$ & $\textbf{47.2} _{\pm 1.6}$ & $\underline{45.4} _{\pm 1.0}$  \\
 & 50 & $78.1 _{\pm 0.6}$ & $\underline{82.0} _{\pm 0.3}$ & $\textbf{82.5} _{\pm 0.7}$ & $60.5 _{\pm 0.5}$ & $\underline{65.4} _{\pm 1.8}$ & $\textbf{65.8} _{\pm 1.3}$  \\
\bottomrule
\end{tabular}%
}
\label{tab:minimax}
\end{table}

\subsubsection{Guidance on noisy latent}
\label{apd:implementation}
We adopt the VP-SDE combined with the DDIM sampling algorithm, which enables a deterministic and efficient approximation of the reverse diffusion trajectory. 
A detail in this setting is the choice of the feature representation for prior guidance. 
Under DDIM dynamics, the conditional estimate of $\hat{z}_{0|t}$ can be expressed as an affine transformation of the current noisy state: $\hat{z}_{0|t}=\alpha_t z_t - \beta_t \epsilon_\theta(z_t,t)$, where $\alpha_t$, $\beta_t$ are deterministic coefficients and $\epsilon_\theta$ is the score predictor~\citep{song2021scorebased}.
From the perspective of reverse dynamics, this relation holds as a first-order approximation under linearization, implying that the gradient fields induced by guiding $z_t$ and guiding $\hat{z}_{0|t}$ are approximately equivalent (see \cref{fig:guidance}).
Hence, instead of explicitly computing the denoised estimation $\hat{z}_{0|t}$, we directly apply guidance on the noisy latent $z_t$, while avoiding the computational overhead of explicit decoding process at each timestep.
\begin{figure}[!ht]
    \centering
    \includegraphics[width=0.9\linewidth]{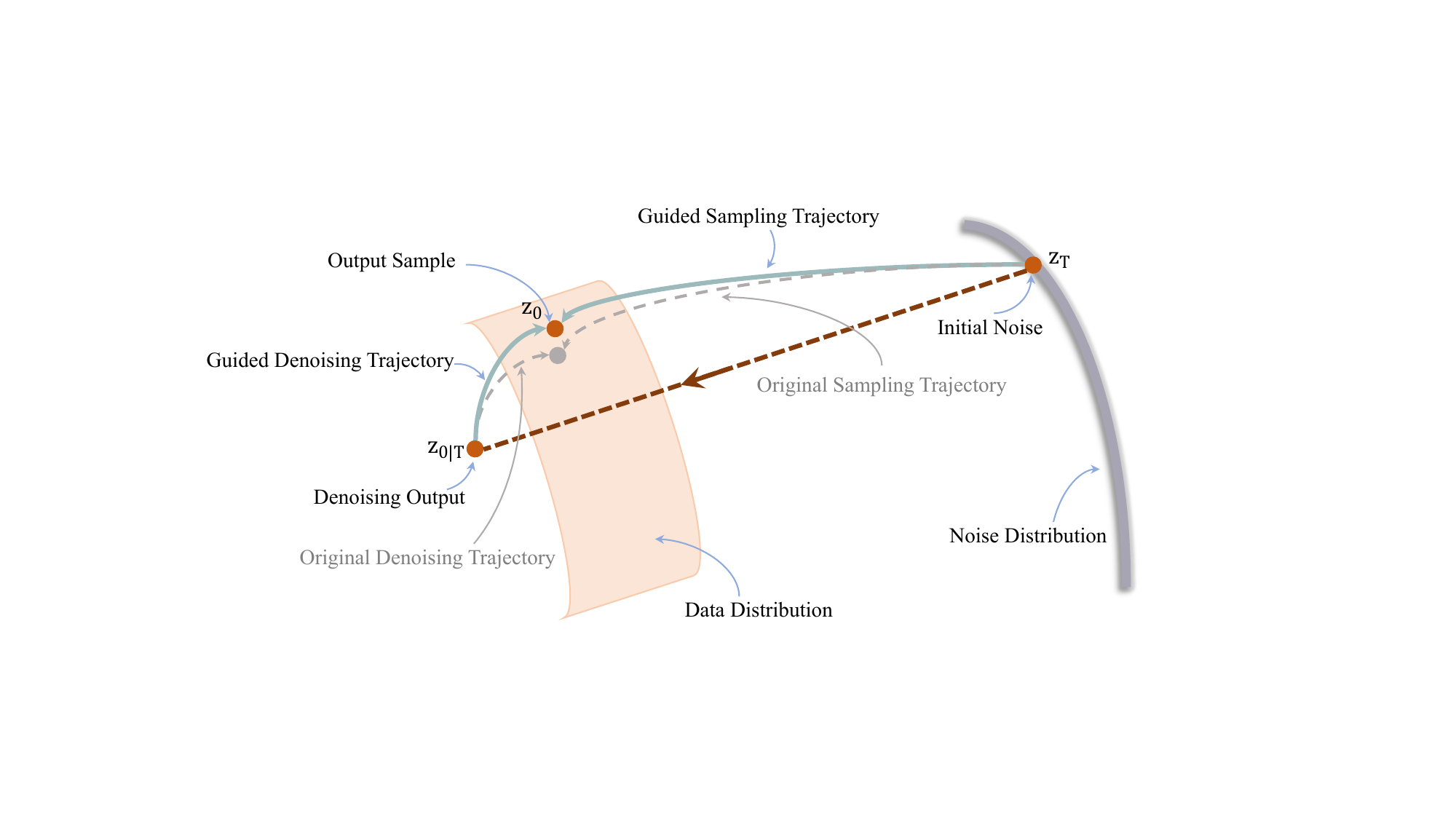}
    \caption{A sketch map of the relationship between $\hat{z}_{0|t}$ and $z_t$.}
    \label{fig:guidance}
\end{figure}

\subsubsection{Kernel selection}
\label{apd:mk}
Besides the linear kernel, we also install our DAP with other Mercer kernels, such as the Radial Basis Function kernel (RBF, also known as the Gaussian kernel):
\begin{equation}
\label{eq:rbf}
    \mathcal{K}(x, y) = \exp\Big(-\frac{\|x-y\|^2}{2\sigma^2}\Big).
\end{equation}
The bandwidth $\sigma$ controls the sensitivity: small $\sigma$ emphasizes fine-grained local features, whereas large $\sigma$ approaches the behavior of the linear kernel.
Based on \cref{eq:rbf}, the induced distance becomes
\begin{equation}
    \|\textcolor{colorC}{\phi}(x) - \textcolor{colorC}{\phi}(y)\|^2_{\mathcal{H}} = \mathcal{K}(x,x) + \mathcal{K}(y,y) - 2\mathcal{K}(x,y) = 2 - 2\exp\Big(-\frac{\|x-y\|^2}{2\sigma^2}\Big),
\end{equation}
which is a non-linear function of the Euclidean distance.

% As drawn in \cref{fig:kernelplot}, the RBF-induced distance grows quickly for small differences and saturates for large differences, effectively compressing large deviations while being sensitive to local differences.
% \Cref{tab:ker_sel} reveals that the distillation performance of RBF kernel is comparable to that of linear kernel.
% To avoid introducing additional hyperparameters, we recommend using the linear kernel due to its simplicity and tractability.

\begin{figure}[!ht]
    \centering
    \adjustbox{valign=t}{%
        \begin{minipage}{0.34\textwidth}
            \centering
            \includegraphics[width=\linewidth]{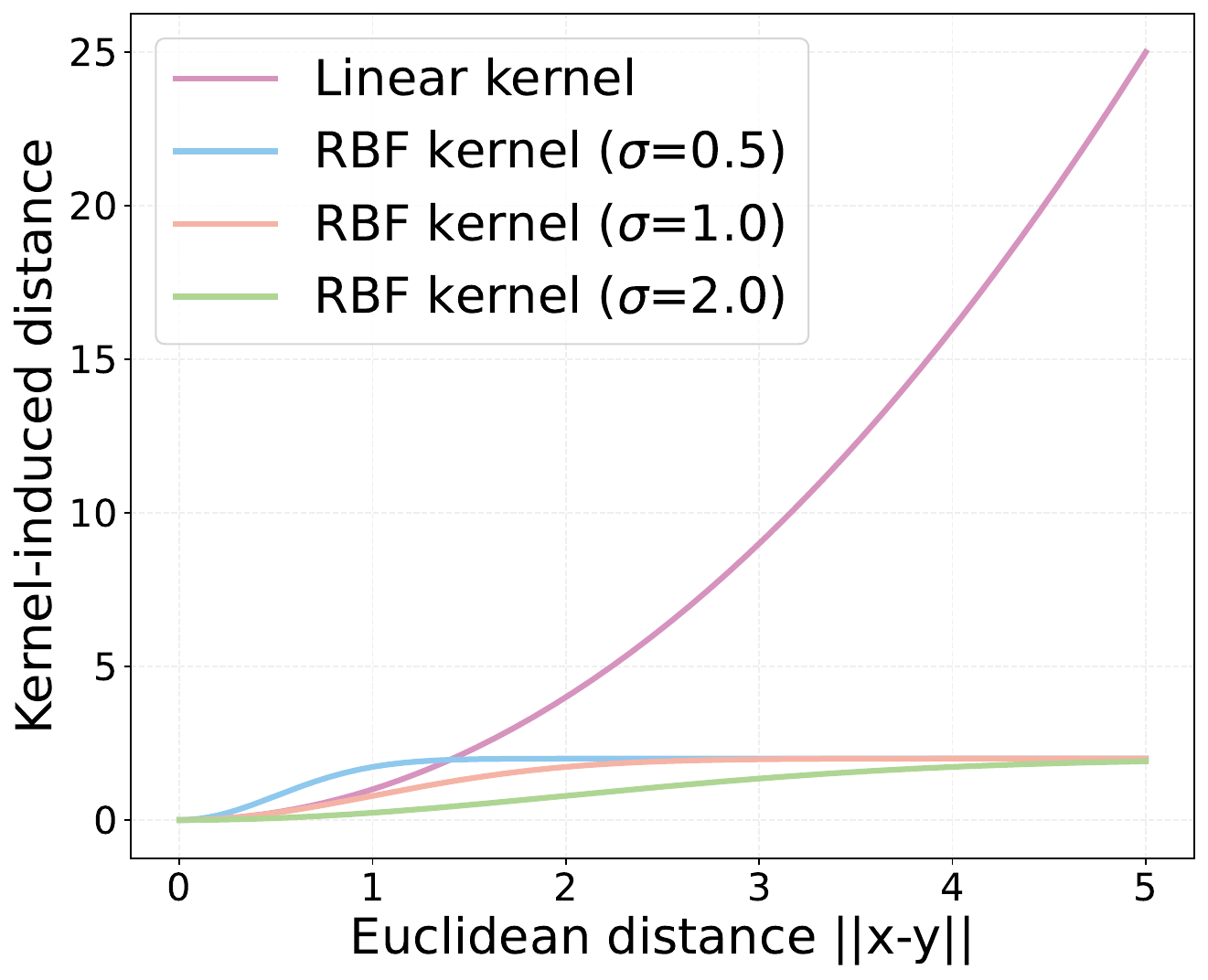}
            \caption{Curves of different Mercer kernel-induced distances.}
            \label{fig:kernelplot}
        \end{minipage}%
    }
    \hfill
    \adjustbox{valign=t}{%
        \begin{minipage}{0.64\textwidth}
            \centering
            \captionof{table}{Top-1 Accuracy with different Mercer kernels. The results are evaluated on \textbf{ResNet-18} with \textbf{hard-label protocol}.}
            \begin{tabular}{lcccc}
                \toprule
                \multirow{2}{*}{Kernel} & \multicolumn{2}{c}{ImageNette} & \multicolumn{2}{c}{ImageWoof}  \\
                \cmidrule(r){2-3} \cmidrule(r){4-5} 
                & IPC10 & IPC50 & IPC10 & IPC50 \\
                \midrule
                Linear & $\textbf{66.4} _{\pm 0.5}$ & $\underline{82.8} _{\pm 1.1}$ & $43.9 _{\pm 0.9}$ & $\underline{63.2} _{\pm 0.7}$  \\
                RBF($\sigma=0.5$) & $65.7 _{\pm 0.2}$ & $82.5 _{\pm 0.3}$ & $ \textbf{44.6}_{\pm 0.9}$ & $\textbf{63.8} _{\pm 1.6}$  \\
                RBF($\sigma=1$) & $64.1 _{\pm 1.7}$ & $\textbf{83.1} _{\pm 0.8}$ & $ \underline{44.1}_{\pm 1.3}$ & $60.9 _{\pm 0.5}$  \\
                RBF($\sigma=2$) & $\underline{66.0} _{\pm 1.3}$ & $82.2 _{\pm 1.0}$ & $ 43.7_{\pm 1.1}$ & $62.6 _{\pm 2.2}$  \\
                \bottomrule
            \end{tabular}
            \label{tab:ker_sel}
        \end{minipage}%
    }
\end{figure}

As drawn in \cref{fig:kernelplot}, the RBF-induced distance grows quickly for small differences and saturates for large differences, effectively compressing large deviations while being sensitive to local differences.
\Cref{tab:ker_sel} reveals that the distillation performance of RBF kernel is comparable to that of linear kernel.
To avoid introducing additional hyperparameters, we recommend using the linear kernel due to its simplicity and tractability.

\subsubsection{comparison with recent methods}
\label{apd:recent_comparison}
% Table2 and Table3 list the subset distillation results. R1 surpasses DAP in some cases because R1 reports the best results regardless of the evaluation protocol. According to the R1 paper, the authors evaluate both soft-label and hard-label protocols and report the higher accuracy. Thus, the results of R1 may come from the soft-label evaluation, which generally performs better than hard-label protocol (i.e., R2, R3, and DAP report hard-label protocol results, while the protocol for R1 is unclear).
% Based on comparisons with these recent papers, DAP also performance better.

\Cref{tab:recent_comparison} presents the distillation results for ImageNet subsets on DiT.
CaO$_2$ surpasses DAP in several cases, but this advantage arises from differences in evaluation protocols.
The CaO$_2$ paper reports the best accuracy across soft-label and hard-label protocols, selecting the protocol that yields a higher performance.
In contrast, D$^3$HR, VLCP, and DAP consistently adopt the hard-label protocol in this experiment. 
Since the soft-label protocol typically leads to higher accuracy, the results of CaO$_2$ likely benefit from this more permissive approach. Under a consistent hard-label evaluation setting, DAP remains competitive and performs better in most cases compared to these recent methods.

\begin{table}[!ht]
\caption{Top-1 Accuracy on \textbf{ImageNette} and \textbf{ImageWoof}. $\dagger$: The results are evaluated with \textbf{hard-label protocol} except for CaO$_2$.}
\resizebox{\textwidth}{!}{%
\begin{tabular}{ccccccccccc}
\toprule
Dataset & Model & IPC & Random & DM & DiT & CaO$_2^\dagger$ & D$^3$HR & VLCP & DAP & Full \\
\midrule
\multirow{9}{*}{Nette} & \multirow{3}{*}{ConvNet-6} & 10 & $46.0 _{\pm 0.5}$ & $49.8_{\pm1.1}$ & $56.2_{\pm1.3}$ & - & - & - & $\textbf{64.8} _{\pm 0.8}$ & \multirow{3}{*}{$94.3_{\pm 0.5}$} \\
 &  & 50 & $71.8_{\pm 1.2}$ & $70.3_{\pm 0.8}$ & $74.1_{\pm 0.6}$ & - & - & - & $\textbf{82.2} _{\pm 1.6}$ &  \\
 &  & 100 & $79.9_{\pm 0.8}$ & $78.5_{\pm 0.8}$ & $78.2_{\pm 0.3}$ & - & - & - & $\textbf{85.7} _{\pm 1.3}$ &  \\
\cmidrule{2-11}
 & \multirow{3}{*}{ResNetAP-10} & 10 & $54.2 _{\pm 1.2}$ & $60.2 _{\pm 0.7}$ & $62.8 _{\pm 0.8}$ & - & - & $64.8 _{\pm 3.6}$ & $\textbf{67.8} _{\pm 1.2}$ & \multirow{3}{*}{$94.6 _{\pm 0.5}$} \\
 &  & 50 & $77.3 _{\pm 1.0}$ & $76.7 _{\pm 1.0}$ & $76.9 _{\pm 0.5}$ & - & - & $81.2_{\pm 0.8}$ & $\textbf{82.3} _{\pm 0.7}$ &  \\
 &  & 100 & $81.1 _{\pm 0.6}$ & $80.9 _{\pm 0.7}$ & $80.1 _{\pm 1.1}$ & - & - & - & $\textbf{86.0} _{\pm 2.1}$ &  \\
\cmidrule{2-11}
 & \multirow{3}{*}{ResNet-18} & 10 & $55.8 _{\pm 1.0}$ & $60.9 _{\pm 0.7}$ & $62.5 _{\pm 0.9}$ & $65.0 _{\pm 0.7}$ & - & - & $\textbf{66.4} _{\pm 0.5}$ & \multirow{3}{*}{$95.3 _{\pm 0.6}$} \\
 &  & 50 & $75.8 _{\pm 1.1}$ & $75.0 _{\pm 1.0}$ & $75.2 _{\pm 0.9}$ & ${84.5} _{\pm 0.6}$ & - & - & $\textbf{82.8} _{\pm 1.1}$ &  \\
 &  & 100 & $82.0 _{\pm 0.4}$ & $81.5 _{\pm 0.4}$ & $77.8 _{\pm 0.6}$ & - & - & - & $\textbf{85.5} _{\pm 1.5}$ &  \\
\midrule
\multirow{9}{*}{Woof} & \multirow{3}{*}{ConvNet-6} & 10 & $25.2 _{\pm 1.1}$ & $27.6 _{\pm 1.2}$ & $32.3 _{\pm 0.8}$ & - & - & $34.8 _{\pm 2.4}$ & $\textbf{37.6} _{\pm 0.9}$ & \multirow{3}{*}{$85.9 _{\pm 0.4}$} \\
 &  & 50 & $41.9 _{\pm 1.4}$ & $43.8 _{\pm 1.1}$ & $48.5 _{\pm 1.3}$ & - & - & ${54.5} _{\pm 0.6}$ & $\textbf{55.8} _{\pm 0.4}$ &  \\
 &  & 100 & $52.3 _{\pm 1.5}$ & $50.1 _{\pm 0.9}$ & $54.2 _{\pm 1.5}$ & - & - & $\textbf{62.7} _{\pm 1.4}$ & ${62.4} _{\pm 1.2}$ &  \\
\cmidrule{2-11}
 & \multirow{3}{*}{ResNetAP-10} & 10 & $31.6 _{\pm 0.8}$ & $29.8 _{\pm 1.0}$ & $39.0 _{\pm 0.9}$ & - & ${40.7} _{\pm 1.0}$ & $39.5 _{\pm 1.5}$ & $\textbf{41.8} _{\pm 0.7}$ & \multirow{3}{*}{$87.2 _{\pm 0.6}$} \\
 &  & 50 & $50.1 _{\pm 1.6}$ & $47.8 _{\pm 1.2}$ & $55.8 _{\pm 1.1}$ & - & ${59.3} _{\pm 0.4}$ & $57.3 _{\pm 0.5}$ & $\textbf{63.3} _{\pm 0.5}$ &  \\
 &  & 100 & $59.2 _{\pm 0.9}$ & $59.8 _{\pm 1.3}$ & $62.5 _{\pm 0.9}$ & - & ${64.7} _{\pm 0.3}$ & $65.7 _{\pm 0.5}$ & $\textbf{70.8} _{\pm 1.4}$ &  \\
\cmidrule{2-11}
 & \multirow{3}{*}{ResNet-18} & 10 & $30.9 _{\pm 1.3}$ & $30.2 _{\pm 0.6}$ & $40.6 _{\pm 0.6}$ & $45.6 _{\pm 1.4}$ & ${39.6} _{\pm 1.0}$ & $39.9_{\pm 2.6}$ & $\textbf{43.9} _{\pm 0.9}$ & \multirow{3}{*}{$89.0 _{\pm 0.6}$} \\
 &  & 50 & $54.0 _{\pm 0.8}$ & $53.9 _{\pm 0.7}$ & $57.4 _{\pm 0.7}$ & $68.9 _{\pm 1.1}$ & ${57.6} _{\pm 0.4}$ & $58.9 _{\pm 1.5}$ & $\textbf{63.2} _{\pm 0.7}$ &  \\
 &  & 100 & $63.6 _{\pm 0.5}$ & $64.9 _{\pm 0.7}$ & $62.3 _{\pm 0.5}$ & - & ${66.8} _{\pm 0.6}$ & $68.3 _{\pm 0.4}$ & $\textbf{71.6} _{\pm 1.3}$ &  \\
\bottomrule
\end{tabular}%
}
\label{tab:recent_comparison}
\end{table}

\subsubsection{Cross-dataset generalization}
\label{apd:casd}
\begin{wraptable}{r}{0.55\textwidth}
\vspace{-1em}
\caption{Top-1 Accuracy on \textbf{Tiny ImageNet} (IPC50). The results are evaluated with \textbf{soft-label protocol}.}
\label{tab:tiny}
\begin{tabular}{cccc}
\toprule
Method & ResNet-18 & ResNet-50 & ResNet-101 \\
\midrule
Full & 61.9 & 62.0 & 62.3 \\
SRe$^2$L & 44.0 & 47.7 & 49.1 \\
D$^4$M & 46.2 & 51.8 & 51.0 \\
D$^4$M-G & \underline{46.8} & \underline{51.9} & \underline{53.2} \\
DAP-G & $\textbf{50.3} _{\pm 1.8}$ & $\textbf{53.6} _{\pm 1.0}$ & $\textbf{54.7} _{\pm 1.6}$ \\
\bottomrule
\end{tabular}%
\vspace{-1em}
\end{wraptable}
We posit that the cross-datasets evaluation is essential to measure the generalization and versatility of a DD method, which is overlooked by most DD methods.
According to \citet{su2024d4m}, we extract 200 categories, which are predefined in \citet{Le2015TinyIV}, from the ImageNet-1K dataset distilled by DAP as the distilled Tiny-ImageNet dataset.
\Cref{tab:tiny} shows that the extracted subsets (end with ``-G'') maintain strong validation performance on the target set, thereby confirming that our distilled data not only preserves the utility of the original dataset but also supports effective reuse across datasets. 
The results highlight the advantage of DAP: the distilled dataset is not tied to a single dataset domain but can be flexibly transferred and reused.

\subsubsection{Early Stop strategy}
\label{apd:tt}
In the guided sampling process, we employ the early stop guidance mechanism, which enhances sampling quality by only guiding earlier diffusion timesteps than the entire timesteps, thereby providing a better trade-off between sample diversity and fidelity~\citep{chen2025influence,chan2025mgd}.
Besides, applying representativeness prior guidance in the early stage of the sampling trajectory also reduces the sampling cost (refer to \cref{apd:sts}).
We summarize the sampling process with an early stop strategy in \cref{alg:dap_sampling_tt}.
To evaluate its effectiveness, we conducted experiments with different stopping parameters $t_{stop}$. 
The mechanism deactivates guidance for timesteps $t < t_{stop}$ in the reverse process, $t_{stop}=0$ means complete guidance, while $t_{stop}=50$ represents no guidance.
The qualitative and quantitative results, as illustrated in \cref{fig:tt}, indicate that $t_{stop}=25$ yields the best performance. 
\begin{algorithm}[!ht]
    \caption{DAP Sampling with Early Stop(VP-SDE)}
        \begin{algorithmic}[1]
            \REQUIRE Noisy data samples $\boldsymbol{x}^{train|c}_t$ within class $c$, pre-trained diffusion model $\boldsymbol{\epsilon}_\theta$, a layer output $\textcolor{colorC}{\phi}$ from diffusion backbone network, a Mercer Kernel induced distance measurement $d$, energy-based guidance scale $\gamma$, pre-defined noise scales $\beta_t$, early stop parameter $t_{stop}$. \\
            \STATE $\boldsymbol{x}_T \sim \mathcal{N}(0, I)$ \\
            \FOR{$t=T,\cdots1$}
                \STATE $\boldsymbol{\epsilon} \sim \mathcal{N}(0,I)$ \textbf{if} $t>1$, \textbf{else} $\boldsymbol{\epsilon}=\mathbf{0}$ \\
                \STATE $\tilde{\boldsymbol{x}}_{t-1}=(2-\sqrt{1-\beta_t}) \boldsymbol{x}_t+\beta_t \boldsymbol{\epsilon}_\theta\left(\boldsymbol{x}_t, t\right)+\sqrt{\beta_t} \boldsymbol{\epsilon}$ \\
                \IF{$t \leq t_{stop}$} 
                \STATE $\boldsymbol{x}_{t-1}=\tilde{\boldsymbol{x}}_{t-1}$ \hfill \# Stop Guidance \\
                \ELSE
                \STATE $\boldsymbol{z}_t=\textcolor{colorC}{\phi}(\boldsymbol{x}_t)$, $\boldsymbol{z}^{train|c}_t=\textcolor{colorC}{\phi}(\boldsymbol{x}^{train|c}_t)$ \hfill \# Diffusion as representativeness priors\\
                \STATE $\boldsymbol{g}_t=-\nabla_{\boldsymbol{x}_t} d(\boldsymbol{z}_t, \boldsymbol{z}^{train|c}_t)$ \\
                \STATE $\boldsymbol{x}_{t-1}=\tilde{\boldsymbol{x}}_{t-1}+\gamma\boldsymbol{g}_t$ \hfill \# Guided sampling
                \ENDIF
            \ENDFOR
            \ENSURE $\boldsymbol{x}_0$ \hfill \# The distilled sample of class $c$.
        \end{algorithmic}
    \label{alg:dap_sampling_tt}
\end{algorithm}

\begin{figure}[!ht]
    \centering
    \begin{subfigure}[t]{0.33\textwidth}
        % \centering
        \begin{tabular}{@{}c@{}}
             \includegraphics[width=\linewidth]{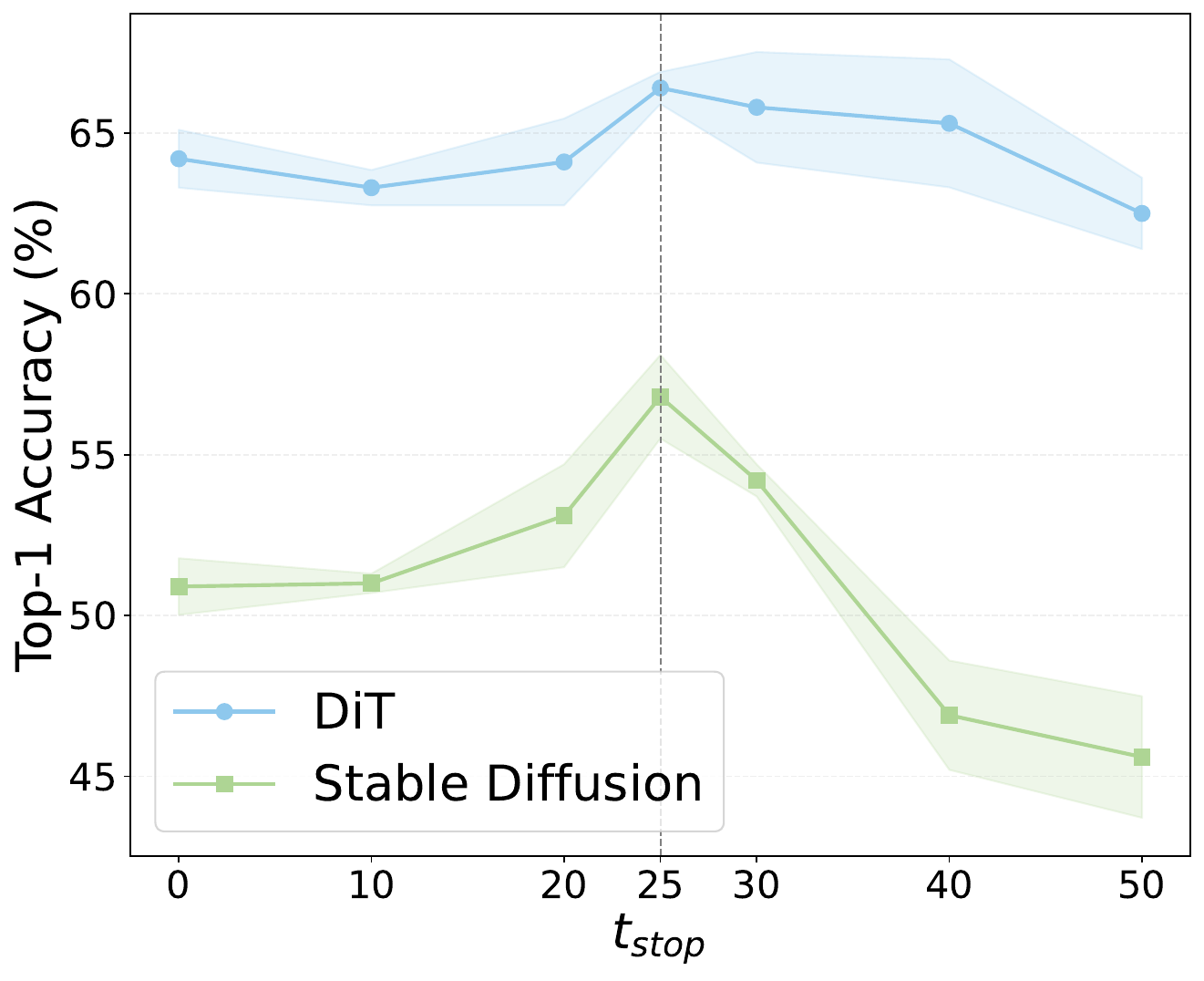}
        \end{tabular}
        \caption{Top-1 Accuracy on \textbf{ImageNette} under different $t_{stop}$. The results are evaluated on \textbf{ResNet-18} with \textbf{hard-label protocol}.}
        \label{fig:sub1}
    \end{subfigure}% 
    % \hspace{0.02\textwidth}
    \begin{subfigure}[t]{0.62\textwidth}
        \centering
        \setlength{\tabcolsep}{3pt}
        \renewcommand{\arraystretch}{0.9}
        \begin{tabular}{c@{\hspace{6pt}}c@{\hspace{6pt}}c}
            \begin{tabular}{cc}
            \includegraphics[width=0.14\textwidth]{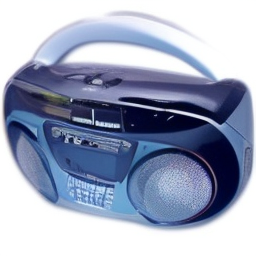} &
                \includegraphics[width=0.14\textwidth]{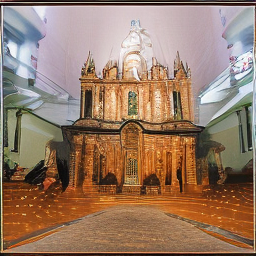} \\
                \includegraphics[width=0.14\textwidth]{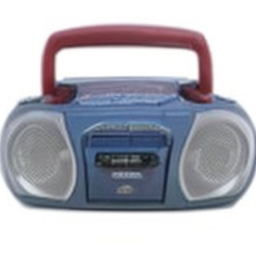} &
                \includegraphics[width=0.14\textwidth]{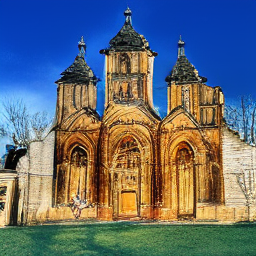} \\
                \includegraphics[width=0.14\textwidth]{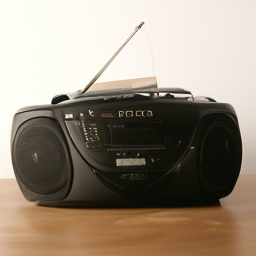} &
                \includegraphics[width=0.14\textwidth]{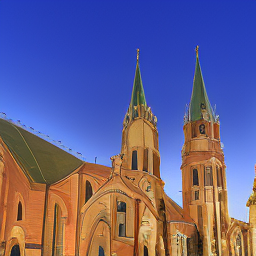} \\
            \end{tabular}
            &
            \begin{tabular}{cc}
                \includegraphics[width=0.14\textwidth]{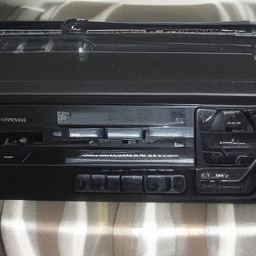} &
                \includegraphics[width=0.14\textwidth]{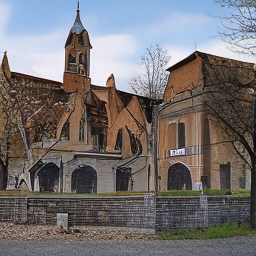} \\
                \includegraphics[width=0.14\textwidth]{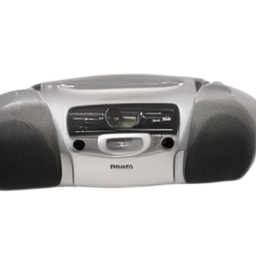} &
                \includegraphics[width=0.14\textwidth]{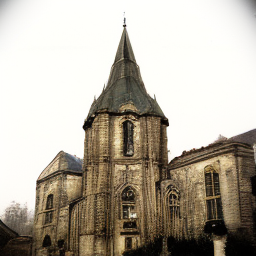} \\
                \includegraphics[width=0.14\textwidth]{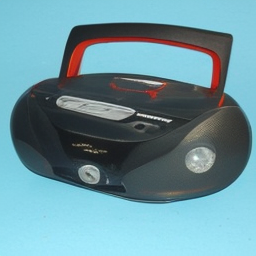} &
                \includegraphics[width=0.14\textwidth]{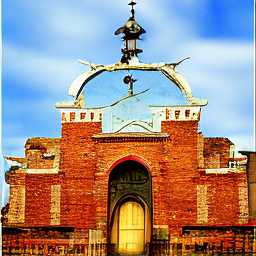} \\
            \end{tabular}
            &
            \begin{tabular}{cc}
                \includegraphics[width=0.14\textwidth]{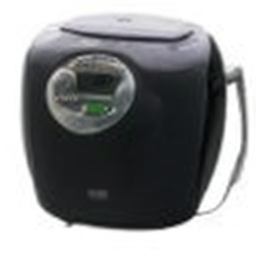} &
                \includegraphics[width=0.14\textwidth]{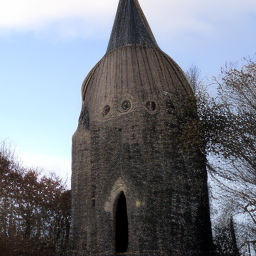} \\
                \includegraphics[width=0.14\textwidth]{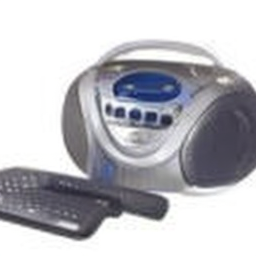} &
                \includegraphics[width=0.14\textwidth]{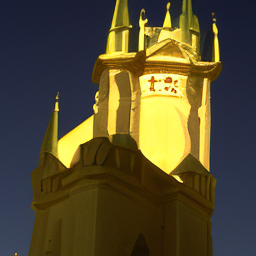} \\
                \includegraphics[width=0.14\textwidth]{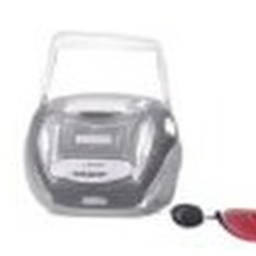} &
                \includegraphics[width=0.14\textwidth]{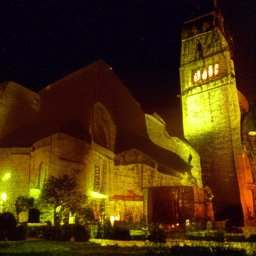} \\
            \end{tabular}
            \\
            $t_{stop}=0$ & $t_{stop}=25$ & $t_{stop}=50$ \\
        \end{tabular}
        \caption{Visualizations of the distilled samples with different $t_{stop}$.}
        \label{fig:sub2}
    \end{subfigure}
    \caption{Ablation study on $t_{stop}$ selection.}
    \label{fig:tt}
    % \vspace{-1em}
\end{figure}

% \subsubsection{Cross-dataset results}
% \label{apd:casd}
% \begin{wraptable}{r}{0.55\textwidth}
% \vspace{-1em}
% \caption{Top-1 Accuracy on \textbf{Tiny ImageNet} (IPC50). The results are evaluated with \textbf{soft-label protocol}.}
% \label{tab:tiny}
% \begin{tabular}{cccc}
% \toprule
% Method & ResNet-18 & ResNet-50 & ResNet-101 \\
% \midrule
% Full & 61.9 & 62.0 & 62.3 \\
% SRe$^2$L & 44.0 & 47.7 & 49.1 \\
% D$^4$M & 46.2 & 51.8 & 51.0 \\
% D$^4$M-G & 46.8 & 51.9 & 53.2 \\
% DAP-G & $\textbf{50.3} _{\pm 0.8}$ & $\textbf{53.6} _{\pm 1.0}$ & $\textbf{54.7} _{\pm 0.6}$ \\
% \bottomrule
% \end{tabular}%
% \vspace{-1em}
% \end{wraptable}
% We posit that the cross-datasets evaluation is essential to measure the generalization and versatility of a DD method and its datasets, which is overlooked by most DD methods.
% According to \citet{su2024d4m}, we extract 200 categories, which are predefined in \citet{Le2015TinyIV}, from the ImageNet-1K dataset distilled by DAP as the distilled Tiny-ImageNet dataset.
% \Cref{tab:tiny} shows that the extracted subsets (end with ``-G'') maintain strong validation performance on the target set, thereby confirming that our distilled data not only preserves the utility of the original dataset but also supports effective reuse across datasets. 
% The results highlight the advantage of DAP: the distilled dataset is not tied to a single dataset domain but can be flexibly transferred and reused.

\subsubsection{Sampling-time scaling}
\label{apd:sts}
DAP does not introduce additional training costs, since no external pre-training or fine-tuning is required. 
The representativeness prior is directly derived from the pre-trained diffusion backbone. 
However, to inject this prior during sampling and improve data quality, we must extract features from the noisy training data $\boldsymbol{x}^{train}_t$ using the backbone network $\phi$.
This step inevitably brings additional sampling time overhead, which must be acknowledged.

To quantify this overhead, we report the GPU memory and the sampling speed for different data sizes in \cref{tab:cost}. 
While sampling-time scaling introduces overhead, the cost remains manageable and predictable on single GPU card. 
\begin{table}[!ht]
\centering
\caption{The overhead of sampling-time scaling ($t_{stop}=25$). The Top-1 Accuracy is evaluated on \textbf{ImageNet-1K} with \textbf{hard-label protocol} (IPC10). The memory and speed are reported on $1\times$ A40.}
\label{tab:cost}
\resizebox{\textwidth}{!}{%
\begin{tabular}{ccccccc}
\toprule
\multirow{2}{*}{Data Size} & \multicolumn{3}{c}{Stable Diffusion} & \multicolumn{3}{c}{DiT} \\
\cmidrule(r){2-4} \cmidrule(r){5-7}
 & GPU Mem.(GB) & Speed(s/iter) & Acc(\%) & GPU Mem.(GB) & Speed(s/iter) & Acc(\%) \\
\midrule
% 0 & \multirow{4}{*}{Data Size} &  & - & \multirow{4}{*}{10.6} & 1.84 & $39.6 _{\pm 0.4}$ \\
500 & \multirow{3}{*}{23.1} & 35.9 & $32.1 _{\pm 0.5}$ & \multirow{3}{*}{10.6} & 15.3 & $44.6 _{\pm 2.4}$ \\
1000 &  & 39.9 & $39.9 _{\pm 1.3}$ &  & 24.0 & $48.8 _{\pm 1.7}$ \\
1500 &  & 47.1 & $40.7_{\pm 1.5}$ &  & 32.3 & $49.1_{\pm 1.2}$ \\
\bottomrule
\end{tabular}%
}
\end{table}

\subsection{Visualizations}
\label{apd:vis}
\subsubsection{Representativeness guidance scale}
\label{apd:rep_guide_vis}
As suggested by our ablation results (see \cref{fig:sd_gamma,fig:dit_gamma} in \cref{sec:ablation}), increasing $\gamma$ within a moderate scale effectively boosts representativeness prior, leading to improved downstream performance. 
However, excessive $\gamma$ introduces adverse effects. 
Over-amplifying the representativeness prior distorts the sampling trajectory, resulting in over-constrained generations that sacrifice diversity and generalization. 
Since the gradients of the other two priors are fixed, an imbalanced emphasis on representativeness suppresses their contribution, yielding biased and less informative images. 
This trade-off is clearly visualized in \cref{fig:large_gamma}.
\begin{figure*}[!ht]
    \centering % 整張大圖居中

    % -------------------- 第一個子圖 (a) --------------------
    \begin{subfigure}{\textwidth}
        \centering
        % --- DiT 圖片組 ---
        \begin{subfigure}{0.48\textwidth}
            \centering
            DiT % 模型標題（字體已縮小）
            \vspace{1mm} \\ % 標題和圖片間距
            \includegraphics[width=0.3\linewidth]{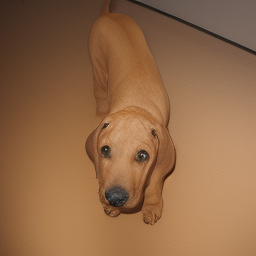} \hfill
            \includegraphics[width=0.3\linewidth]{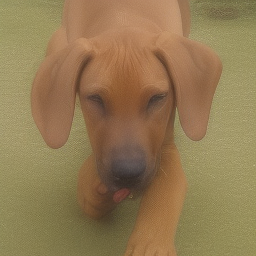} \hfill
            \includegraphics[width=0.3\linewidth]{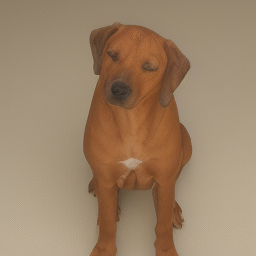}\\
        %\vspace{1mm} % 圖片與下方文字的間距，可自行調整
        {\small \textit{guidance scale $\gamma$}=20}
        \end{subfigure}
        \hfill % 讓兩個模型組左右散開
        % --- SD 圖片組 ---
        \begin{subfigure}{0.48\textwidth}
            \centering
            SD % 模型標題（字體已縮小）
            \vspace{1mm} \\ % 標題和圖片間距
            \includegraphics[width=0.3\linewidth]{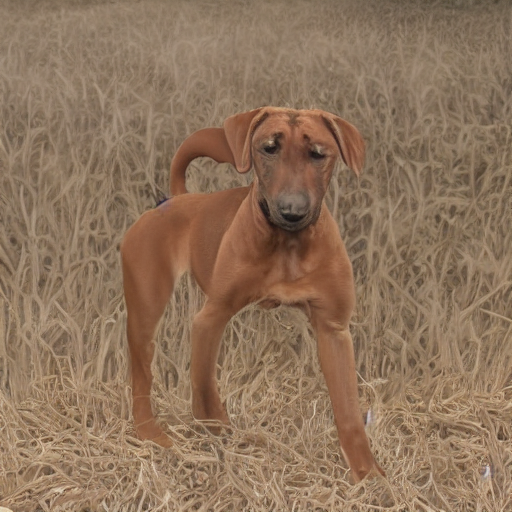} \hfill
            \includegraphics[width=0.3\linewidth]{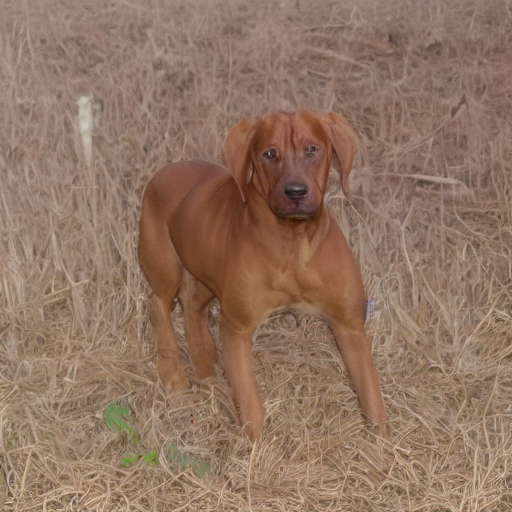} \hfill
            \includegraphics[width=0.3\linewidth]{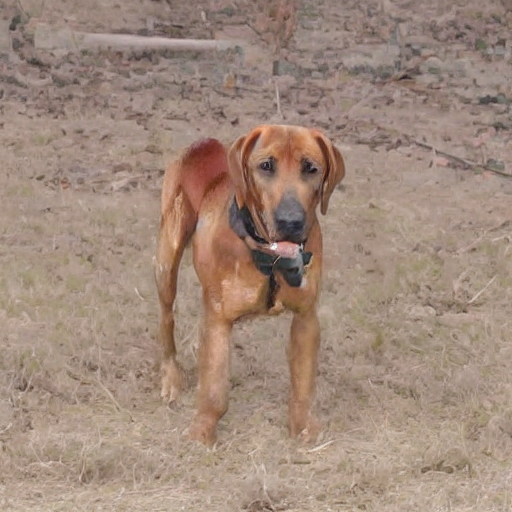}\\
        %\vspace{1mm} % 圖片與下方文字的間距，可自行調整
        {\small \textit{guidance scale $\gamma$}=1000}
        % \vspace{1mm}
        \end{subfigure}
        \label{subfig:rhodesian}
    \end{subfigure}
    \begin{subfigure}{\textwidth}
        \centering
        % --- DiT 圖片組 ---
        \begin{subfigure}{0.48\textwidth}
            \centering
           
            \includegraphics[width=0.3\linewidth]{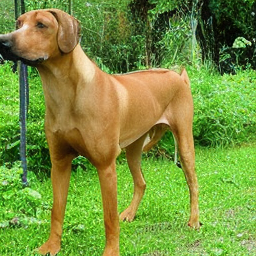} \hfill
            \includegraphics[width=0.3\linewidth]{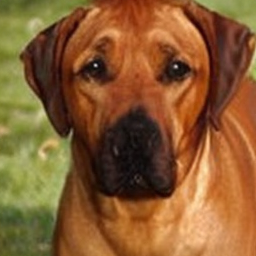} \hfill
            \includegraphics[width=0.3\linewidth]{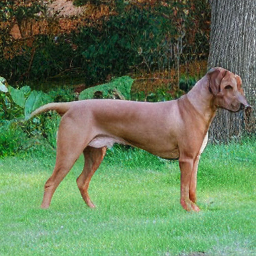}\\
        %\vspace{1mm} % 圖片與下方文字的間距，可自行調整
        {\small \textit{guidance scale $\gamma$}=1.5}
        \vspace{1mm}
        \end{subfigure}
        \hfill % 讓兩個模型組左右散開
        % --- SD 圖片組 ---
        \begin{subfigure}{0.48\textwidth}
            \centering
           
            \includegraphics[width=0.3\linewidth]{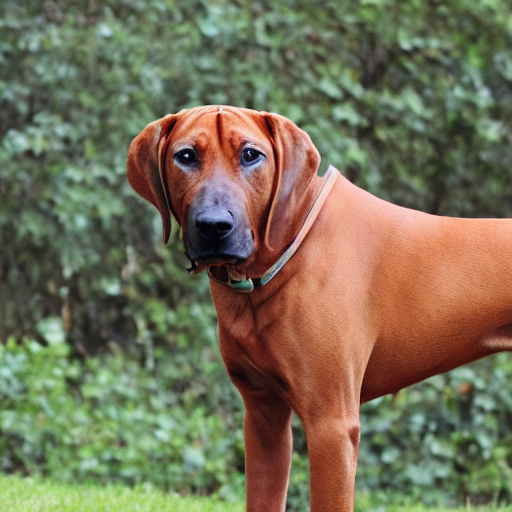} \hfill
            \includegraphics[width=0.3\linewidth]{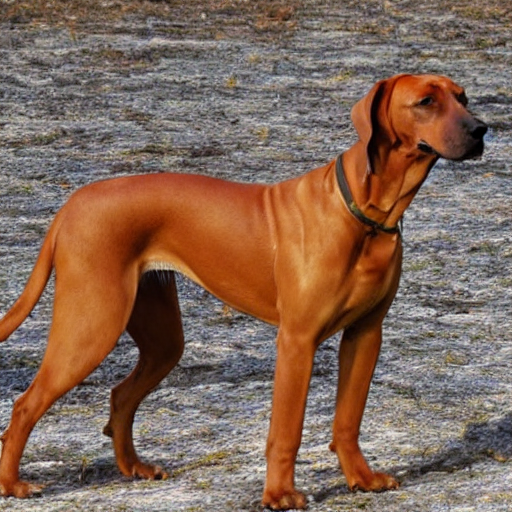} \hfill
            \includegraphics[width=0.3\linewidth]{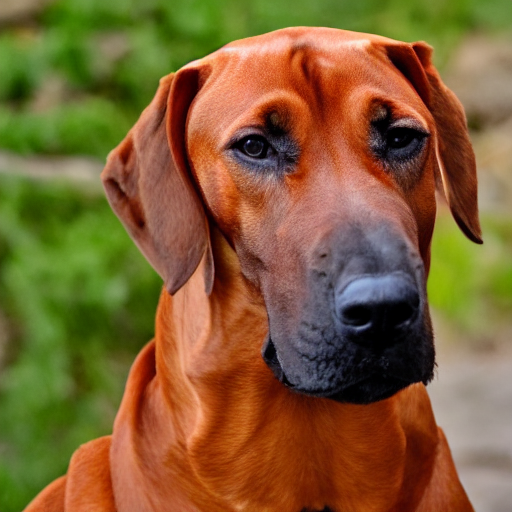}\\
        %\vspace{1mm} % 圖片與下方文字的間距，可自行調整
        {\small \textit{guidance scale $\gamma$}=1}
        \vspace{1mm}
        \end{subfigure}
        \caption{ImageWoof: Rhodesian ridgeback (n02087394)}
        \label{subfig:rhodesian2}
    \end{subfigure}
    \vspace{1mm} % 兩個子圖之間的垂直間距

    % -------------------- 第二個子圖 (b) --------------------
    \begin{subfigure}{\textwidth}
        \centering
        % --- DiT 圖片組 ---
        \begin{subfigure}{0.48\textwidth}
            \centering
            % DiT % 模型標題（字體已縮小）
            % \vspace{1mm} \\
            \includegraphics[width=0.3\linewidth]{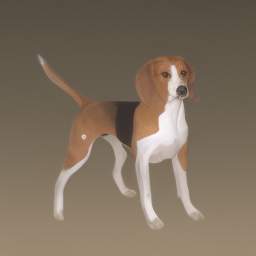} \hfill
            \includegraphics[width=0.3\linewidth]{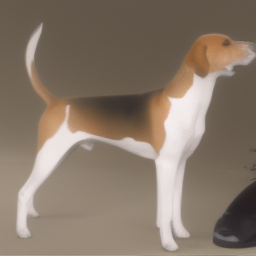} \hfill
            \includegraphics[width=0.3\linewidth]{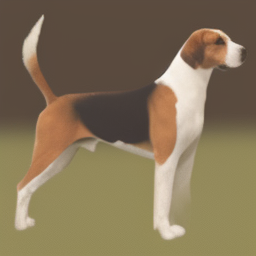}\\
        %\vspace{1mm} % 圖片與下方文字的間距，可自行調整
        {\small \textit{guidance scale $\gamma$}=20}
        \vspace{1mm}
        \end{subfigure}
        \hfill
        % --- SD 圖片組 ---
        \begin{subfigure}{0.48\textwidth}
            \centering
            % SD % 模型標題（字體已縮小）
            % \vspace{1mm} \\
            \includegraphics[width=0.3\linewidth]{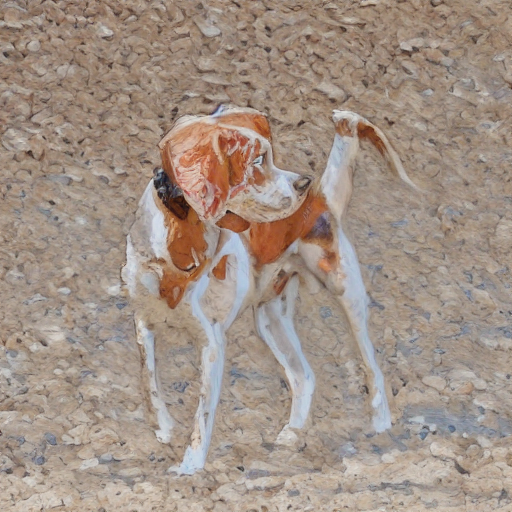} \hfill
            \includegraphics[width=0.3\linewidth]{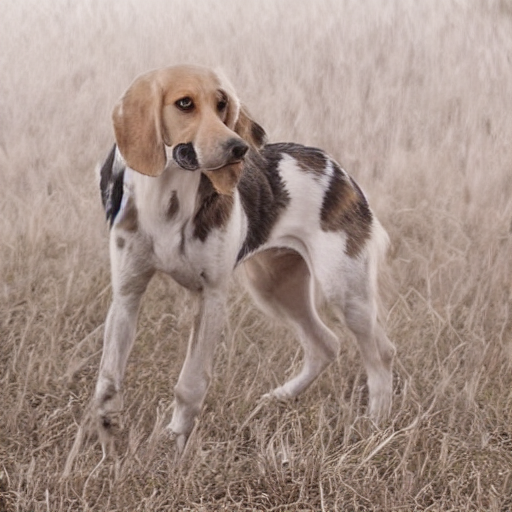} \hfill
            \includegraphics[width=0.3\linewidth]{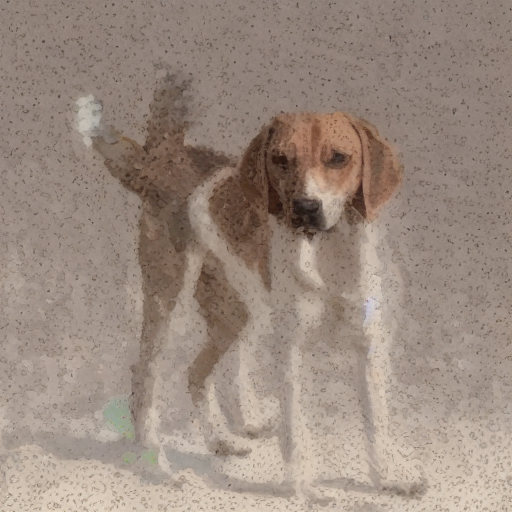}\\
        %\vspace{1mm} % 圖片與下方文字的間距，可自行調整
        {\small \textit{guidance scale $\gamma$}=1000}
        \vspace{1mm}
        \end{subfigure}
        \label{subfig:foxhound}
    \end{subfigure}
    \begin{subfigure}{\textwidth}
        \centering
        % --- DiT 圖片組 ---
        \begin{subfigure}{0.48\textwidth}
            \centering
            % DiT % 模型標題（字體已縮小）
            % \vspace{1mm} \\
            \includegraphics[width=0.3\linewidth]{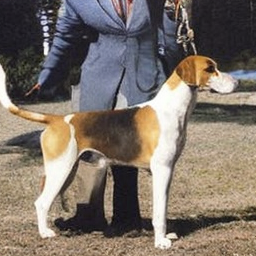} \hfill
            \includegraphics[width=0.3\linewidth]{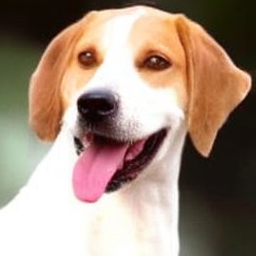} \hfill
            \includegraphics[width=0.3\linewidth]{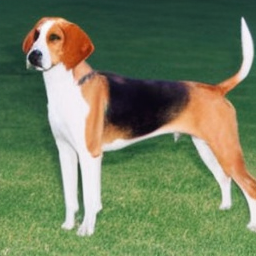}\\
        %\vspace{1mm} % 圖片與下方文字的間距，可自行調整
        {\small \textit{guidance scale $\gamma$}=1.5}
        \vspace{1mm}
        \end{subfigure}
        \hfill
        % --- SD 圖片組 ---
        \begin{subfigure}{0.48\textwidth}
            \centering
            % SD % 模型標題（字體已縮小）
            % \vspace{1mm} \\
            \includegraphics[width=0.3\linewidth]{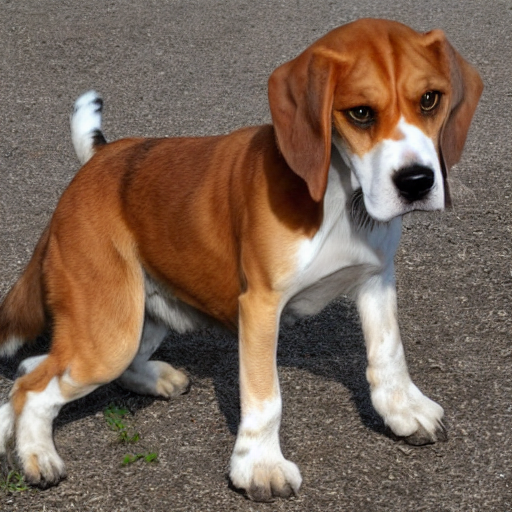} \hfill
            \includegraphics[width=0.3\linewidth]{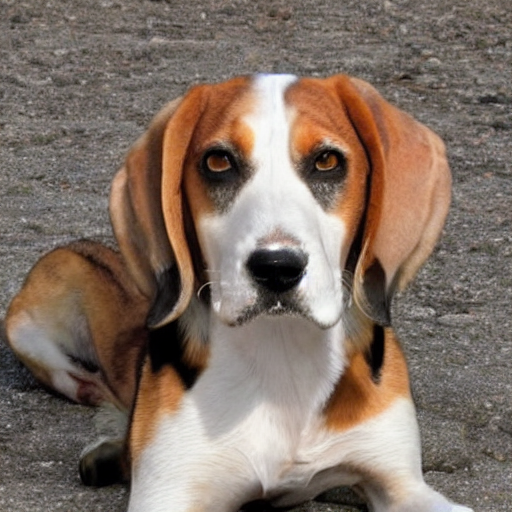} \hfill
            \includegraphics[width=0.3\linewidth]{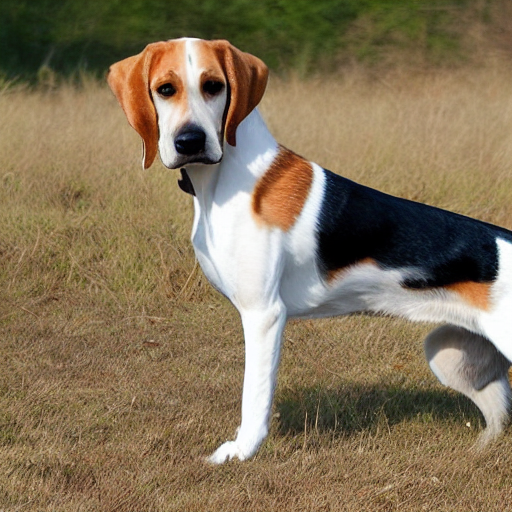}\\
        %\vspace{1mm} % 圖片與下方文字的間距，可自行調整
        {\small \textit{guidance scale $\gamma$}=1}
        \vspace{1mm}
        \end{subfigure}
        \caption{ImageWoof: English foxhound (n02089973)}
        \label{subfig:foxhound2}
    \end{subfigure}
    
    \caption{Samples distilled by DiT (left three columns) and SD (right three columns). The excessive representativeness guidance scale $\gamma$ will generate representativeness bias in the sampling trajectory, affecting the diversity and fidelity of the synthetic images.}
    \label{fig:large_gamma}
\end{figure*}
\subsubsection{Representativeness comparison}
\label{apd:visual_comparison}
To provide an intuitive comparison, we visualize the distilled datasets obtained from different methods, as shown in \cref{fig:full_comparison}. 
For each group, we compute the distance measure defined in \cref{eq:distance} and report its representativeness ($\propto\frac{1}{d(\textcolor{colorC}{\phi}(x),\textcolor{colorC}{\phi}(y))}$). 
The results demonstrate that while all methods can preserve semantic information thanks to the powerful DMs, the images distilled by DAP consistently achieve the highest representativeness. 
This highlights the advantage of DAP in generating distilled datasets that are not only semantically valid but also representative.

\begin{figure*}[!ht]
    \centering
    \setlength{\tabcolsep}{1pt}
    \renewcommand{\arraystretch}{1.2} 
    \begin{subfigure}{0.49\textwidth}
        \centering
        \begin{tabular}{c @{\hspace{3pt}} *{5}{c}}
            % Row 1: DiT
            \rotatebox{90}{\small DiT} &
            \includegraphics[width=0.18\linewidth]{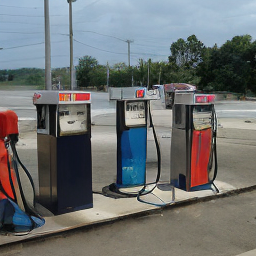} &
            \includegraphics[width=0.18\linewidth]{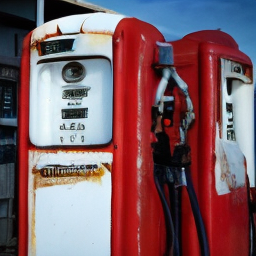} &
            \includegraphics[width=0.18\linewidth]{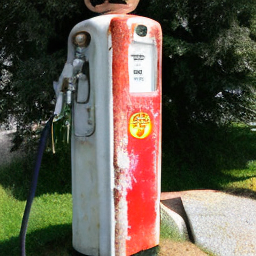} &
            \includegraphics[width=0.18\linewidth]{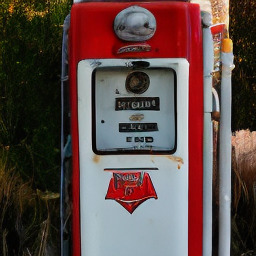} &
            \includegraphics[width=0.18\linewidth]{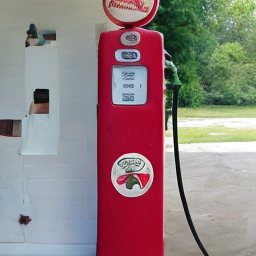} \\[-6pt]
            & \multicolumn{5}{c}{\small \textit{representativeness}=1.80} \\
            
            % Row 2: IGD
            \rotatebox{90}{\small IGD} &
            \includegraphics[width=0.18\linewidth]{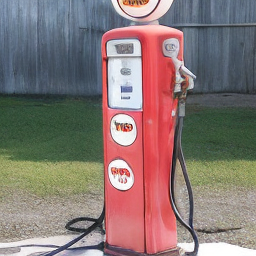} &
            \includegraphics[width=0.18\linewidth]{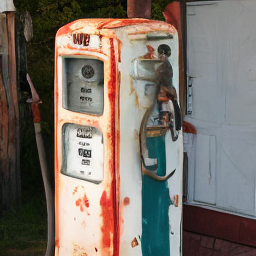} &
            \includegraphics[width=0.18\linewidth]{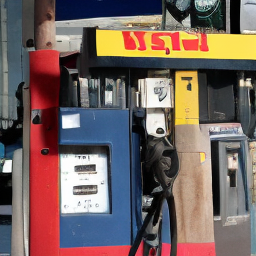} &
            \includegraphics[width=0.18\linewidth]{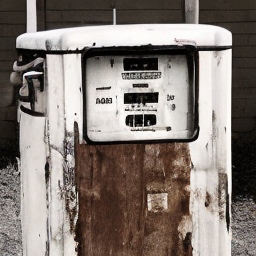} &
            \includegraphics[width=0.18\linewidth]{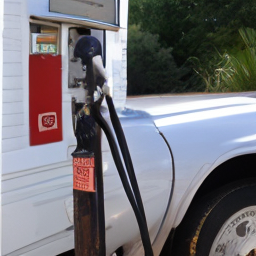} \\[-6pt]
            & \multicolumn{5}{c}{\small \textit{representativeness}=1.93} \\

            % Row 3: MGD
            \rotatebox{90}{\small MGD$^3$} &
            \includegraphics[width=0.18\linewidth]{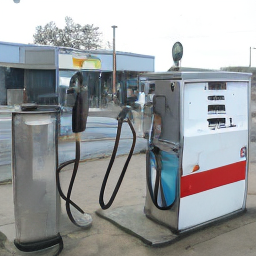} &
            \includegraphics[width=0.18\linewidth]{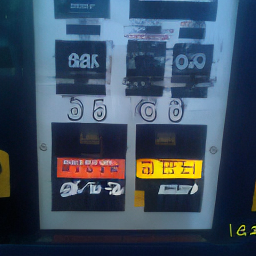} &
            \includegraphics[width=0.18\linewidth]{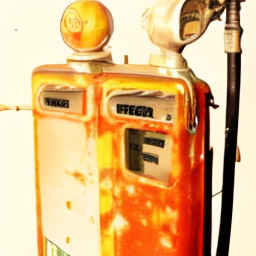} &
            \includegraphics[width=0.18\linewidth]{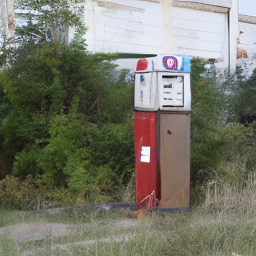} &
            \includegraphics[width=0.18\linewidth]{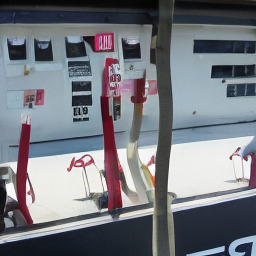} \\[-6pt]
            & \multicolumn{5}{c}{\small \textit{representativeness}=1.97} \\

            % Row 4: DAP
            \rotatebox{90}{\small DAP(Ours)} &
            \includegraphics[width=0.18\linewidth]{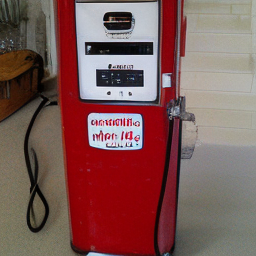} &
            \includegraphics[width=0.18\linewidth]{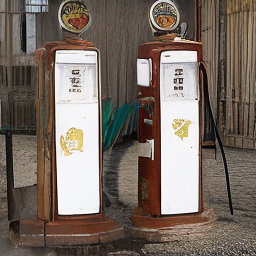} &
            \includegraphics[width=0.18\linewidth]{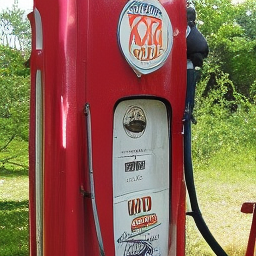} &
            \includegraphics[width=0.18\linewidth]{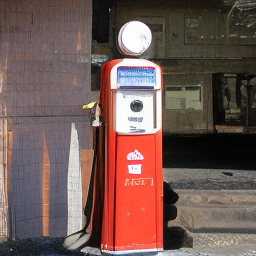} &
            \includegraphics[width=0.18\linewidth]{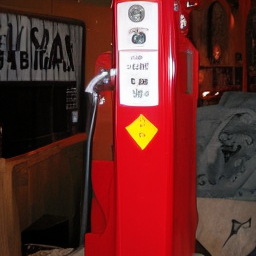} \\[-6pt]
            & \multicolumn{5}{c}{\small \textit{representativeness}=\textbf{2.12}} \\
        \end{tabular}
        \caption{ImageNette: Gas pump (n03425413)}
        \label{fig:nette_class1}
    \end{subfigure}
    \hfill % Adds horizontal space between the two top subfigures
    \begin{subfigure}{0.49\textwidth}
        \centering
        \begin{tabular}{c @{\hspace{3pt}} *{5}{c}}
            % Row 1: DiT
            \rotatebox{90}{\small DiT} &
            \includegraphics[width=0.18\linewidth]{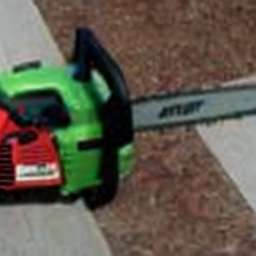} &
            \includegraphics[width=0.18\linewidth]{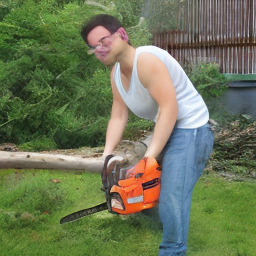} &
            \includegraphics[width=0.18\linewidth]{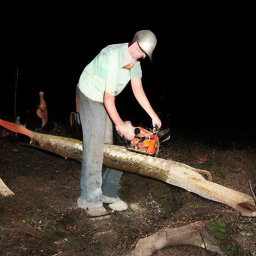} &
            \includegraphics[width=0.18\linewidth]{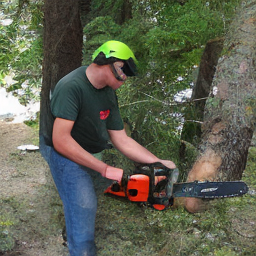} &
            \includegraphics[width=0.18\linewidth]{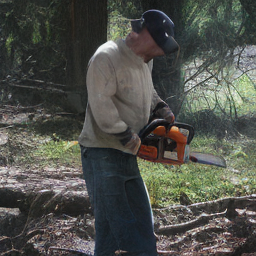} \\[-6pt]
            & \multicolumn{5}{c}{\small \textit{representativeness}=1.80} \\
            
            % Row 2: IGD
            \rotatebox{90}{\small IGD} &
            \includegraphics[width=0.18\linewidth]{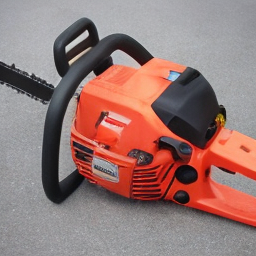} &
            \includegraphics[width=0.18\linewidth]{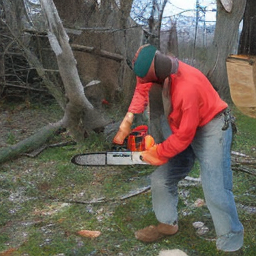} &
            \includegraphics[width=0.18\linewidth]{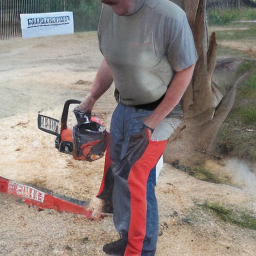} &
            \includegraphics[width=0.18\linewidth]{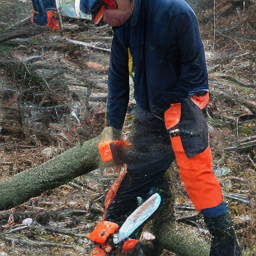} &
            \includegraphics[width=0.18\linewidth]{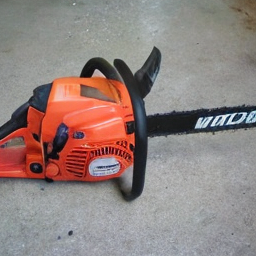} \\[-6pt]
            & \multicolumn{5}{c}{\small \textit{representativeness}=2.03} \\

            % Row 3: MGD
            \rotatebox{90}{\small MGD$^3$} &
            \includegraphics[width=0.18\linewidth]{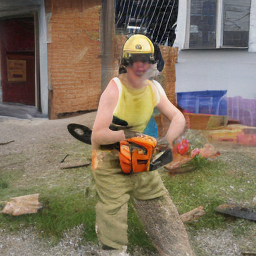} &
            \includegraphics[width=0.18\linewidth]{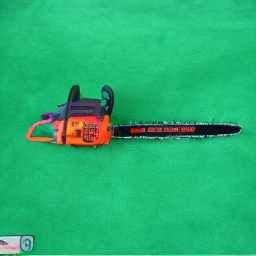} &
            \includegraphics[width=0.18\linewidth]{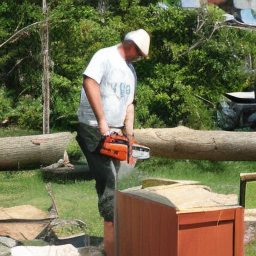} &
            \includegraphics[width=0.18\linewidth]{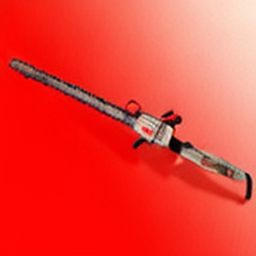} &
            \includegraphics[width=0.18\linewidth]{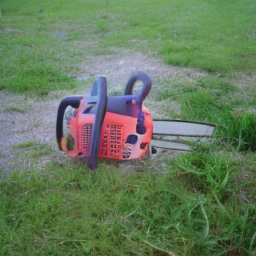} \\[-6pt]
            & \multicolumn{5}{c}{\small \textit{representativeness}=1.96} \\

            % Row 4: DAP
            \rotatebox{90}{\small DAP(Ours)} &
            \includegraphics[width=0.18\linewidth]{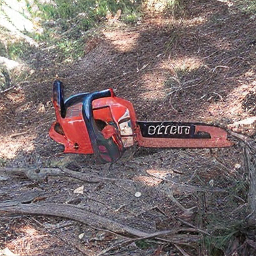} &
            \includegraphics[width=0.18\linewidth]{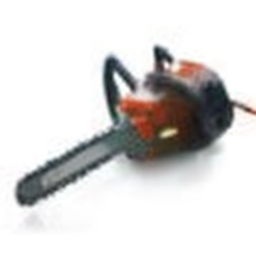} &
            \includegraphics[width=0.18\linewidth]{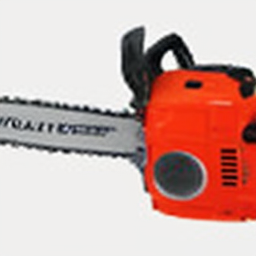} &
            \includegraphics[width=0.18\linewidth]{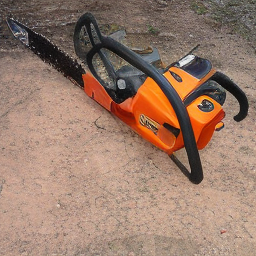} &
            \includegraphics[width=0.18\linewidth]{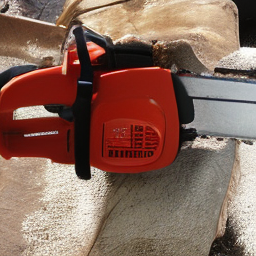} \\[-6pt]
            & \multicolumn{5}{c}{\small \textit{representativeness}=\textbf{2.18}} \\
        \end{tabular}
        \caption{ImageNette: Chain saw (n03000684)}
        \label{fig:nette_class2}
    \end{subfigure}

    \vspace{4mm} % Vertical space between the rows of subfigures

    % --- BOTTOM ROW: ImageWoof Classes ---
    \begin{subfigure}{0.49\textwidth}
        \centering
        \begin{tabular}{c @{\hspace{3pt}} *{5}{c}}
            % Row 1: DiT
            \rotatebox{90}{\small DiT} &
            \includegraphics[width=0.18\linewidth]{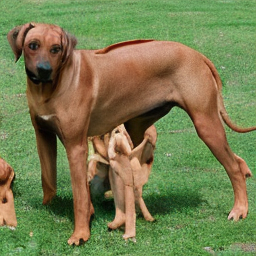} &
            \includegraphics[width=0.18\linewidth]{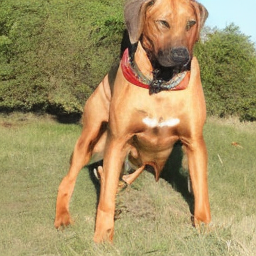} &
            \includegraphics[width=0.18\linewidth]{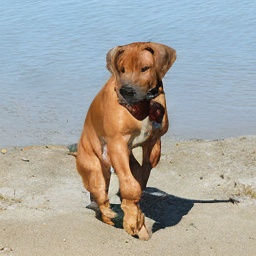} &
            \includegraphics[width=0.18\linewidth]{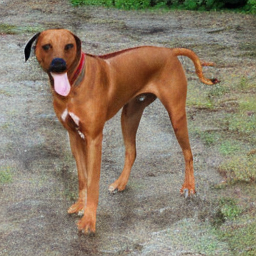} &
            \includegraphics[width=0.18\linewidth]{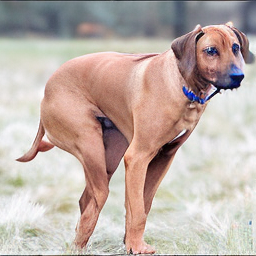} \\[-6pt]
            & \multicolumn{5}{c}{\small \textit{representativeness}=1.87} \\
            
            % Row 2: IGD
            \rotatebox{90}{\small IGD} &
            \includegraphics[width=0.18\linewidth]{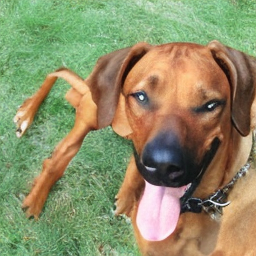} &
            \includegraphics[width=0.18\linewidth]{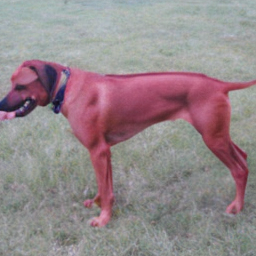} &
            \includegraphics[width=0.18\linewidth]{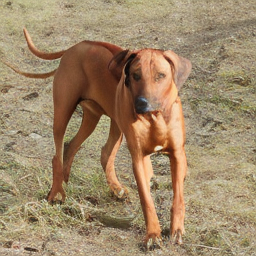} &
            \includegraphics[width=0.18\linewidth]{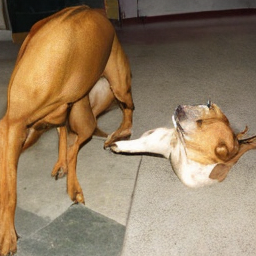} &
            \includegraphics[width=0.18\linewidth]{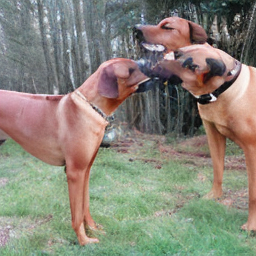} \\[-6pt]
            & \multicolumn{5}{c}{\small \textit{representativeness}=2.08} \\

            % Row 3: MGD
            \rotatebox{90}{\small MGD$^3$} &
            \includegraphics[width=0.18\linewidth]{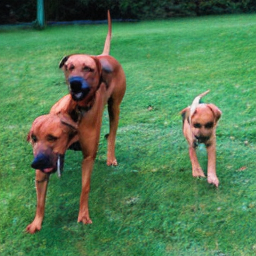} &
            \includegraphics[width=0.18\linewidth]{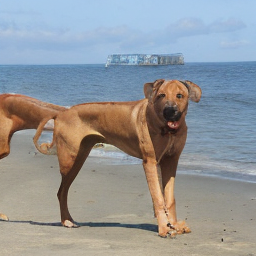} &
            \includegraphics[width=0.18\linewidth]{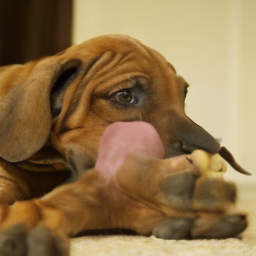} &
            \includegraphics[width=0.18\linewidth]{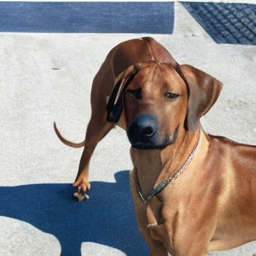} &
            \includegraphics[width=0.18\linewidth]{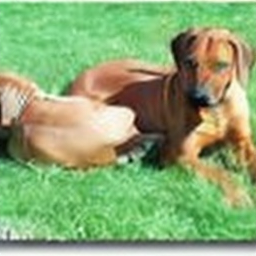} \\[-6pt]
            & \multicolumn{5}{c}{\small \textit{representativeness}=2.03} \\

            % Row 4: DAP
            \rotatebox{90}{\small DAP(Ours)} &
            \includegraphics[width=0.18\linewidth]{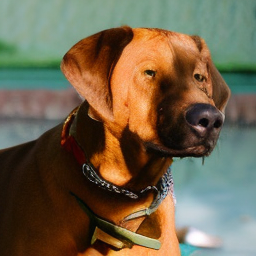} &
            \includegraphics[width=0.18\linewidth]{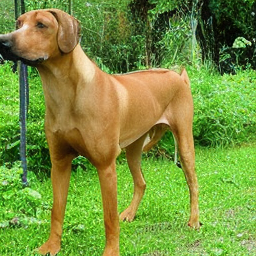} &
            \includegraphics[width=0.18\linewidth]{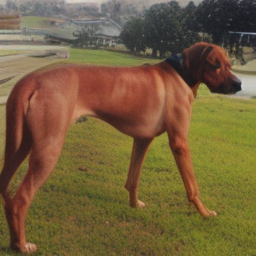} &
            \includegraphics[width=0.18\linewidth]{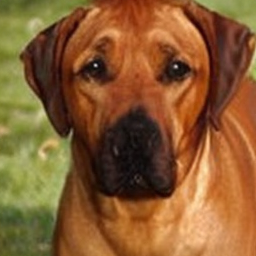} &
            \includegraphics[width=0.18\linewidth]{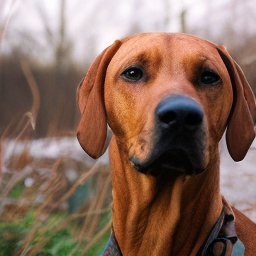} \\[-6pt]
            & \multicolumn{5}{c}{\small \textit{representativeness}=\textbf{2.16}} \\
        \end{tabular}
        \caption{ImageWoof: Rhodesian ridgeback (n02087394)}
        \label{fig:woof_class1}
    \end{subfigure}
    \hfill % Adds horizontal space between the two bottom subfigures
    \begin{subfigure}{0.49\textwidth}
        \centering
        \begin{tabular}{c @{\hspace{3pt}} *{5}{c}}
            % Row 1: DiT
            \rotatebox{90}{\small DiT} &
            \includegraphics[width=0.18\linewidth]{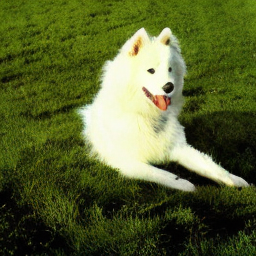} &
            \includegraphics[width=0.18\linewidth]{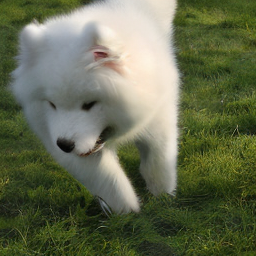} &
            \includegraphics[width=0.18\linewidth]{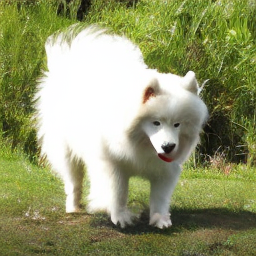} &
            \includegraphics[width=0.18\linewidth]{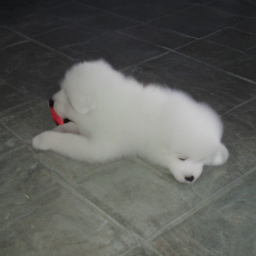} &
            \includegraphics[width=0.18\linewidth]{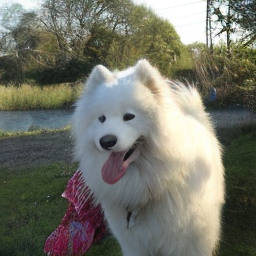} \\[-6pt]
            & \multicolumn{5}{c}{\small \textit{representativeness}=1.93} \\
            
            % Row 2: IGD
            \rotatebox{90}{\small IGD} &
            \includegraphics[width=0.18\linewidth]{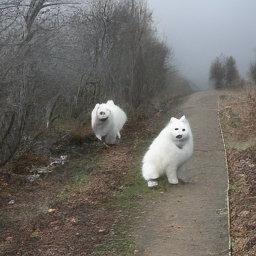} &
            \includegraphics[width=0.18\linewidth]{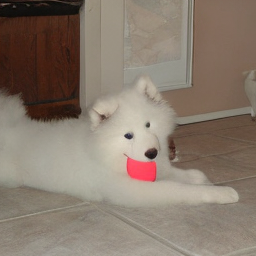} &
            \includegraphics[width=0.18\linewidth]{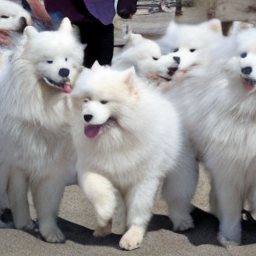} &
            \includegraphics[width=0.18\linewidth]{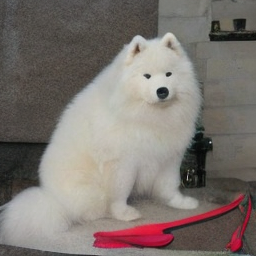} &
            \includegraphics[width=0.18\linewidth]{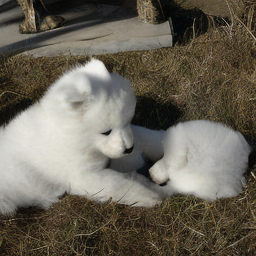} \\[-6pt]
            & \multicolumn{5}{c}{\small \textit{representativeness}=1.99} \\

            % Row 3: MGD
            \rotatebox{90}{\small MGD$^3$} &
            \includegraphics[width=0.18\linewidth]{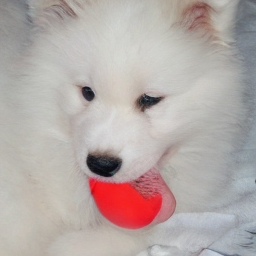} &
            \includegraphics[width=0.18\linewidth]{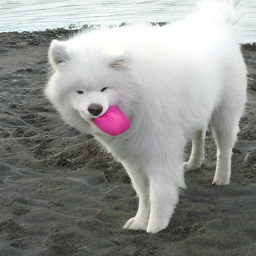} &
            \includegraphics[width=0.18\linewidth]{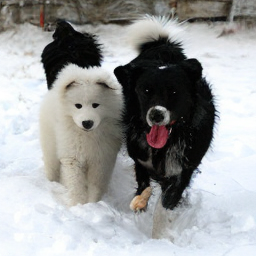} &
            \includegraphics[width=0.18\linewidth]{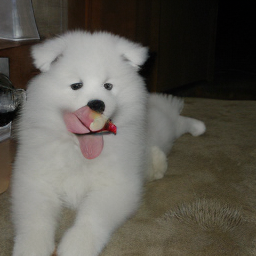} &
            \includegraphics[width=0.18\linewidth]{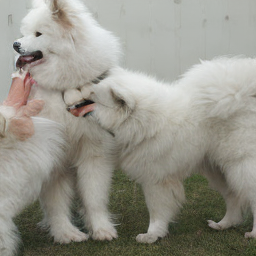} \\[-6pt]
            & \multicolumn{5}{c}{\small \textit{representativeness}=2.14} \\

            % Row 4: DAP
            \rotatebox{90}{\small DAP(Ours)} &
            \includegraphics[width=0.18\linewidth]{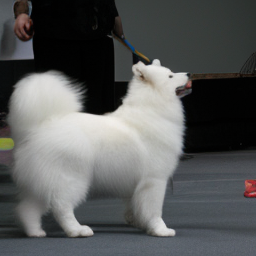} &
            \includegraphics[width=0.18\linewidth]{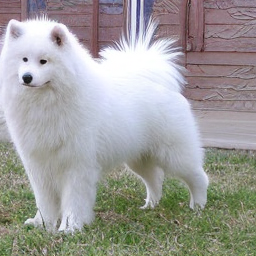} &
            \includegraphics[width=0.18\linewidth]{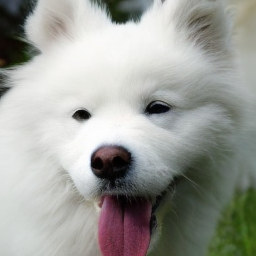} &
            \includegraphics[width=0.18\linewidth]{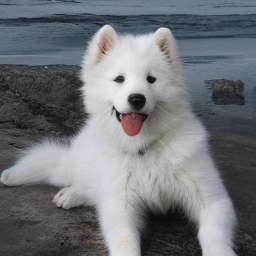} &
            \includegraphics[width=0.18\linewidth]{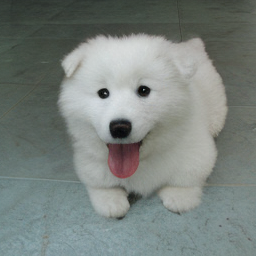} \\[-6pt]
            & \multicolumn{5}{c}{\small \textit{representativeness}=\textbf{2.17}} \\
        \end{tabular}
        \caption{ImageWoof: Samoyed (n02111889)}
        \label{fig:woof_class2}
    \end{subfigure}
    \caption{Visualization results of different DD methods. At the bottom of each group, we use the pre-trained DiT to calculate the average representativeness values ($\times 10^{-2} $).}
    \label{fig:full_comparison}
\end{figure*}

\subsection{Disclosure of the use of Large Language Models}
\label{apd:llm}
Given that the use of large language models (LLMs) is allowed as a general-purpose assist tool, this work utilizes LLMs to polish the sentences of the article.
There is \textbf{no} significant role in research ideation and writing to the extent that they cannot be regarded as contributors.

\end{document}